\title{Data Smashing}
\author{\begin{tabular}{cc}
Ishanu Chattopadhyay & Hod Lipson \\ \texttt{ic99@cornell.edu} & \texttt{hod.lipson@cornell.edu}
\end{tabular}}
\begin{document} 
\figstrue

\maketitle
\begin{abstract} Investigation  of  the underlying physics or biology from empirical data requires a  quantifiable notion of similarity - when do two observed data sets indicate nearly  identical generating processes, and when they do not. The discriminating characteristics  to look for in data is often determined by heuristics designed by  experts, $e.g.$,  distinct shapes of ``folded'' lightcurves may be used as ``features'' to  classify variable stars, while  determination of pathological   brain states might require a Fourier analysis of  brainwave activity. Finding good features is non-trivial.  Here, we propose a  universal solution to this problem: we delineate a  principle for quantifying similarity between  sources of arbitrary data streams, without a priori knowledge, features or training. We uncover an algebraic structure on a space of symbolic models for quantized data, and show that such stochastic generators may be added and uniqely inverted; and that a model and its  inverse always sum to the   generator of flat white noise. Therefore,  every data stream has an anti-stream: data generated by the inverse model. Similarity between two streams, then,  is the degree to which one, when summed to the other's anti-stream, mutually  annihilates all statistical structure to  noise. We call this data smashing.
We present diverse applications, including disambiguation of brainwaves pertaining to epileptic seizures, detection of anomalous cardiac rhythms, and  classification of astronomical objects from raw photometry. In our examples, the data smashing principle, without access to any  domain knowledge,   meets or exceeds the performance of specialized algorithms tuned by domain experts. 

\end{abstract}
\begin{IEEEkeywords} feature-free classification, universal metric, probabilistic automata \end{IEEEkeywords}
\allowdisplaybreaks{

\section{Motivation \& Contribution}\label{sec-mot}
The term ``data smashing'' might conjure up images of erasing information or destroying hard drives. But just as smashing atoms can reveal their composition, ``colliding'' quantitative data streams can reveal their hidden structure. 

We describe here a new principle, where quantitative data streams have corresponding anti-streams, which inspite of being non-unique, are tied to  the stream's unique statistical structure. We then describe  ``data smashing'', a process by which streams and anti-streams can be algorithmically collided to reveal differences that are difficult to detect using conventional techniques. We establish this principle formally, describe how we implemented it in practice, and report its performance on a number of real-world cases. The results show that without access to any domain knowledge, data smashing meets or exceeds the accuracy achieved by specialized algorithms and heuristics devised by domain experts.  

Nearly all automated discovery systems today rely, at their core, on the ability to compare data: From automatic image recognition to discovering new astronomical objects, such systems must be able to compare and contrast data records in order to group them, classify them, or identify the odd-one-out. Despite rapid growth in the amount of data collected and the increasing rate at which it can be processed, analysis of quantitative data streams still relies heavily on knowing what to look for. 

Any time a data mining algorithm searches beyond simple correlations, a human expert must help define a notion of similarity - by specifying important distinguishing ``features'' of the data to compare, or by training learning algorithms using copious amounts of examples. The data smashing principle removes the reliance on expert-defined features or examples, and in many cases, does so faster and with better accuracy than traditional methods.

This paper is organized as follows:  Sections~\ref{sec-mot}-\ref{sec-appl} describe the key concepts, along with a brief but complete description of the approach. The mathematical details, including proffs of correctness, are presenetd in Sections~\ref{sec-pfsa}-\ref{sec-stropr}. Qunatization schemes are discussed in Section~\ref{sec-quant}. Comparisons with some standard notions of statistical dependencies is carried out in Section~\ref{sec-addl}, and the paper is concluded in Secion~\ref{sec-conc}.  
%
\section{Anti-streams}
The notion of data smashing applies only to data in the form of an ordered series of digits or symbols, such as acoustic waves from a microphone, light intensity over time from a telescope, traffic density along a road, or network activity from a router.  The anti-stream contains the ``opposite'' information from the original data stream, and is produced by algorithmically inverting the statistical distribution of symbol sequences appearing in the original stream. For example, sequences of digits that were common in the original stream will be rare in the anti-stream, and vice versa. Streams and anti-streams can then be algorithmically ``collided'' in a way that systematically cancels any common statistical structure in the original streams, leaving only information relating to their statistically significant differences. We call this the principle of \textit{Information Annihilation} (See Fig.~\ref{fig1}).

Data smashing involves two data streams and proceeds in three steps:   raw data streams are first quantized, by converting continuous value to a string of characters or symbols. The simplest example of such quantization is where all positive values are mapped to the symbol ``1'' and all negative values to ``0'', thus generating a string of bits. Next, we select one of the quantized input streams, and generate its anti-stream. Finally, we annihilate this anti-stream against the remaining quantized input stream and measure what information remains. The remaining information is estimated from the deviation of the resultant stream from flat white noise (FWN). 
\iffigs
\begin{figure}[t]
\centering

\vspace{-10pt}

\hspace{-14pt}\includegraphics[width=3in]{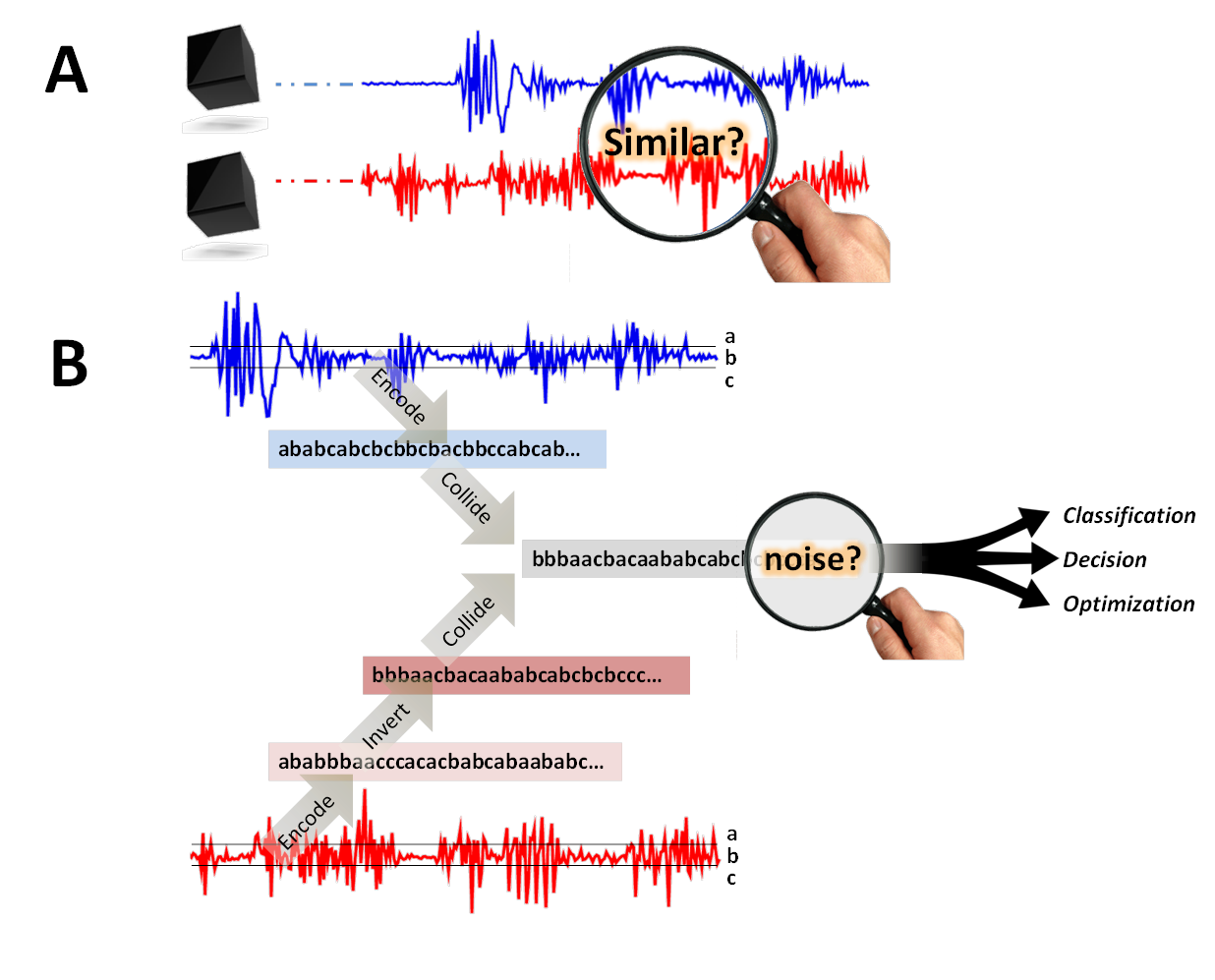}
\vspace{-20pt} 
\captionN{\textbf{Data smashing:} (A) determining the similarity between two data streams is key to any data mining process, but relies heavily on human-prescribed criteria. (B) Data smashing first encodes each data stream, then collides one with the inverse of the other. The randomness of the resulting stream  reflects the similarity of the original streams, leading to a cascade of downstream applications involving classification, decision and optimization.}\label{fig1}
\end{figure}
\fi
\iffigs
\begin{figure*}[t]

\centering
\input{Figures/sc_figtwo.tex}
\vspace{-4pt}

\captionN{\textbf{Calculation of causal similarity  using information annihilation.} (A) We quantize  raw signals to symbolic sequences over the chosen  alphabet, and compute a causal similarity  between such sequences. The underlying theory is established assuming the existence of generative probabilistic automata for these  sequences, but our algorithms  do not require explicit model construction, or  a priori knowledge of their structures. 
(B) Concept of stream inversion; while we can find the group inverse of a given PFSA algebraically, we can also transform a generated sequence directly to one that represents the inverse model, without constructing  the  model itself. 
(C) Summing PFSAs $G$ and its inverse $-G$ yields the zero PFSA $W$. We can  carry  out  this annihilation purely at the sequence level to get flat white noise. (D) Circuit that allows us to measure  similarity distance between streams $s_1,s_2$ via computation of  $\epsilon_{11}$, $\epsilon_{22}$ and $\epsilon_{12}$ (See Table 1). Given a threshold $\epsilon^\star>0$, if $\epsilon_{kk} < \epsilon^\star$, then we have sufficient data for stream $s_k$ ($k=1,2$). Additionally if $\epsilon_{12} \leqq \epsilon^\star$, then we conclude that $s_1,s_2$ have the same stochastic source with high probability (which converges exponentially fast to 1 with length of input).
}\label{fig2}
\end{figure*}
\fi
\iffigs
\begin{table*}[t]
\centering
\captionN{Algorithms For Stream Operations}\label{tab1}

{\bf  \sffamily  \small \itshape (Procedures below are used to assemble the  annihilation circuit
shown in Fig. 2D, which carries out data smashing)}
\vskip .4em
\definecolor{blcol}{RGB}{150,150,250}
\definecolor{murcol}{RGB}{255,150,150}
\definecolor{healcol}{RGB}{150,235,150}
\definecolor{colzeta}{RGB}{250,250,200}
\definecolor{leftpcol}{RGB}{200,255,200}
\definecolor{rightpcol}{RGB}{200,200,255}
\definecolor{boxcol}{RGB}{210,200,200}
\definecolor{linecol}{RGB}{200,180,180}
\definecolor{colplus}{RGB}{235,220,220}
\definecolor{colinv}{RGB}{200,220,255}
\definecolor{cof}{RGB}{219,144,71}
\definecolor{pur}{RGB}{200,200,200}
\definecolor{greeo}{RGB}{91,173,69}
\definecolor{greet}{RGB}{52,111,72}
 \definecolor{nodecol}{RGB}{180,180,220}
 \definecolor{nodeedge}{RGB}{240,248,255}
  \definecolor{nodecolb}{RGB}{220,180,180}
  \definecolor{nodecolc}{RGB}{180,220,180}
  \definecolor{nodecolcD}{RGB}{100,160,100} 
  \definecolor{nodecolW}{RGB}{190,190,190}
  \definecolor{edgecol}{RGB}{160,160,180}   
 \definecolor{nodecolD}{RGB}{140,140,180}
  \definecolor{nodecolbD}{RGB}{180,140,140}
\tikzset{oplus/.style={path picture={%
      \draw[black]
       (path picture bounding box.south) -- (path picture bounding box.north) 
       (path picture bounding box.west) -- (path picture bounding box.east);
      }}} 
\tikzset{%
  highlight/.style={draw=Red4,rectangle,rounded corners=1pt, opacity=1,fill=gray!10,thick,inner sep=-1.25pt,on background layer,fill opacity=.3}
}
\tikzset{%
  highlightg/.style={draw=DodgerBlue4,rectangle,rounded corners=1pt, opacity=1,fill=gray!10,thick,inner sep=-1.25pt,on background layer,fill opacity=.3}
}
\tikzset{%
fshadow/.style={      preaction={
         fill=black,opacity=.1,
         path fading=circle with fuzzy edge 20 percent,
         transform canvas={xshift=1mm,yshift=-1mm}
       }}
}
\def\ITEMSEP{.5pt}  \small \sffamily 
\newcommand{\tikzmark}[2]{\tikz[overlay,remember picture,baseline=(#1.base)] \node (#1) {#2};}
\newcommand{\Highlight}[3][submatrix]{%
    \tikz[overlay,remember picture]{
    \node[highlight,fit=(#2.north west) (#3.south east)] (#1) {};}
}
\newcommand{\HighlightG}[3][submatrix]{%
    \tikz[overlay,remember picture]{
    \node[highlightg,fit=(#2.north west) (#3.south east)] (#1) {};}
}\newcommand{\scpy}{\begin{tikzpicture}[very thick,scale=10] 
\node at (0.05,0.05) [fill=green!10,draw,rectangle, drop shadow] (n2){$\mathcal{C}$};
 \node at (-0.065,0.1) {$s$};
 \node at (0.165,0.055) {$s'$};
\draw (-0.05,0.1) -- (0.15,0.1);
\draw (-0.02,0.05) -- (0.03,0.05);
\draw (-0.02,0.1) -- (-0.02,0.05);
\draw  (0.07,0.05) -- (0.15,0.05);
\end{tikzpicture}
}
\newcommand{\sinv}{\begin{tikzpicture}[very thick,scale=10] 
\node at (0.05,0.05) [fill=colinv,draw,diamond, drop shadow,inner sep=0.5pt,minimum size=10pt] (n2){$\mathds{-1}$};
 \node at (-0.065,0.05) {$s$};
 \node at (0.165,0.055) {$s'$};
\draw (-0.05,0.05) -- (0.015,0.05);
\draw  (0.085,0.05) -- (0.15,0.05);
\end{tikzpicture}
}
\newcommand{\ssum}{\begin{tikzpicture}[very thick,scale=10] 
\node at (0.05,0.05) [fill=colplus,draw,oplus, circle,drop shadow,inner sep=0.5pt,minimum size=15pt, very thick] (n2){};
 \node at (-0.065,0.1) {$s_1$};
 \node at (-0.065,0) {$s_2$};
 \node at (0.165,0.055) {$s'$};
\draw (-0.05,0.1) -- (0.05,0.1);
\draw (0.05,0.1) -- (0.05,0.075);
\draw (-0.05,-0.0) -- (0.05,-0.0);
\draw (0.05,0) -- (0.05,0.025);
\draw  (0.08,0.05) -- (0.15,0.05);
\end{tikzpicture}
}
\newcommand{\mest}{\begin{tikzpicture}[very thick,scale=10] 
\node at (0.05,0.0) [fill=colzeta,draw,rectangle, drop shadow] (n2){$\hat{\zeta}$};
\draw (-0.05,0.0) -- (0.025,0.0);
 \node at (-0.065,0.0) {$s$};
 \node at (0.225,0.0) {\mnp{1in}{\fontsize{7}{7}\selectfont \sffamily \mnp{.35in}{Real Number \\ output\\
in $[0,1]$}}};
\end{tikzpicture}
}
\newcommand{\VSP}{\rule{0pt}{2ex}}
\begin{tabular}{m{2.85in} m{3.75in}}\hline
\VSP\bf Stream Operation & \VSP \bf Algorithmic Procedure (Pseudocode) \\\hline

\mnp{2.75in}{\bf $\blacksquare$ Independent
Stream Copy$^\dag$\\}

\mnp{1in}{
\scpy
}\hfill
\mnp{1.5in}{
\raggedright\bf \sffamily \small Generate an independent sample path from the same hidden stochastic source. }
 &

\begin{list}{\labelitemi}{\leftmargin=0.75em} \setlength{\itemsep}{\ITEMSEP}%
\item[1] Generate stream $\omega_0$ from FWN
 \item[2] Read current symbol $\sigma_1$ from $s_1$, and $\sigma_2$ from
$\omega_0$
\item[3] If $\sigma_1= \sigma_2$, then write $\sigma_1$  to output $s'$
\item[4] Move read positions one step to right, and go to step 1
\end{list}
{\itshape This operation is required internally in  stream inversion.}
 \\\hline

\mnp{2.75in}{
\bf $\blacksquare$ Stream 
Inversion$^\dag$\\}

\mnp{1in}{
\sinv}
\hfill
\mnp{1.5in}{
\raggedright \bf \sffamily   \small
Generate sample path from  inverse model of  hidden  source. }
 &

\begin{list}{\labelitemi}{\leftmargin=0.75em} \setlength{\itemsep}{\ITEMSEP}%
\item[1] Generate $\vert \Sigma \vert-1$ independent copies of $s_1$: $s_1,\cdots, s_{\vert\Sigma\vert -1}$
 \item[2] Read current symbols $\sigma_i$ from $s_i$ ($i=1,\cdots,\vert\Sigma\vert-1$)
\item[3] If $\sigma_i\neq  \sigma_j$ for all distinct  $i,j$, then write 
$\Sigma \setminus \bigcup_{i=1}^{\vert \Sigma \vert-1}\sigma_i$
to output $s'$
\item[4] Move read positions one step to right, and go to step 1
\vspace{-5pt}
\end{list}
 \\\hline
\mnp{2.75in}{
\vspace{5pt}

\bf$\blacksquare$  Stream Summation$^\dag$\\}

\mnp{1in}{\ssum}
\hfill
\mnp{1.5in}{\raggedright\bf \sffamily  \small
Generating sample path from  sum of hidden sources.}
 &

\begin{list}{\labelitemi}{\leftmargin=0.75em}  \setlength{\itemsep}{\ITEMSEP}%
 \item[1] Read current symbols $\sigma_i$ from $s_i$ ($i=1,2$)
\item[2] If $\sigma_1 =  \sigma_2$, then write 
to output $s'$
\item[3] Move read positions one step to right, and go to step 1
\end{list}
\\\hline
\mnp{2.75in}{
\bf$\blacksquare$  Deviation
from FWN$^\ddag$\\}

\mnp{1.2in}{\mest}
\hfill
\mnp{1.5in}{\raggedright\bf \sffamily  \small
Estimating the deviation of a symbolic stream from FWN. 
}
\vspace{2pt}

(Symbolic derivatives (Definition~\ref{defsymderivative}) in  Section~\ref{sec-pfsa} formalizes $\phi^s(\cdot)$. If $s$ is generated by a FWN process, then $\phi^s(x) \rightarrow \mathcal{U}_\Sigma$ for any $x \in \Sigma^\star$, and hence $\hat{\zeta}(s,\ell) \rightarrow 0$.)
 &\cgatherd[-3pt][-5pt]{
\hat{\zeta}(s,\ell)=
\frac{\vert \Sigma \vert -1}{\vert \Sigma\vert}\sum_{x: \vert x \vert \leqq \ell} \frac{\vert \vert \phi^s(x) - \mathcal{U}_\Sigma \vert \vert_\infty}{\vert \Sigma \vert^{2\vert x \vert}} 
 ,\ \textrm{where}}
\begin{itemize} \setlength{\itemsep}{\ITEMSEP}%
 \item $\vert \Sigma \vert$ is the alphabet size, $\vert x\vert$ is the length of the string $x$
\item $\ell$ is the maximum length of strings upto which the sum is evaluated. For a given  $\epsilon^\star$, we choose $\ell = \ln(1/\epsilon^\star)/\ln (\vert \Sigma \vert)$ (See Proposition~\ref{prophiddenmodeltwo})
\item $\mathcal{U}_\Sigma$ is the uniform probability vector of length $\vert\Sigma\vert$
\item For $\sigma_i \in \Sigma$, $\phi^s(x) \big \vert_i = \frac{\txt{\# of occurrences of  $x\sigma_i$ in string $s$}}{\txt{\# of occurrences of  $x$ in string $s$}}$ 
\end{itemize}
 \\\hline
\end{tabular}

\flushleft {\footnotesize $^\dag$See Section~\ref{sec-stropr} for proof of correctness }
\vspace{-2pt}
\flushleft {\footnotesize $^\ddag$See Definition~\ref{defdevest} and Propositions~\ref{prophiddenmodel} and \ref{prophiddenmodeltwo} in Section~\ref{sec-stropr}}
\vspace{-2pt}
\flushleft{\footnotesize $^\textrm{\textsection}$ Symbolic derivatives underlie the rigorous proofs. However, for the actual implementation, they are only needed in the final step to compute deviation from FWN }
\vspace{-8pt}
\end{table*}
\fi
Since a data stream is perfectly annihilated by a correct realization of its anti-stream,  any deviation of the collision product from noise quantifies statistical dissimilarity.  Using this causal similarity metric, we can cluster streams, classify them, or identify stream segments that are unusual or different. The algorithms are linear in input data, implying they can be applied efficiently to streams in near-real time. Importantly, data smashing can be applied without understanding where the streams were generated, how they are encoded, and what they represent. 

Ultimately, from a collection of data streams and their pairwise similarities, it is possible to automatically ``back out'' the underlying metric embedding of the data, revealing its hidden structure for use with traditional machine learning methods.

Dependence across data streams is often quantified using  mutual information~\cite{CT91}. However, mutual information and data smashing are distinct concepts. The former measures dependence between streams; the latter computes a distance between the generative processes themselves.  Two sequences of  independent coin-flips necessarily have zero mutual information, but data smashing will identify the streams as similar; being generated by the same stochastic process. Moreover, smashing only works correctly if the streams are independent or nearly so (See Section~\ref{sec-addla}).

Similarity computed  via data smashing is clearly a function of the statistical information buried in the input streams. However,  it might not be easy to find the right statistical tool, that reveals this hidden information, particularly without domain knowledge, or without first  constructing a good system model (See Section~\ref{sec-addlb} for an example where smashing reveals non-trivial categories missed by simple statistical measures). We describe in detail the process of computing anti-streams, and the process of comparing information. In Section~\ref{sec-pfsa}-\ref{sec-stropr}  we provide theoretical bounds on the confidence levels, minimal data lengths required for reliable analysis, and scalability of the process as function of the signal encodings. 
%
%

We have  limitations. 
Data smashing is not directly applicable to learning tasks that do not depend or require a notion of similarity, $e.g.$, identifying a specific time instant at which some event of interest transpired within a data set, or predicting the next step in a time series.
Even with the problems to which smashing is applicable, we do not claim  strictly superior quantitative performance to the state-of-art in any and all applications; carefully chosen approaches tuned to specific problems can certainly do as well, or better. Our claim is not that we uniformly outperform existing methods, but that we are on par, as evidenced in multiple example applications; yet do so without requiring expert knowledge, or a training set. Additionally, technical reasons preclude applicability to data from strictly deterministic systems (See section on Limitations \& Assumptions).
\section{The Hidden Models}
The notion of a universal comparison metric makes sense only in the context of a featureless approach, where one considers pairwise similarity (or dissimilarity)  between   individual measurement sets. However, while the advantage  of considering the notion of similarity between data sets instead of between feature vectors has been recognized~\cite{duinexperiments1997,mottlfeatureless2001,kalskadissimilarity2002}, the definition of similarity measures  has remained intrinsically heuristic and application dependent, with the possibility of  a universal  metric  been summarily rejected. We show that such universal comparison is indeed realizable, at least under some general assumptions on the nature of the generating process.
 
We consider sequential observations, $e.g.$, time series of sensor data. The first step is mapping the possibly continuous-valued sensory observations to discrete symbols via pre-specified quantization of the data range (See Section~\ref{sec-quant} and Fig.~\ref{figS1}). Each symbol represents a slice of the data range, and the total number of slices define the symbol alphabet $\Sigma$ (where $\vert \Sigma\vert$ denotes the alphabet size). The coarsest quantization has a binary alphabet consisting of say $0$ and $1$ (it is not important what symbols we use, we can as well represent the letters of the alphabet with $a$ and $b$), but finer quantizations with larger alphabets are also possible. An observed data stream is thus mapped to a symbolic sequence over this pre-specified alphabet. We assume that the symbol alphabet and its interpretation is fixed for a particular task. 

Quantization  involves some information loss which can be reduced with finer alphabets at the expense of increased computational complexity (See Section~\ref{sec-quant}). We use quantization schemes (See Fig. \ref{figS1}) which require no domain expertise.
%
\subsection{Inverting and combining hidden models} 
Quantized Stochastic Processes (QSPs) which capture the statistical structure of  symbolic streams can be  modeled using probabilistic automata, provided the processes are ergodic and stationary~\cite{CL12,crutchfield87,crutchfield94}.  
For the purpose of computing our similarity metric, we require that the number of states in the automata  be finite ($i.e.$ we only assume the existence of a generative Probabilistic Finite State Automata (PFSA)); we do not attempt to construct explicit models or require knowledge of either the exact number of states or  any explicit  bound thereof (See Fig.~\ref{fig2}).

A slightly restricted subset of the space of all PFSA over a fixed alphabet admits an Abelian group structure (See Section~\ref{sec-algstruct}); wherein the operations of commutative addition and inversion are well-defined. A trivial example of an Abelian group is the set of reals with the usual addition operation; addition of real numbers is  commutative and each real number $a$ has a unique inverse $-a$, which when summed produce the unique identity $0$. We have previously discussed the Abelian   group structure on PFSAs in the context of model selection~\cite{CWR10}. Here, we show that key group operations, necessary for classification,  can be carried out on the observed sequences alone, without any state synchronization or reference to the hidden generators of the sequences.

Existence of a group structure implies that given  PFSAs $G$ and $H$, sums $G+H, G-H$, and  unique  inverses $-G$ and $-H$ are well-defined. Individual symbols have no notion of a ``sign'', and hence the models $G$ and $-G$ are \textit{not} generators of sign-inverted sequences which would not make sense as our generated sequences   are  symbol streams. For example, the anti-stream of  a sequence $10111$ is not $-1 \ 0 \ -1 \ -1 \  -1$, but a fragment that has inverted statistical properties in terms of the occurrence patterns of the symbols $0$ and $1$ (See Table~\ref{tab1}).  For  a PFSA $G$, the unique inverse $-G$ is the PFSA which when added to $G$ yields the group identity $W=G+(-G)$, $i.e.$, the zero model. Note, the zero model $W$ is characterized by the property  that for any arbitrary PFSA $H$ in the group, we have $H+W=W+H=H$. 

For any fixed alphabet size, the zero model  is the unique  single-state PFSA (up to minimal description~\cite{CR08}) that generates  symbols as consecutive  realizations of independent random variables with uniform distribution over the symbol alphabet.
Thus $W$ generates flat white noise (FWN), and the  entropy rate of FWN achieves the theoretical upper bound among the sequences generated by arbitrary PFSA in the model space. Two PFSAs $G,H$ are identical if and only if $G+(-H)=W$.
%
%
\subsection{Metric Structure on Model Space} 
In addition to the Abelian group, the PFSA space admits a metric structure (See Section~\ref{sec-pfsa}). 
The distance between two models thus can be  interpreted as the deviation of their group-theoretic difference from a FWN process. Information annihilation exploits the possibility of estimating  causal similarity between observed data streams by estimating this distance from the observed sequences alone without requiring the  models themselves.

We can estimate the distance of the hidden generative model from FWN given only  an observed stream $s$. This is achieved by the  function $\hat{\zeta}$ (See Table~\ref{tab1}, row 4).
Intuitively, given an  observed sequence fragment $x$, we first compute the deviation of the distribution of the next
 symbol from the uniform distribution over the alphabet.  $\hat{\zeta}(s,\ell)$ is the sum of these 
deviations for all historical fragments $x$  with length up to $\ell$, weighted by $1/\vert \Sigma \vert^{2\vert x \vert}$. The weighted sum ensures that deviation of the distributions for longer  $x$ have smaller contribution to $\hat{\zeta}(s,\ell)$, which addresses the issue that the occurrence frequencies of longer sequences are  more variable.
%
\iffigs
\begin{table*}[t]
\centering 
\captionN{Application Problems, \& Results$^\ddag$}\label{tab2}
\vskip .4em
\input{Figures/sc_tab2b.tex}
\flushleft{\footnotesize $^\star$~See Section~\ref{sec-quant} for details on choosing quantization schemes}
\end{table*}
\fi
\section{Key Insight: The Information Annihilation Principle}
Our key insight is the following: two sets of sequential observations have the same generative process if the \textit{inverted} copy of one can \textit{annihilate} the statistical information contained in the other. We claim, that given two symbol streams $s_1$ and $s_2$, we can check if the underlying PFSAs (say $G_1,G_2$) satisfy the \textit{annihilation equality}:
$G_1+(-G_2)=W $ 
without explicitly knowing or constructing the models themselves. 

Data smashing is predicated on being able to invert and sum streams, and to compare streams to noise. Inversion generates a stream $s'$ given a stream $s$, such that if PFSA $G$ is the source for $s$, then $-G$ is the source for $s'$. Summation collides two streams: Given streams $s_1$ and $s_2$, generate a new stream $s'$ which is a realization of FWN if and only if the hidden models $G_1,G_2$ satisfy $G_1+G_2=W$. Finally, deviation of a stream $s$ from that generated by a FWN process can be calculated directly.

 Importantly, for a stream $s$ (with generator $G$), the inverted stream $s'$ is not unique.
Any symbol stream generated from the inverse model $-G$ qualifies as an inverse for $s$; thus anti-streams are non-unique. What is indeed  unique  is the generating inverse PFSA model. Since, our technique compares the hidden stochastic processes and not their possibly  non-unique realizations, the non-uniqueness of anti-streams is not problematic.

%

Despite the possibility of mis-synchronization between hidden model states, applicability of the algorithms shown in Table~\ref{tab1} for disambiguation of hidden dynamics is valid. We show in Section~\ref{sec-stropr}   that the algorithms evaluate  distinct models to be  distinct, and nearly identical hidden models to be nearly identical.

Estimating the deviation of a stream from FWN is straightforward (as specified by $\hat{\zeta}(s,\ell)$ in Table~\ref{tab1}, row 4). All subsequences of a given length must necessarily occur with the same frequency for a FWN process; and we simply estimate the deviation from this behavior in the observed sequence. The other two tasks are carried out via selective erasure of symbols from the input stream(s) (See Table~\ref{tab1}, rows 1-3). For example, 
 summation of streams is realized as follows: given two streams $s_1,s_2$, we read a symbol from each stream, and if they match then we copy it to our output, and ignore the  symbols read when they do not match. 

Thus, data smashing  allows us to manipulate streams via selective erasure, to  estimate a distance between the hidden stochastic sources.  
Specifically, we estimate the degree to which the sum of a stream and its anti-stream brings the entropy rate of the resultant stream close to its theoretical upper bound. 
%
\iffigs
\begin{figure*}[t] 
\vspace{-10pt}

\centering 
\newcommand{%
\begin{loglogaxis}[
font=\bf\fontsize{6}{6}\selectfont,
axis line style={black, opacity=0.5,  thick, rounded corners=0pt},
axis on top=true, 
scale=1.2,
enlargelimits=false, 
width=, 
height=,     
xmin=,xmax=, 
semithick,
scale only axis,
 axis background/.style={
shade,top color=gray!30,bottom color=white},
xlabel={\# of symbols read},
yticklabel style={xshift=.025in,}, ylabel style={ align=center, yshift=-.07in},
ylabel={Self-annihilation \\ Error},
 , title style={yshift=-.1in},];
\input{}\end{loglogaxis}}[6]{%
\begin{loglogaxis}[
font=\bf\fontsize{6}{6}\selectfont,
axis line style={black, opacity=0.5,  thick, rounded corners=0pt},
axis on top=true, 
scale=1.2,
enlargelimits=false, 
width=#1, 
height=#2,     
xmin=#3,xmax=#4, 
semithick,
scale only axis,
 axis background/.style={
shade,top color=gray!30,bottom color=white},
xlabel={\# of symbols read},
yticklabel style={xshift=.025in,}, ylabel style={ align=center, yshift=-.07in},
ylabel={Self-annihilation \\ Error},
 #6, title style={yshift=-.1in},];
\input{#5}\end{loglogaxis}}
\begin{tikzpicture}[font=\bf \fontsize{6}{6}\selectfont]
\begin{loglogaxis}[
font=\bf\fontsize{6}{6}\selectfont,
axis line style={black, opacity=0.5,  thick, rounded corners=0pt},
axis on top=true, 
scale=1.2,
enlargelimits=false, 
width=.9in, 
height=.75in,     
xmin=100,xmax=4000, 
semithick,
scale only axis,
 axis background/.style={
shade,top color=gray!30,bottom color=white},
xlabel={\# of symbols read},
yticklabel style={xshift=.025in,}, ylabel style={ align=center, yshift=-.07in},
ylabel={Self-annihilation \\ Error},
 name=X1,title={(i) EEG }, ylabel style={yshift=-.1in}, xlabel style={yshift=.05in}, title style={yshift=-.1in},];
\input{Figures/convEEG.tex}\end{loglogaxis}
\begin{loglogaxis}[
font=\bf\fontsize{6}{6}\selectfont,
axis line style={black, opacity=0.5,  thick, rounded corners=0pt},
axis on top=true, 
scale=1.2,
enlargelimits=false, 
width=.9in, 
height=.75in,     
xmin=1000,xmax=121000, 
semithick,
scale only axis,
 axis background/.style={
shade,top color=gray!30,bottom color=white},
xlabel={\# of symbols read},
yticklabel style={xshift=.025in,}, ylabel style={ align=center, yshift=-.07in},
ylabel={Self-annihilation \\ Error},
 name=X2,   at=(X1.east),xshift=-.075in, anchor=west, xshift=.45in , ylabel={} ,title={(ii) Heart sound}, xlabel style={yshift=.05in}, title style={yshift=-.1in},];
\input{Figures/convheart.tex}\end{loglogaxis}
\begin{loglogaxis}[
font=\bf\fontsize{6}{6}\selectfont,
axis line style={black, opacity=0.5,  thick, rounded corners=0pt},
axis on top=true, 
scale=1.2,
enlargelimits=false, 
width=.9in, 
height=.75in,     
xmin=20,xmax=190, 
semithick,
scale only axis,
 axis background/.style={
shade,top color=gray!30,bottom color=white},
xlabel={\# of symbols read},
yticklabel style={xshift=.025in,}, ylabel style={ align=center, yshift=-.07in},
ylabel={Self-annihilation \\ Error},
 name=X3,   at=(X2.east),xshift=-.075in, anchor=west , xshift=.45in , ylabel={},title={(iii) Photometry}, xlabel style={yshift=.05in}, title style={yshift=-.1in},];
\input{Figures/convogle.tex}\end{loglogaxis}
\node [anchor=north west ] at ([yshift=.35in, xshift=-.3in]X1.north west) (T1xx){ \large  A \small Covergence of Self-Annihilation Error};
\node [anchor=north west ] at ([yshift=.35in, xshift=4.5in]X1.north west) (T1xx){ \large  B \small Computation time for annihilation};

 \pgfplotsset{every axis legend/.append style={
at={(0.35,0.35)},
anchor=south}}
\begin{axis}[ name=X4, at=(X3.east), xshift=.75in, anchor=west,legend cell align=left,legend style={draw=none,fill=none,,text opacity=1,font= \fontsize{6}{6}\selectfont},
axis line style={black, opacity=0.5,  thick, rounded corners=0pt},
axis on top=true, 
scale=1,grid style={dashed, gray!30},
enlargelimits=false, 
width=1.6in, 
height=.9in,     
xmin=100,xmax=100000, 
ymin=0,ymax=17,
semithick,grid, 
scale only axis,
 axis background/.style={
shade,top color=gray!30,bottom color=white},
xlabel={Symbol Length [No. of symbols]},xlabel style={yshift=.05in},
yticklabel style={xshift=.025in},
ylabel={Computation Time [ms]},
ylabel style={yshift=-.3in},
    scaled x ticks = false,
      x tick label style={/pgf/number format/fixed,
      /pgf/number format/1000 sep = \thinspace 
      }
  ];
\input{Figures/time1.txt}
\input{Figures/time2.txt}
\input{Figures/time3.txt}
\input{Figures/time4.txt}
\legend{$\vert \Sigma\vert=2$ ($2$ states) , $\vert \Sigma\vert=2$ ($4$ states) , $\vert \Sigma\vert=3$ , $\vert \Sigma\vert=4$ }
\end{axis}

\end{tikzpicture} 
\vspace{-5pt} 

\captionN{\textbf{Computational complexity and convergence rates for information annihilation.} (A) Illustrates exponential convergence of the self-annihilation error for a small set of data series for different applications (plate (i) for EEG data, plate (ii) for heart sound recordings, and plate (iii) for photometry). (B) Computation times for carrying out annihilation using the circuit shown in Fig.~\ref{fig2}D as a function of the length of  input streams for different alphabet sizes (and for diffrent number of states in the hidden models). Note that the asymptotic time complexity of obtaining the similarity distances scales as $O(\vert \Sigma \vert n)$, where $n$ is the length of the shorter of the two input streams. 
}\label{fig4}
\end{figure*}
\fi

\iffigs
 \begin{figure*}[t]
\flushleft\input{Figures/sc_figthreeE.tex}
\vspace{-105pt}
\captionN{\textbf{Data smashing applications.} Pairwise distance matrices, identified clusters and 3D projections of  Euclidean embeddings for epileptic pathology identification (shown in (A)), identification of heart murmur (shown in (B)), and  classification of variable stars from photometry (shown in (C)). In these applications, the relevant clusters are found unsupervised. 
}\label{fig3}
\end{figure*}
\fi

\subsection{Contrast with Feature-based State of Art}
Contemporary research in 
machine learning is dominated by the search for good ``features''~\cite{Duda00}, which are typically understood to be heuristically  chosen discriminative attributes characterizing objects or phenomena of interest. 
Finding such attributes is not easy~\cite{brum11,bar11}. Moreover, the number of  characterizing features $i.e.$ the size of the feature set, needs to be relatively small to avoid  intractability of the subsequent learning algorithms.  Additionally, their heuristic definition precludes any notion of optimality; it is impossible to quantify the quality of a given feature set in any absolute terms; we can only compare how it performs in the context of a specific  task  against a few selected variations.  

In addition to the heuristic nature of feature selection, machine learning algorithms typically necessitate  the choice of a distance metric in the feature space. For example, the classic  ``nearest neighbor'' k-NN classifier~\cite{CH67} requires definition of proximity, and the k-means algorithm~\cite{Mac67} depends on pairwise distances  in the feature space for clustering. 
%
To side-step the heuristic metric problem, recent approaches often learn appropriate metrics directly from data, attempting to ``back out'' a metric from side information or labeled constraints~\cite{yangan2007}. Unsupervised approaches use  dimensionality reduction and embedding strategies to uncover the geometric structure of geodesics in the feature space ($e.g.$ see manifold learning~\cite{tenenbauma2000,roweisnonlinear2000,seungcognition.2000}).
However, automatically inferred data geometry in the feature space is, again, strongly dependent on the initial choice of features. Since Euclidean distances between feature vectors are often misleading~\cite{tenenbauma2000},  heuristic features make it impossible to conceive of  a task-independent universal metric.

%
%

In contrast,  smashing  is based on an application-independent notion of similarity between quantized sample paths observed from hidden stochastic processes. Our universal metric quantifies the degree to which the summation of the inverted copy of any one  stream to the other annihilates the existing statistical dependencies, leaving behind flat white noise. We  circumvent the need for features altogether (See Fig.~\ref{fig1}B) and do not require training.

%
%
Despite the fact that the estimation of similarities between two data streams is performed in absence of the knowledge of the underlying source structure or its parameters, we establish that this universal metric is causal, $i.e.$, with sufficient data it converges to a well-defined distance between the hidden stochastic sources themselves, without ever knowing them explicitly.
\subsection{Self-annihilation Test for Data-sufficiency Check} Statistical process characteristics dictate the  amount of data  required for estimation of  the proposed  distance.
 With no access to   the hidden models, we cannot estimate the required data length   a priori; however it is possible to check for data-sufficiency for a specified error threshold via self-annihilation. Since the proposed metric is causal, the distance between two independent samples from  the same source always converges to zero. We estimate the degree of  self-annihilation achieved in order to determine data sufficiency; $i.e.$, a stream is sufficiently long if it can sufficiently annihilate an inverted self-copy to FWN.

The self-annihilation based data-sufficiency  test consists of two steps: given an observed symbolic sequence $s$, we first generate an independent copy (say $s'$). This is the independent stream copy operation (See Table~\ref{tab1}, row 1), which  can be carried out via selective symbol erasure without any knowledge of the source itself. Once we have $s$ and $s'$, we check if 
the inverted version of one annihilates the other to a pre-specified degree.
In particular, we generate $s''$ from $s$ via stream inversion, and use stream summation of $s'$ and $s''$ to produce the final output stream $s'''$, and check if $\hat{\zeta}(s''',\ell)$ is less than some specified threshold $\epsilon^\star >0$.
We show  that considering only histories up to a length $\ell= \frac{\ln(1/\epsilon^\star)}{\ln(\vert \Sigma \vert)}$ in the computation of $\hat{\zeta}(s''',\ell)$ is sufficient (See Section~\ref{sec-stropr}).

 
The self-annihilation error is also useful to rank the effectiveness of different quantization schemes. Better quantization  schemes ($e.g.$ ternary instead of binary) will be able to  produce better self-annihilation while maintaining the ability to discriminate different streams (See Section~\ref{sec-quant}).
\section{Feature-free Classification and Clustering}
Given $n$ data streams $s_1,\cdots,s_n$, we construct a  matrix $E$, such that $E_{ij}$ represents the estimated  distance between the streams $s_i, s_j$.
Thus, the diagonal elements of $E$ are the self-annihilation errors, while the off-diagonal elements represent inter-stream similarity estimates (See  Fig.~\ref{fig2}D for the basic annihilation circuit).
 Given a positive threshold $\epsilon^\star >0$, the self-annihilation tests are passed if $\epsilon_{kk} \leqq \epsilon^\star$ ($k=i,j$), and for sufficient data the streams $s_i,s_j$ have identical sources with  high probability if and only if $\epsilon_{ij} \leqq \epsilon^\star$. Once $E$ is constructed, we can determine clusters by rearranging $E$ into prominent diagonal blocks. Any standard technique~\cite{Ward63} can be used for such clustering; information annihilation is only used to find the causal  distances between observed data streams, and the resultant distance matrix can then used as input to state-of-the-art clustering methodologies, or finding geometric structures (such as lower dimensional embedding manifolds~\cite{tenenbauma2000}) induced by the similarity metric on the data sources.

 The matrix $H$, obtained from $E$ by setting the diagonal entries to zero, estimates a distance matrix. 
An Euclidean   embedding~\cite{SS85} of $H$ then leads to deeper insight into the geometry of the space of the hidden generators, $e.g.$, in the case of the EEG data, the  time series' describe a one-dimensional manifold (a curve), with  data from  similar  phenomena clustered together along the curve (See Fig.~\ref{fig3}A(ii)).

\subsection{Computational Complexity \& Data Requirements}
 The asymptotic time complexity 
of carrying out the stream operations  scales linearly with input length, and the granularity of the alphabet (See Section~\ref{sec-stropr} and  Fig.~\ref{fig4}B for illustration of the linear time complexity of  estimating  inter-stream similarity).

To pass the self-annihilation test, a data stream must be sufficiently long; and  the required length $\vert s \vert $ of the  input $s$ with a specified threshold $\epsilon^\star$ is dictated by the characteristics of the generating process. Selective erasure in annihilation (See Table~\ref{tab1}) implies that the output  tested for being FWN is shorter compared to the input stream, and the expected shortening ratio $\beta$ can be explicitly computed (See Section~\ref{sec-stropr}).We refer to  $\beta$ as the \textit{annihilation efficiency}, since the convergence rate of the self-annihilation error scales as $1/\sqrt{\beta\vert s\vert}$. In other words, the required length $\vert s \vert$  of the data stream to achieve a self-annihilation error of $\epsilon^\star$ scales as  $1 / \beta (\epsilon^\star)^2$.
It is important to note that our analysis shows that the annihilation efficiency is independent of the descriptional complexity of the process, $e.g.$, in Fig.~\ref{fig4C} the self-annihilation error for  a simpler two state process converges faster to a  four state process. However the convergence rate always scales as $O(1/\sqrt{\vert s\vert})$ as dictated by the the Central Limit Theorem (CLT)~\cite{fell45}.
\subsection{Limitations \& Assumptions}
 Data smashing is not useful in problems which do not require a notion of similarity, $e.g.$,  predicting the future course of a time series, or
analyzing a data set to pinpoint the occurrence time of an event of interest. 

For problems to which smashing is applicable, we  implicitly assume the \textit{existence} of  PFSA generators; although we never find these models explicitly. It follows that what we actually assume is not any particular modeling framework, but that the systems of interest satisfy the properties of  ergodicity, stationarity, and have a finite (but not a priori bounded) number of  states (See Section~\ref{sec-pfsa}). In practice,  our technique performs well even if these properties are  only approximately satisfied ($e.g.$ quasi-stationarity instead of stationarity, see example in Section~\ref{sec-addlb}).  The algebraic structure of the space of PFSAs (in particular, existence of  unique group inverses) is key to the information annihilation principle; however we argue that any quantized ergodic stationary stochastic process is indeed representable as a probabilistic automata (See Section~\ref{sec-pfsa}).

Data smashing is not applicable to data from strictly deterministic  systems. Such  systems are representable by probabilistic automata; however  transitions occur with probabilities which are either  $0$ or $1$. PFSAs with zero-probability transitions are non-invertible, which invalidates  the underlying theoretical guarantees (See Section~\ref{sec-algstruct}). 
Similarly, data streams in which some alphabet symbol is exceedingly rare would be difficult to invert (See Section~\ref{sec-stropr} for the notion of annihilation efficiency).

Symbolization  invariably introduces quantization  error.  This can be made  small by using larger alphabets.  However, larger alphabet sizes  demand 
 longer observed sequences (See Section~\ref{sec-stropr}, Fig.~\ref{figS5}), implying that the length of observation limits the quantization granularity, and in the process limits the 
degree to which the quantization error can be mitigated. Importantly, with  coarse quantizations distinct processes may evaluate to be similar. However, identical processes will still evaluate to be identical (or nearly so), provided the streams pass the self-annihilation test. The self-annihilation test thus offers an application-independent way to compare and rank quantization schemes (See Section~\ref{sec-quant}).

The algorithmic steps (See Table~\ref{tab1})  require no synchronization (we can start reading the streams anywhere), implying that non-equal length of time-series,  and   phase mismatches are of no consequence.

\section{Application Examples}\label{sec-appl}
Data smashing begins with  quantizing  streams to symbolic sequences, followed by the use of  the annihilation circuit (Fig.~\ref{fig2}D) to compute pairwise causal similarities.  Details of the quantization schemes, computed distance matrices, and identified clusters and Euclidean embeddings are summarized  in Table~\ref{tab2} and Fig.~\ref{fig3}. 

Our first application is classification of brain electrical activity from
different physiological and pathological brain states~\cite{ALM01}.
We used sets of electroencephalographic (EEG) data series consisting of  surface EEG recordings from healthy volunteers
with eyes closed and  open, and intracranial  recordings from epilepsy patients during  seizure free
intervals from within and from outside the seizure generating area, as well as intracranial  recordings of seizures. 

 Starting with the data series    of  electric potentials, we generated sequences of  relative changes between consecutive  values before quantization. This step  allows  a common alphabet for sequences  with wide variability in the sequence  mean  values. 

The distance matrix from  pairwise smashing yielded clear clusters corresponding to seizure, normal eyes open (EO), normal eyes closed (EC) and epileptic pathology in non-seizure conditions. (See Fig.~\ref{fig3}A,  seizures not shown due to large differences from the rest). 

Embedding the distance matrix (See Fig.~\ref{fig3}A, plate (i)) yields a one-dimensional manifold (a curve), with contiguous segments corresponding to  different brain states, $e.g.$,  right hand side of plane A  correspond to epileptic pathology. This provides a particularly insightful picture, which eludes  complex non-linear modeling\cite{ALM01}.

Next we classify cardiac rhythms from noisy heat-sound data recorded using a digital stethoscope~\cite{ps11}. We analyzed $65$ data series (ignoring the labels) corresponding to healthy rhythms and murmur, to verify if we could identify clusters without supervision that  correspond to the expert-assigned  labels. 

We found $11$ clusters in the distance matrix (See Fig.~\ref{fig3}B), $4$ of which consisted of mainly data with murmur (as determined by the expert labels), and the rest consisting of mainly healthy rhythms (See Fig.~\ref{fig3}B, plate (iv)). Classification precision for murmur is noted in Table~2 ($75.2\%$).  Embedding of the  distance matrix revealed a two dimensional manifold (See Fig.~\ref{fig3}B, plate (iii)).

Our next problem is the classification of  variable stars using   light intensity  series (photometry) from  the Optical Gravitational Lensing Experiment (OGLE) survey~\cite{ogle06}. Supervised classification of photometry proceeds by first ``folding'' each light-curve to its known period to correct phase mismatches. 
 In our first analysis, we started with  folded light-curves; and generated data series of the relative changes between consecutive brightness values in the curves before quantization, which allows for the use of a common alphabet for light curves with wide variability in the mean brightness values. 
 Using data for Cepheids and RRLs (3426 Cepheids, 7273 RRL), we obtained a classification accuracy of $99.8\%$ which marginally outperforms the state of art (See Table~\ref{tab2}). Clear clusters (obtained unsupervised) corresponding to the two classes can be seen in the  computed distance matrix (See Fig.~\ref{fig3}C, plate (i)), and the 3D projection of its Euclidean embedding (See Fig.~\ref{fig3}C, plate (ii)).
The  3D embedding was very nearly  constrained within a 2D manifold (See Fig. \ref{fig3}C plate (ii)).

Additionally, in our second analysis, we asked if data smashing can  work without knowledge of the period of the variable star; skipping the folding step.
Smashing  raw photometry data  yielded a classification accuracy of $94.3\%$ for the two classes (See Table~\ref{tab2}). This direct approach is beyond state of the art techniques.

Our fourth application is biometric authentication using visually evoked EEG potentials (VEP). The public database used~\cite{eega}. considered 122 subjects, each of whom was exposed to  pictures of objects chosen from the standardized Snodgrass  set~\cite{snod80}. 

Note that while this application is supervised (since we are not attempting to find clusters  unsupervised), no actual training  is involved; we merely mark the randomly chosen subject-specific set of data series as the  library set representing each individual subject.
 If ``unknown'' test data series is smashed  against each element  of each of  the libraries corresponding to the individual subjects,  we expected that the data series from the same subject will annihilate each other correctly, while those from different subjects will fail to do so to the same extent. 
 We outperformed the state of  art for both kNN and SVM based approaches (See Table~\ref{tab2}). 

Our fifth application is text independent  speaker  identification  using the ELSDSR database~\cite{elsdsr}, which includes  recording from 23 speakers (9 female, and 14 male, with possibly non-native accents). As before, training  involved  specifying the library series for each speaker. We computed 
the distance matrix by smashing the library data series against each other, and trained a simple kNN on the Euclidean embedding of the distance matrix. The test data then yielded a classification accuracy of $80.2\%$, which beat the state of art figure of $73.73\%$ for $2s$ snippets of recording data~\cite{elsdsr2} (See Table~\ref{tab2}).

In the  suceeding sections, we develop the mathematical details of the information annihilatin principle, and establish the correctness of the data smashing algorithm. Section~\ref{sec-pfsa} presents the theory  of  probabilistic automata  as  a modeling framework for ergodic stationary quantized stochastic processes. Section~\ref{sec-algstruct} describes the relevant algebraic structures, including that of an Abelian group, definable on  the space of probabilistic automata. This is central to the notion of anti-streams. Section~\ref{sec-stropr} then establishes that the stream operations delineated in Table~\ref{tab1} are indeed correct. Section~\ref{sec-quant} discusses quantization schemes; specifically describing how to choose the granularity of the quantization. Section~\ref{sec-addl} expounds the differences between the data smashing approach and some specific standard notions often used to quantify statistical dependencies, $e.g.$ mutual information between data streams. We also discuss a specific example to illustrate that simple statistical features may miss important dynamical artifacts in data, which is easily revealed via data smashing. The paper is summarized and concluded in Section~\ref{sec-conc}.

\section{Stochastic Processes \& Probabilistic Automata}\label{sec-pfsa}
To establish the correctness of the data smashing algorithm, we  first establish  the possibility of using probabilistic automata to model stationary, ergodic processes. Our automata models~\cite{CL12} are distinct to those reported in the literature~\cite{P71,VTCC05}. We include a brief overview here for the sake of completeness.

\begin{notn} 
 $\Sigma$  denotes a finite alphabet  of symbols. The set of all finite but possibly unbounded strings on $\Sigma$ is denoted by $\Sigma^\star$~\cite{HMU01}. The set of finite strings over $\Sigma$ form a concatenative monoid, with the empty word $\lambda$ as identity. 
The set of strictly infinite strings on $\Sigma$ is denoted as $\Sigma^\omega$, where $\omega$ denotes the first transfinite cardinal. 
For a string $x$,  $\vert x \vert$ denotes its length, and for a set $A$,    $\vert A \vert$ denotes its cardinality. 
\end{notn}

\begin{defn}[QSP]\label{defQSP}
A QSP $\mathcal{H}$ is a discrete time $\Sigma$-valued strictly stationary, ergodic stochastic process, $i.e.$ 
\cgather{
\mathcal{H} = \left \{ X_t: X_t \textrm{ is a $\Sigma$-valued random variable}, t \in \mathbb{N}\cup \{0\} \right \}
} 
A  process is ergodic if  moments may be calculated from a sufficiently long realization, and strictly stationary if moments are time-invariant.
\end{defn}
We next formalize the connection of QSPs to PFSA generators.
  We develop the theory assuming multiple realizations of the QSP $\mathcal{H}$, and  fixed initial conditions. Using ergodicity, we will be then able to apply our construction to a single sufficiently long realization, where  initial conditions cease to matter.
\begin{defn}[$\sigma$-Algebra On Infinite Strings]
 For the set of infinite strings on  $\Sigma$, we define $\mathfrak{B}$ to be the smallest $\sigma$-algebra generated by the family of sets $\{  x \Sigma^\omega : x \in \Sigma^\star\}$.
\end{defn}
\begin{lem}\label{QSPtoProb}
Every QSP  induces a  probability space $(\Sigma^\omega,\mathfrak{B},\mu)$.
\end{lem}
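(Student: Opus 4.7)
The plan is to invoke the Kolmogorov extension theorem in a form tailored to the symbol alphabet $\Sigma$, using the finite-dimensional distributions that $\mathcal{H}$ already provides. Concretely, for every finite string $x = \sigma_{i_0}\sigma_{i_1}\cdots\sigma_{i_{n-1}} \in \Sigma^\star$, I would define a set function on the cylinder sets by
\begin{equation*}
\mu_0(x\Sigma^\omega) \;=\; \Pr\bigl[X_0 = \sigma_{i_0},\, X_1 = \sigma_{i_1},\, \ldots,\, X_{n-1} = \sigma_{i_{n-1}}\bigr],
\end{equation*}
and $\mu_0(\lambda\Sigma^\omega) = \mu_0(\Sigma^\omega) = 1$. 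The collection $\mathcal{C} = \{x\Sigma^\omega : x \in \Sigma^\star\} \cup \{\varnothing\}$ is a semi-algebra (finite disjoint unions of cylinders form an algebra $\mathcal{A}$), so once I verify that $\mu_0$ extends consistently to $\mathcal{A}$ and is countably additive there, Carathéodory's extension theorem delivers a unique probability measure $\mu$ on $\sigma(\mathcal{A}) = \mathfrak{B}$.

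The first step is to check finite additivity on $\mathcal{A}$. For this I would use the identity $x\Sigma^\omega = \bigsqcup_{\sigma\in\Sigma} (x\sigma)\Sigma^\omega$, which forces $\mu_0(x\Sigma^\omega) = \sum_{\sigma\in\Sigma}\mu_0((x\sigma)\Sigma^\omega)$; this is automatic from the definition because the right-hand side is the total probability of $X_0\cdots X_{n-1} = x$ decomposed by the value of $X_n$, and the marginal distribution of $(X_0,\ldots,X_{n-1})$ is consistent with the joint distribution of $(X_0,\ldots,X_n)$ by construction of the process. Extending disjointly to $\mathcal{A}$ then yields a well-defined, finitely additive $\mu_0$.

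The main obstacle is countable additivity on $\mathcal{A}$, which in general is the content of Kolmogorov's theorem. Here it is handled cleanly by a compactness argument: endow $\Sigma$ with the discrete topology and $\Sigma^\omega$ with the product topology, under which $\Sigma^\omega$ is compact (Tychonoff) and each cylinder $x\Sigma^\omega$ is clopen. Any decreasing sequence $A_1 \supseteq A_2 \supseteq \cdots$ in $\mathcal{A}$ with $\bigcap_n A_n = \varnothing$ must then satisfy $A_n = \varnothing$ for all large $n$ (a finite subcover argument on the compact space), which forces $\mu_0(A_n)\to 0$. This, together with finite additivity, is equivalent to countable additivity on $\mathcal{A}$.

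With countable additivity established, Carathéodory's extension theorem produces a unique probability measure $\mu$ on $\mathfrak{B} = \sigma(\mathcal{C})$ extending $\mu_0$. The triple $(\Sigma^\omega, \mathfrak{B}, \mu)$ is then the induced probability space; note that stationarity and ergodicity of $\mathcal{H}$ are not needed for the bare existence of $\mu$ (they will matter later for estimating $\mu$ from a single realization), only the consistency of the finite-dimensional distributions of $\mathcal{H}$ is invoked.
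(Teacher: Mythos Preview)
Your argument is correct and is the standard Kolmogorov--Carath\'eodory route: you take the finite-dimensional distributions of $\mathcal{H}$ as given, check Kolmogorov consistency, and upgrade finite additivity to $\sigma$-additivity on the cylinder algebra via the compactness of $\Sigma^\omega$ (finite alphabet, product topology). This is clean and, as you note, does not use stationarity or ergodicity at all.

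The paper takes a different and rather informal path: it \emph{defines} $\mu(x\Sigma^\omega)$ empirically, as the limiting relative frequency of initial occurrences of $x$ across a large ensemble of realizations, explicitly invoking stationarity (to make the definition time-consistent) and ergodicity (to guarantee the limit exists and equals the true probability), and then extends ``via at most countable sums'' without a formal Carath\'eodory step. What your approach buys is rigor and generality (any $\Sigma$-valued discrete-time process yields such a measure); what the paper's approach buys is a direct link to how $\mu$ will actually be \emph{estimated} from data later in the development, foreshadowing the passage from ensemble averages to time averages along a single realization. Both land on the same object, but yours is the mathematically complete version, while the paper's is closer to a heuristic that motivates the subsequent empirical machinery.
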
 
\begin{proof}
Assuming stationarity, we can construct a probability measure $\mu: \mathfrak{B} \rightarrow [0,1]$  by defining
for any sequence $x\in \Sigma^\star\setminus \{\lambda\}$, and a sufficiently large number of realizations $N_R$ (assuming ergodicity):
\cgathers{
 \mu( x \Sigma^\omega) = \lim_{N_R \rightarrow \infty}\frac{ \textrm{\small \#  of initial occurrences of $x$}}{\begin{array}{c} \textrm{\small \#  of initial occurrences} \\\textrm{\small of all sequences of length  $\vert x \vert$}\end{array}}
}
and extending the measure to elements of $\mathfrak{B} \setminus B$ via at most countable sums. Thus $\mu(\Sigma^\omega) = \sum_{x \in \Sigma^\star} \mu( x \Sigma^\omega) = 1$, and for the null word  $\mu(\lambda \Sigma^\omega) =  \mu(\Sigma^\omega) = 1$.
\end{proof}

 \begin{notn}
 For notational brevity, we denote $\mu( x \Sigma^\omega)$ as $Pr(x)$.
 \end{notn}

Classically,  automaton states are equivalence classes for the  Nerode relation;  two strings are  equivalent if and only if any finite extension of the strings is either both in the language under consideration, or neither are~\cite{HMU01}. We use a probabilistic extension~\cite{CR08}.

\begin{defn}[Probabilistic Nerode Equivalence Relation]\label{defnerode} $(\Sigma^\omega,\mathfrak{B},\mu)$ induces an equivalence relation $\sim_{N}$ on the set of finite strings $\Sigma^\star$ as:
\cgather{
\forall x,y \in \Sigma^\star, 
\smash{x \sim_{N} y \iff \forall z \in \Sigma^\star \bigg (\big (} Pr(xz)=Pr(yz)=0 \big )  \notag \\  \bigvee \big \vert  Pr(xz)/Pr(x)- Pr(yz)/Pr(y)\big \vert =0\bigg )
}
\end{defn}

\begin{notn}
For $x \in \Sigma^\star$,  the equivalence class of $x$ is  $[x]$.
\end{notn}

It is easy to see   that $\sim_{N}$ is right invariant, $i.e.$ 
\cgather{
x \sim_{N} y \Rightarrow \forall z \in \Sigma^\star, xz \sim_{N} yz 
}
A right-invariant equivalence on $\Sigma^\star$ always induces an automaton structure; and hence the probabilistic Nerode relation induces  a probabilistic automaton: states are equivalence classes of $\sim_{N}$, and the transition structure arises as follows: For states $q_i,q_j$, and  $x \in \Sigma^\star$,
\begin{gather}
([x]=q ) \wedge ([x \sigma ] = q' 
 )\Rightarrow q \xrightarrow{\sigma} q'
\end{gather}
Before formalizing the above construction, we introduce the notion of probabilistic automata with initial, but no final, states.


\begin{defn}[Initial-Marked PFSA]\label{defpfsa} An initial marked probabilistic finite state automaton (a Initial-Marked PFSA)   is a quintuple $(Q,\Sigma,\delta,\pitilde,q_0)$, where $Q$ is a finite  state set, $\Sigma$ is the alphabet,  $\delta:Q \times \Sigma \rightarrow Q$ is the  state transition function,  $\pitilde : Q \times \Sigma \rightarrow [0,1]$  specifies the conditional symbol-generation probabilities, and $q_0\in Q$ is the initial state.  $\delta$ and $\pitilde$ are recursively extended to arbitrary  $y=\sigma x \in \Sigma^\star$ as follows:
\cgather{
\forall q \in Q, \delta(q,\lambda) = q\\
\delta(q,\sigma x) = \delta(\delta(q,\sigma),x)\\
\forall q \in Q, \pitilde(q,\lambda) = 1\\
\pitilde(q,\sigma x) = \pitilde(q,\sigma)\pitilde(\delta(q,\sigma),x)
}
Additionally, we impose  that for  distinct states $q_i,q_j \in Q$, there exists a string $x \in \Sigma^\star$, such that $\delta(q_i,x) = q_j$, and $\pitilde(q_i,x) > 0$.
\end{defn}

Note that the probability of the null word is unity from each state.

If the current state and  the next symbol is  specified, our next  state is fixed; similar to Probabilistic Deterministic Automata~\cite{Gavalda06}. However, unlike the latter, we lack final states in the model. Additionally, we assume our graphs to be strongly connected.

Later we will remove initial state dependence using ergodicity. 
Next we formalize how  a PFSA arises  from  a  QSP.

\begin{lem}[PFSA Generator]\label{lemPFSAgen}
Every Initial-Marked PFSA $G=(Q,\Sigma,\delta,\pitilde,q_0)$ induces a unique probability measure $\mu_G$ on the measurable space  $(\Sigma^\omega,\mathfrak{B})$.
\end{lem}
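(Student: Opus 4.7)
The plan is to construct $\mu_G$ explicitly on the generating family of cylinder sets $\{x\Sigma^\omega : x \in \Sigma^\star\}$ and then invoke Carath\'eodory's extension theorem to lift it uniquely to all of $\mathfrak{B}$. Concretely, I would define
\begin{equation}
\mu_G(x\Sigma^\omega) \triangleq \pitilde(q_0, x) \qquad \text{for every } x \in \Sigma^\star,
\end{equation}
with $\mu_G(\lambda\Sigma^\omega) = \pitilde(q_0,\lambda) = 1$ handling the total-mass condition. Since the cylinder sets (together with $\varnothing$) form a semi-ring whose generated $\sigma$-algebra is exactly $\mathfrak{B}$, it suffices to check that this set-function is a genuine pre-measure there.

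The core algebraic check is finite additivity at each refinement step: for any $x \in \Sigma^\star$, the cylinder $x\Sigma^\omega$ is the disjoint union $\bigsqcup_{\sigma \in \Sigma} x\sigma\Sigma^\omega$, so I must verify
\begin{equation}
\pitilde(q_0, x) \;=\; \sum_{\sigma \in \Sigma} \pitilde(q_0, x\sigma).
\end{equation}
This follows from the recursive extension in Definition~\ref{defpfsa}, because $\pitilde(q_0, x\sigma) = \pitilde(q_0,x)\,\pitilde(\delta(q_0,x),\sigma)$, combined with the requirement (implicit in the stochastic interpretation of $\pitilde$) that the one-step conditional probabilities $\pitilde(q,\cdot)$ sum to unity at every state $q \in Q$. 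Iterating this identity yields finite additivity on any finite disjoint union of cylinders obtained by common-prefix refinement, and a standard bookkeeping argument extends it to arbitrary disjoint unions of cylinders in the algebra.

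The part I expect to be the main obstacle is upgrading finite additivity to countable additivity on the cylinder algebra, as required by Carath\'eodory. The clean way is to establish continuity from above at $\varnothing$: if $\{A_n\}$ is a decreasing sequence of cylinder-algebra sets with $\bigcap_n A_n = \varnothing$, then $\mu_G(A_n)\to 0$. I would exploit the compactness of $\Sigma^\omega$ under the product topology (with $\Sigma$ discrete and finite) together with the fact that each cylinder $x\Sigma^\omega$ is clopen and compact; a decreasing sequence of nonempty compact sets has nonempty intersection, so $\bigcap_n A_n = \varnothing$ forces $A_n = \varnothing$ eventually, trivially giving $\mu_G(A_n)\to 0$. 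This routes around any measure-theoretic subtlety and yields countable additivity for free.

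With countable additivity in hand, Carath\'eodory's extension theorem produces a unique measure $\mu_G$ on $\sigma(\text{cylinders}) = \mathfrak{B}$ extending the pre-measure, and $\mu_G(\Sigma^\omega) = 1$ by construction. Uniqueness is automatic from the $\pi$-$\lambda$ theorem because the cylinders are a $\pi$-system generating $\mathfrak{B}$ and any two probability measures agreeing on them must coincide on $\mathfrak{B}$. This completes the correspondence $G \mapsto \mu_G$ and sets up the converse direction (building a PFSA from a QSP via the Nerode quotient of Definition~\ref{defnerode}) that the paper presumably establishes next.
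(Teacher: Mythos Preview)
Your proposal is correct and follows the same underlying strategy as the paper: define $\mu_G$ on cylinder sets via $\mu_G(x\Sigma^\omega)=\pitilde(q_0,x)$ and extend to $\mathfrak{B}$. The paper's proof, however, is a bare sketch---it simply defines the set function on cylinders, declares countable additivity ``immediate,'' and checks $\mu_G(\Sigma^\omega)=1$, without invoking Carath\'eodory, the compactness argument, or the $\pi$-$\lambda$ uniqueness step. Your version is substantially more rigorous: the compactness-of-$\Sigma^\omega$ route to continuity at $\varnothing$ is exactly the right way to justify what the paper asserts, and your explicit verification of the refinement identity $\pitilde(q_0,x)=\sum_\sigma \pitilde(q_0,x\sigma)$ fills a gap the paper leaves implicit. (Incidentally, the paper writes $\delta(q_0,x)$ where it clearly means $\pitilde(q_0,x)$; your formulation is the intended one.)
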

\begin{proof}
Define   set function $\mu_G$ on the measurable space  $(\Sigma^\omega,\mathfrak{B})$:
\cgather{
\mu_G(\varnothing) \triangleq 0\\
\forall x \in \Sigma^\star, \mu_G(x\Sigma^\omega) \triangleq \delta(q_0,x)\\
\forall x,y \in \Sigma^\star, \mu_G(\{x,y\}\Sigma^\omega) \triangleq \mu_G(x\Sigma^\omega)+ \mu_G(y\Sigma^\omega)
}
Countable additivity of  $\mu_G$ is immediate, and  (See Definition~\ref{defpfsa}):
\cgather{
\mu_G(\Sigma^\omega) = \mu_G(\lambda\Sigma^\omega) = \delta(q_0, \lambda) = 1
}
implying that $(\Sigma^\omega,\mathfrak{B}, \mu_G)$ is a probability space.
\end{proof}

We refer to $(\Sigma^\omega,\mathfrak{B}, \mu_G)$ as the probability space generated by the Initial-Marked PFSA $G$. 

\begin{lem}[Probability Space To PFSA]\label{lemPROB2PFSA}
If the probabilistic Nerode relation corresponding to a  probability space $(\Sigma^\omega,\mathfrak{B}, \mu)$ has a finite index, then the latter has an  initial-marked PFSA generator.
\end{lem}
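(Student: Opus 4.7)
The strategy is the standard Myhill--Nerode-style construction, adapted to the probabilistic setting: I would use the (finitely many) equivalence classes of $\sim_N$ as the states of the candidate PFSA, lift the symbol-extension map to a transition function, and read off conditional symbol probabilities directly from $\mu$. Concretely, set $Q = \Sigma^\star / {\sim_N}$, take the initial state $q_0 = [\lambda]$, define $\delta([x],\sigma) = [x\sigma]$, and define $\pitilde([x],\sigma) = Pr(x\sigma)/Pr(x)$ whenever $Pr(x)>0$ (with the degenerate case handled separately, see below). Finiteness of $Q$ is given by hypothesis.

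\textbf{Well-definedness.} The first order of business is to show that $\delta$ and $\pitilde$ do not depend on the representative. For $\delta$ this is immediate from right-invariance of $\sim_N$, which I would verify in a one-line check directly from Definition~\ref{defnerode} (if $x \sim_N y$ and $z \in \Sigma^\star$, then $xz \sim_N yz$ by substituting $\sigma z'$ for $z$ in the defining quantifier). For $\pitilde$ the definition of $\sim_N$ says precisely that $Pr(x\sigma)/Pr(x) = Pr(y\sigma)/Pr(y)$ whenever $[x]=[y]$ and both denominators are nonzero. The recursive extensions of $\delta$ and $\pitilde$ to $\Sigma^\star$ are then forced, and one verifies the multiplicative identity $\pitilde([x],\sigma z) = \pitilde([x],\sigma)\pitilde([x\sigma],z)$ by telescoping ratios of $\mu$-probabilities.

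\textbf{Agreement of measures.} Once $G = (Q,\Sigma,\delta,\pitilde,q_0)$ is in hand, Lemma~\ref{lemPFSAgen} produces a measure $\mu_G$ on $(\Sigma^\omega,\mathfrak{B})$. To finish I would show $\mu_G = \mu$ on the generating family $\{x\Sigma^\omega : x \in \Sigma^\star\}$ and invoke Carathéodory-style uniqueness on the $\sigma$-algebra $\mathfrak{B}$. The generator check reduces to proving $\mu_G(x\Sigma^\omega) = Pr(x)$ by induction on $|x|$: the base case is $\mu_G(\lambda\Sigma^\omega)=1=Pr(\lambda)$, and the inductive step writes $Pr(x\sigma) = Pr(x)\cdot\pitilde([x],\sigma)$, which is exactly how $\pitilde$ was defined, matching the recursive definition of $\mu_G$ in Lemma~\ref{lemPFSAgen}.

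\textbf{Main obstacles.} Two subtleties will require care. First, the definition of initial-marked PFSA demands a strong-connectivity-like condition (every state reachable from every other with positive probability). For representatives $x$ with $Pr(x) = 0$, the ratio defining $\pitilde$ is ambiguous, and these may correspond to ``unreachable'' equivalence classes. I would dispose of this by discarding any class containing only zero-probability strings (they carry no $\mu$-mass, so their removal does not affect $\mu_G$) and then using strict stationarity together with ergodicity inherited from the QSP setup (Definition~\ref{defQSP} and Lemma~\ref{QSPtoProb}) to argue that among the surviving classes every one is reachable from $[\lambda]$ and from every other surviving class with positive probability. Second, finite index of $\sim_N$ is essential: without it the construction yields an infinite-state object, so one must be careful that finiteness of $Q$ is used only where stated and not smuggled in elsewhere. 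I expect the reachability/pruning step to be the main technical nuisance; the rest is bookkeeping.
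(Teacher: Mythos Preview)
Your proposal is correct and follows essentially the same Myhill--Nerode-style construction as the paper: states are Nerode classes, $q_0=[\lambda]$, $\delta([x],\sigma)=[x\sigma]$, and $\pitilde([x],\sigma)=Pr(x\sigma)/Pr(x)$, with generation then invoked via Lemma~\ref{lemPFSAgen}. If anything, you are more careful than the paper, which does not explicitly verify well-definedness of $\delta$ and $\pitilde$, does not address the strong-connectivity requirement or zero-probability classes at all, and asserts the measure agreement without spelling out the inductive telescoping argument you sketch.
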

\begin{proof}
Let  $Q$ be the set of equivalence classes of the probabilistic Nerode relation (Definition~\ref{defnerode}),  and define functions $\delta:Q \times \Sigma \rightarrow Q$, $\pitilde:Q \times \Sigma \rightarrow [0,1]$ as:
\calign{
& \delta([x],\sigma) = [x\sigma]\\
& \pitilde([x],\sigma) = \frac{Pr(x'\sigma)}{Pr(x')} \textrm{ for any choice of } x' \in [x] \label{eqpit}
}
where we  extend $\delta,\pitilde$  recursively  to   $y=\sigma x \in \Sigma^\star$ as 
\cgather{\delta(q,\sigma x) = \delta(\delta(q,\sigma),x)\\\pitilde(q,\sigma x) = \pitilde(q,\sigma)\pitilde(\delta(q,\sigma),x)}
For verifying the null-word probability, choose a $x \in \Sigma^\star$ such that $[x] = q$ for some $q \in Q$. Then, from  Eq.~\eqref{eqpit}, we have:
\cgather{
 \displaystyle \pitilde(q,\lambda)= \frac{Pr(x'\lambda)}{Pr(x')} \textrm{ for any  } x' \in [x] \Rightarrow \pitilde(q,\lambda)
=  \frac{Pr(x')}{Pr(x')} = 1
}
Finite index of $\sim_{N}$ implies  $\vert Q\vert < \infty$, and hence denoting  $[\lambda]$ as $q_0$, we conclude:  $G=(Q,\Sigma,\delta,\pitilde,q_0)$ is an Initial-Marked PFSA.  Lemma~\ref{lemPFSAgen} implies that $G$ generates $(\Sigma^\omega,\mathfrak{B}, \mu)$,  which completes the proof.
\end{proof}

The above construction yields a \textit{minimal realization} for the Initial-Marked PFSA, unique up to state renaming. 


\begin{lem}[QSP to PFSA]\label{lemQSP2PFSA}
Any QSP  with a finite index Nerode equivalence is generated by an Initial-Marked PFSA.
\end{lem}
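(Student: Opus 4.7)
The plan is to chain together the two preceding lemmas: Lemma~\ref{QSPtoProb} converts the QSP into a probability space on $(\Sigma^\omega,\mathfrak{B})$, and Lemma~\ref{lemPROB2PFSA} converts that probability space, provided the Nerode relation has finite index, into an Initial-Marked PFSA. The only substantive thing to verify is that the probabilistic Nerode relation introduced in Definition~\ref{defnerode} (which is phrased in terms of $Pr(\cdot)$) coincides, as a relation on $\Sigma^\star$, with the relation defined using the ergodic-stationary limit frequencies of the QSP. Once this compatibility is established, finite index of the Nerode relation associated with the QSP is exactly the hypothesis needed to invoke Lemma~\ref{lemPROB2PFSA}.

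Step 1: Given a QSP $\mathcal{H}$, invoke Lemma~\ref{QSPtoProb} to obtain the induced probability space $(\Sigma^\omega,\mathfrak{B},\mu)$. Note that $\mu$ is defined on cylinder sets via the limiting relative frequencies of initial occurrences, which is well-defined by ergodicity and stationarity, and extended to $\mathfrak{B}$ by countable additivity. Write $Pr(x)\triangleq\mu(x\Sigma^\omega)$ as in the notation already fixed.

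Step 2: Observe that the probabilistic Nerode relation $\sim_N$ in Definition~\ref{defnerode} is stated purely in terms of the quantities $Pr(x)$, which are derived from $\mu$. Therefore the equivalence classes of $\sim_N$ computed from the QSP coincide with those computed from the probability space of Step~1. By assumption, this relation has finite index, so the hypothesis of Lemma~\ref{lemPROB2PFSA} is met.

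Step 3: Apply Lemma~\ref{lemPROB2PFSA} to $(\Sigma^\omega,\mathfrak{B},\mu)$ to obtain an Initial-Marked PFSA $G=(Q,\Sigma,\delta,\pitilde,q_0)$ whose induced measure $\mu_G$ (as constructed in Lemma~\ref{lemPFSAgen}) coincides with $\mu$ on every cylinder $x\Sigma^\omega$, hence on all of $\mathfrak{B}$ by the uniqueness of the Carathéodory extension. Since $\mu$ is exactly the law of the QSP $\mathcal{H}$, $G$ generates $\mathcal{H}$, completing the proof.

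The main obstacle, such as it is, is a conceptual rather than a technical one: carefully identifying ``a QSP is generated by a PFSA'' with ``the probability measure the QSP induces on $(\Sigma^\omega,\mathfrak{B})$ equals the one induced by the PFSA,'' and verifying that the Nerode relation is an invariant of this measure (so it does not matter whether one speaks of the Nerode relation of the process or of the measure). A small bookkeeping point is that the construction of $\mu$ in Lemma~\ref{QSPtoProb} uses frequencies over multiple realizations, while downstream we want a single-realization statement; here ergodicity is precisely what allows the pathwise empirical frequencies to agree almost surely with $\mu$, so no additional work is required beyond citing the standing assumption in Definition~\ref{defQSP}.
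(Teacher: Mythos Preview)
Your proposal is correct and follows exactly the paper's approach: the paper's proof is a single sentence stating that the result follows immediately from Lemma~\ref{QSPtoProb} (QSP to probability space) and Lemma~\ref{lemPROB2PFSA} (probability space to PFSA generator), which is precisely the chaining you carry out in Steps~1--3. Your additional remarks on the Nerode relation being a measure invariant and on the role of ergodicity are valid clarifications, but they go beyond what the paper itself spells out.
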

\begin{proof}
Follows immediately from Lemma~\ref{QSPtoProb} (QSP to Probability Space) and Lemma~\ref{lemPROB2PFSA} (Probability Space to PFSA generator).
\end{proof}

\subsection{Canonical Representations}
 We have defined a QSP as both ergodic and stationary, whereas the Initial-Marked PFSAs have a designated initial state. Next we introduce  canonical representations to remove initial-state dependence. We use $\Pitilde$ to denote the matrix representation of  $\pitilde$, $i.e.$, $\Pitilde_{ij} = \pitilde(q_i,\sigma_j)$,  $q_i \in Q, \sigma_j \in \Sigma$. We need the notion of transformation matrices $\Gamma_\sigma$.

\begin{defn}[Transformation Matrices]\label{defGamma}
 For an initial-marked PFSA $G=(Q,\Sigma,\delta,\pitilde,q_0)$, the symbol-specific transformation matrices $\Gamma_\sigma \in \{0,1\}^{\vert Q \vert \times \vert Q \vert}$ are:
\cgather{
\Gamma_\sigma \big \vert_{ij} = \begin{cases}
                                 \pitilde(q_i,\sigma), & \textrm{if } \delta(q_i,\sigma) = q_j \\
				 0, & \textrm{otherwise}
                                \end{cases}
}
\end{defn}
Transformation matrices have a single non-zero entry per row, reflecting our generation rule that given a state and a generated symbol, the next state is fixed. 

First, we note that, given an initial-marked PFSA $G$,  we can associate a probability distribution $\wp_x$ over the states of $G$  for each $x \in \Sigma^\star$ in the following sense:
if $x=\sigma_{r_1}\cdots \sigma_{r_m} \in \Sigma^\star$, then we have:
\cgather{
\wp_x = \wp_{\sigma_{r_1}\cdots \sigma_{r_m}} = \underbrace{\frac{1}{\vert \vert \wp_\lambda \prod_{j=1}^m \Gamma_{\sigma_{r_j}}  \vert \vert_1 }}_{\textrm{Normalizing factor}}\wp_\lambda \prod_{j=1}^m \Gamma_{\sigma_{r_j}}
}
where $\wp_\lambda$ is the stationary distribution over the states of $G$.  Note that there may exist more than one string that leads to a distribution $\wp_x$, beginning from the stationary distribution $\wp_\lambda$. Thus, $\wp_x$   is an equivalence class of strings, $i.e.$, $x$ is  not unique. 
\begin{defn}[Canonical Representation]\label{defcanon}

 An  initial-marked PFSA $G=(Q,\Sigma,\delta,\pitilde,q_0)$ uniquely induces a canonical representation $(Q^C,\Sigma,\delta^C,\pitilde^C)$, where $Q^C$ is  a subset of the  set of probability distributions over  $Q$, and  $\delta^C: Q^C \times \Sigma \rightarrow Q^C$,  $\pitilde^C: Q^C \times \Sigma \rightarrow [0,1]$ are constructed  as follows:
\begin{enumerate}
 \item Construct the stationary distribution on $Q$ using the transition probabilities of the Markov Chain induced by $G$, and include this as the first element $\wp_\lambda$ of $Q^C$. Note that the transition matrix for $G$ is the row-stochastic matrix $M \in [0,1]^{\vert Q \vert \times \vert Q \vert}$, with
$
M_{ij} = \sum_{\sigma: \delta(q_i,\sigma)=q_j}\pitilde(q_i,\sigma)
$, and hence $\wp_\lambda$ satisfies:
\cgather{
\wp_\lambda M = \wp_\lambda
}
\item Define  $\delta^C$ and $\pitilde^C$ recursively:
\calign{
&\delta^C(\wp_x, \sigma) = \frac{1}{\vert \vert \wp_x \Gamma_\sigma \vert \vert_1}\wp_x \Gamma_\sigma \triangleq \wp_{x\sigma}\\
&\pitilde^C(\wp_x,\sigma) = \wp_x \Pitilde
}
\end{enumerate}
\end{defn}
For a QSP $\mathcal{H}$, the canonical representation is denoted as $\mathcal{C}_\mathcal{H}$.
\begin{lem}[Properties of Canonical Representation]\label{lemcanonstruc}
 Given an initial-marked PFSA $G=(Q,\Sigma,\delta,\pitilde,q_0)$:
\begin{enumerate}
\item The canonical representation is independent of the initial state.
\item The canonical representation $(Q^C,\Sigma,\delta^C,\pitilde^C)$ contains a copy of $G$ in the sense that there exists a set of states $Q' \subset Q^C$, such that there exists a one-to-one map $\zeta:Q \rightarrow Q'$, with:
\cgather{\forall q \in Q, \forall \sigma \in \Sigma, \left \{ \begin{array}{l}
\pitilde(q,\sigma) = \pitilde^C(\zeta(q),\sigma) \\
\delta(q,\sigma) = \delta^C(\zeta(q),\sigma)\end{array}\right.
}
\item If during the  construction (beginning with $\wp_\lambda$) we  encounter $\wp_x = \zeta(q)$ for some $x \in \Sigma^\star$,  $q \in Q$ and any map $\zeta$ as defined in (2), then we stay within the graph of the copy of the  initial-marked PFSA  for all right extensions of $x$.
\end{enumerate}

\end{lem}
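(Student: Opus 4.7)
The three claims have rather different flavors: part (1) is essentially a bookkeeping observation, part (3) is a short matrix calculation, and part (2) is where the real content sits. I plan to dispatch (1) and (3) first, then use the structural fact from (3) to drive (2).

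\textbf{Plan for (1).} The construction of the canonical representation begins by choosing $\wp_\lambda$ as a stationary distribution of the row-stochastic matrix $M$ induced by $G$, then recursively unfolds $\delta^C$ and $\pitilde^C$ using only $\wp_\lambda$, the transformation matrices $\Gamma_\sigma$, and $\Pitilde$. Because the strong connectivity imposed in Definition~\ref{defpfsa} makes the underlying Markov chain irreducible on a finite state space, the Perron--Frobenius theorem for stochastic matrices yields a unique stationary distribution. None of $M$, $\wp_\lambda$, $\Gamma_\sigma$, $\Pitilde$ reference $q_0$, so the resulting tuple $(Q^C,\Sigma,\delta^C,\pitilde^C)$ is independent of the initial state.

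\textbf{Plan for (3).} The key observation is that each $\Gamma_\sigma$ has at most one nonzero entry per row, by determinism of $\delta$: row $i$ contains $\pitilde(q_i,\sigma)$ in column $\delta(q_i,\sigma)$ and zeros elsewhere. Writing $e_i$ for the delta distribution concentrated on $q_i$, this gives
\begin{equation*}
e_i\,\Gamma_\sigma \;=\; \pitilde(q_i,\sigma)\,e_{\delta(q_i,\sigma)},
\end{equation*}
which after normalization is $e_{\delta(q_i,\sigma)}$. Hence $\delta^C$ maps delta distributions to delta distributions. An induction on the length of a right extension $y \in \Sigma^\star$ then shows that once $\wp_x = \zeta(q) = e_j$ for some $q = q_j$, every $\wp_{xy}$ remains a delta distribution $e_k$, i.e., lies in $\zeta(Q)$, which is exactly claim (3).

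\textbf{Plan for (2), and the main obstacle.} Define $\zeta(q_i)\triangleq e_i$. Injectivity is immediate. The two required identities $\pitilde^C(\zeta(q),\sigma)=\pitilde(q,\sigma)$ and $\delta^C(\zeta(q),\sigma)=\zeta(\delta(q,\sigma))$ reduce, respectively, to $e_i\Pitilde\big|_\sigma = \pitilde(q_i,\sigma)$ and the calculation already done for (3). The nontrivial step is verifying $\zeta(Q)\subseteq Q^C$: one must show that the delta distributions are actually present in $Q^C$, not merely compatible with $\delta^C$. I expect this to be the main obstacle, because starting from $\wp_\lambda$ alone, the iterates $\wp_\lambda\prod\Gamma_{\sigma_{r_j}}$ need not concentrate on a single state (e.g., for symmetric automata with non-synchronizing alphabets). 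The cleanest way forward is to interpret $Q^C$ as the closure, under $\delta^C$, of every distribution that can arise as $\wp_x$ for some starting condition consistent with Lemma~\ref{lemPFSAgen} — in particular the initial-state delta $e_{q_0}$, which is inherited from $G$. Once $e_{q_0}\in Q^C$, the strong-connectivity hypothesis on $G$ guarantees that for every $q_j\in Q$ there exists $x\in\Sigma^\star$ with $\delta(q_0,x)=q_j$ and $\pitilde(q_0,x)>0$, so $\delta^C(e_{q_0},x)=e_j$ by (3). Hence all $e_j$ lie in $Q^C$, giving $Q'=\zeta(Q)\subset Q^C$.
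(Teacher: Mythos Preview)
Your argument tracks the paper's very closely: both identify $\zeta(q_i)$ with the delta distribution $e_i$, both appeal to irreducibility/ergodicity for (1), and both rely on the single-nonzero-row structure of $\Gamma_\sigma$ for (3). Your write-up is in fact considerably more careful than the paper's, which dispatches (2) in one sentence by simply \emph{declaring} that the states of $G$ ``may themselves be seen as degenerate distributions over $Q$'' and defining the set $\mathcal{E}=\{e^i\}$, and dispatches (3) with the phrase ``follows from the strong connectivity of $G$.''

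The one substantive divergence is your handling of the containment $\zeta(Q)\subset Q^C$. You are right that this is the real obstacle, and right that for non-synchronizable automata the orbit of $\wp_\lambda$ under the $\Gamma_\sigma$ need never hit a delta exactly. The paper's proof does \emph{not} address this either; it simply asserts the identification, and then immediately after the lemma remarks that one only ``encounter[s] a state arbitrarily close to some element in $\mathcal{E}$'' and introduces $\epsilon$-synchronizing strings to cope. So the paper tacitly treats (2) as a statement about the embedding of $G$ into the ambient space of distributions, not about literal membership in the reachable set from $\wp_\lambda$. Your proposed fix --- seeding $Q^C$ with $e_{q_0}$ and then using strong connectivity to pull in every $e_j$ --- is a clean way to make the statement literally true, but it amounts to a mild redefinition of $Q^C$ relative to Definition~\ref{defcanon}; be explicit that you are doing this, and note that the resulting $Q^C$ is still independent of which $q_0$ is chosen (by strong connectivity), so (1) survives.
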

\begin{proof}
(1) follows  the ergodicity of QSPs, which makes $\wp_\lambda$ independent of the initial state in the initial-marked PFSA.

(2) The canonical representation  subsumes the initial-marked representation in the  sense that the states of the latter may themselves be seen as degenerate distributions over $Q$, $i.e.$, by letting 
\cgather{\label{eqE}
\mathcal{E}=\big \{ e^i \in [0 \ 1]^{\vert Q \vert } , i= 1,\cdots, \vert Q \vert \big \}
}
  denote  the set of distributions satisfying:
\cgather{
e^i\vert_j = \begin{cases}
              1, & \textrm{if } i=j\\
              0, & \textrm{otherwise}
             \end{cases}
}
(3) follows from the strong connectivity of $G$.
\end{proof}



Lemma~\ref{lemcanonstruc} implies that initial states are unimportant;   we may denote the initial-marked PFSA induced by a QSP $\mathcal{H}$, with the initial marking removed, as $\mathcal{P}_\mathcal{H}$, and refer to it simply as a ``PFSA''.  States in $\mathcal{P}_\mathcal{H}$ are representable as  states in $\mathcal{C}_\mathcal{H}$ as elements of $\mathcal{E}$. Next we show that  we always  encounter a state arbitrarily close to some element in  $\mathcal{E}$ (See Eq.~\eqref{eqE}) in the canonical construction starting from the stationary distribution $\wp_\lambda$ on the states of $\mathcal{P}_\mathcal{H}$.
%

Next we introduce the notion of $\epsilon$-synchronization of probabilistic automata (See Figure~\ref{figsync}). Synchronization of automata is fixing or determining the current state. Not all PFSAs are synchronizable, but all are $\epsilon$-synchronizable~\cite{CL12}.
\begin{figure}[t]
\vspace{-5pt}
\centering 
\newcommand{\Vdist}{.5in}
\def\COL{black!50}
\def\COLA{Red4!20!black}
\def\COLB{Red4!20!black}
\def\COLC{Red4!20!black}
\def\GCOL{gray!0}

\definecolor{blcol}{RGB}{150,150,250}
\definecolor{murcol}{RGB}{255,150,150}
\definecolor{healcol}{RGB}{150,235,150}
\definecolor{colzeta}{RGB}{250,250,200}
\definecolor{leftpcol}{RGB}{200,255,200}
\definecolor{rightpcol}{RGB}{200,200,255}
\definecolor{boxcol}{RGB}{210,200,200}
\definecolor{linecol}{RGB}{200,180,180}
\definecolor{colplus}{RGB}{235,220,220}
\definecolor{colinv}{RGB}{200,220,255}
\definecolor{cof}{RGB}{219,144,71}
\definecolor{pur}{RGB}{200,200,200}
\definecolor{greeo}{RGB}{91,173,69}
\definecolor{greet}{RGB}{52,111,72}
 \definecolor{nodecol}{RGB}{180,180,220}
 \definecolor{nodeedge}{RGB}{140,148,155}
 \definecolor{nodeedgeb}{RGB}{240,148,155}
  \definecolor{nodecolb}{RGB}{220,180,180}
  \definecolor{nodecolc}{RGB}{180,220,180}
  \definecolor{nodecolcD}{RGB}{100,160,100}
  \definecolor{nodecolW}{RGB}{190,190,190}
  \definecolor{edgecol}{RGB}{60,60,80}
 \definecolor{nodecolD}{RGB}{140,140,180}
  \definecolor{nodecolbD}{RGB}{180,140,140}
\tikzset{srule/.style={ semithick, opacity=.8, Red1, text opacity=1, text=gray}}
\tikzset{axstyle/.style={ black, opacity=1,  thick, rounded corners=0pt}}
\tikzset{%
fshadow/.style={      preaction={
         fill=black,opacity=.2,
         path fading=circle with fuzzy edge 20 percent,
         transform canvas={xshift=1mm,yshift=-1mm}
       }}
}
\tikzset{oplus/.style={path picture={%
      \draw[black]
       (path picture bounding box.south) -- (path picture bounding box.north) 
       (path picture bounding box.west) -- (path picture bounding box.east);
      }}} 
\tikzset{%
  highlight/.style={draw=Red4,rectangle,rounded corners=1pt, opacity=1,fill=gray!10,thick,inner sep=-1.25pt,on background layer,fill opacity=.3}
}
\tikzset{%
  highlightg/.style={draw=DodgerBlue4,rectangle,rounded corners=1pt, opacity=1,fill=gray!10,thick,inner sep=-1.25pt,on background layer,fill opacity=.3}
}
%
\def\SCALE{1.5}
\newcommand{\Gone}{%
\begin{tikzpicture}[->,>=stealth',shorten >=1pt,auto,node distance=1.2cm,
                    semithick,scale=\SCALE,font=\bf\fontsize{6}{6}\selectfont]
  \tikzstyle{every state}=[fill=nodecol,draw=nodeedge,text=black,minimum size=3, text width=2,scale=\SCALE,fshadow]
  \node[state] (A)         []           {$\mspace{-4mu}q_0$};
  \node[state]         (B) [right of=A] {$\mspace{-4mu}q_1$};
  \path (A) edge   [draw=edgecol,bend left]           node {$\sigma_1\vert 0.15$} (B)
        (A) edge [draw=edgecol,in=120,out=60,loop,left] node [xshift=-.1in,yshift=-.1in]{$\sigma_0\vert 0.85$} (A)
        (B) edge [draw=edgecol,in=60,out=120,loop,right] node [xshift=.1in,yshift=-.1in]{$\sigma_0\vert 0.25$} (B)
            edge   [draw=edgecol,bend left]           node {$\sigma_1\vert 0.75$} (A);
\end{tikzpicture}
}
\newcommand{\Gtwo}{%
\begin{tikzpicture}[->,>=stealth',shorten >=1pt,auto,node distance=1.2cm,
                    semithick,scale=\SCALE,font=\bf\fontsize{6}{6}\selectfont]
  \tikzstyle{every state}=[fill=nodecolb,draw=nodeedgeb,text=black,minimum size=3, text width=2,scale=\SCALE,fshadow]
  \node[state] (A)         []           {$\mspace{-4mu}q_0$};
  \node[state]         (B) [right of=A] {$\mspace{-4mu}q_1$};
  \path (A) edge   [draw=edgecol,bend left]           node {$\sigma_1\vert 0.15$} (B)
        (A) edge [draw=edgecol,in=120,out=60,loop,left] node [xshift=-.1in,yshift=-.1in]{$\sigma_0\vert 0.85$} (A)
        (B) edge [draw=edgecol,in=60,out=120,loop,right] node [xshift=.1in,yshift=-.1in]{$\sigma_1\vert 0.75$} (B)
            edge   [draw=edgecol,bend left]           node {$\sigma_0\vert 0.25$} (A);
\end{tikzpicture}
}

\begin{tikzpicture}[scale=1.5,font=\bf \sffamily \fontsize{7}{7}\selectfont]

\node [] (A)  {\Gtwo};
\node [] (aa) at (A.south)  {(a) Synchronizable};

\node [anchor=west] (B) at ([xshift=.02in]A.east)  {\Gone};

\node [] (bb) at (B.south)  {(b) Non-synchronizable};

\end{tikzpicture}
\vspace{-10pt}

\captionN{{\bf Synchronizable and non-synchronizable machines.} Synchronization is determination of the current state from observed past symbols. Not all PFSAs are synchronizable, $e.g.$, while the top machine is synchronizable, the bottom one is not. Note that  a history of just one symbol suffices to determine the current state in the synchronizable machine (top), while no finite history can do the same in the non-synchronizable machine (bottom). A $\epsilon$-synchronizing string always exists~\cite{CL12} for a PFSA, which is not true for deterministic automata~\cite{BICP99},\cite{Ito84}.
}\label{figsync} 
\vspace{-15pt}

\end{figure}

\begin{defn}[$\epsilon$-synchronizing Strings]\label{defepsilonsynchro}
  A  string $x\in \Sigma^\star$ is $\epsilon$-synchronizing for a PFSA if:
\cgather{
\exists\bvec  \in \mathcal{E}, \vert \vert \wp_x -\bvec  \vert \vert_\infty \leqq \epsilon
}
\end{defn}

We next introduce the notion of symbolic derivatives:
Note that, PFSA states  are not  observable; we observe  symbols generated from hidden states. A symbolic derivative at a given string  specifies the distribution of the next symbol over the alphabet.

\begin{notn}
We denote the set of probability distributions 
over a finite  set of cardinality $k$ as $\mathscr{D}(k)$. 
\end{notn}

\begin{defn}[Symbolic Count Function]\label{defcount}
 For a string $s$ over  $\Sigma$, the count function $\#^s: \Sigma^\star \rightarrow \mathbb{N}\cup \{0\}$,  counts the number of times a particular substring occurs in $s$. The count is overlapping, $i.e.$, in a string $s=0001$, we count the number of occurrences of $00$s as $\underline{00}01$ and $0\underline{00}1$, implying $\#^s 00 =2$.
\end{defn}

\begin{defn}[Symbolic Derivative]\label{defsymderivative}
 For a string $s$  generated by a QSP over $\Sigma$, the symbolic derivative  $\phi^s:\Sigma^\star \rightarrow \mathscr{D}(\vert \Sigma\vert -1)$ is defined:
\vspace{-5pt}
\cgather{
\phi^s(x) \big \vert_i = \frac{\#^s x\sigma_i}{\sum_{\sigma_i \in \Sigma }\#^s x\sigma_i}
}
Thus,  $\forall x \in \Sigma^\star, \phi^s(x)$ is a probability distribution over $\Sigma$. $\phi^s(x)$ is referred to as the symbolic derivative at $x$.
\end{defn}

Note that  $\forall q_i \in Q$, $\pitilde$  induces a  probability distribution over $\Sigma$ as  $[\pitilde(q_i,\sigma_1), \cdots , \pitilde(q_i,\sigma_{\vert \Sigma \vert})]$. We denote this as $\pitilde(q_i,\cdot)$.

We next show that the symbolic derivative at $x$ can be used to estimate this distribution for $q_i = [x]$, provided $x$ is $\epsilon$-synchronizing.
\begin{prop}[$\epsilon$-Convergence]\label{thmsymderiv} If $x \in \Sigma^\star$ is $\epsilon$-synchronizing, then:
 \cgather{
\forall \epsilon > 0,  \lim_{\vert s \vert \rightarrow \infty}\vert \vert \phi^s(x) -\pitilde([x],\cdot)\vert \vert_\infty \leqq_{a.s} \epsilon\label{eqsync2} 
}
\end{prop}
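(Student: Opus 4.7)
The statement decomposes naturally into two ingredients: an ergodic-convergence step that identifies the limit of $\phi^s(x)$ with the canonical next-symbol probability $\wp_x \Pitilde$, and a geometric step that controls how close $\wp_x \Pitilde$ is to $\pitilde([x],\cdot)$ when $x$ is $\epsilon$-synchronizing. The plan is to assemble these two pieces and then invoke the triangle inequality.

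\textbf{Step 1: empirical-to-canonical convergence.} By Definition~\ref{defsymderivative}, $\phi^s(x)\vert_i = \#^s(x\sigma_i)/\sum_j \#^s(x\sigma_j)$. Since the QSP is stationary and ergodic (Definition~\ref{defQSP}), the Birkhoff ergodic theorem / strong law of large numbers applied to the indicator of the cylinder set $\{x\sigma_i\,\Sigma^\omega\}$ yields $\#^s(x\sigma_i)/\vert s\vert \to \mathit{Pr}(x\sigma_i)$ almost surely, and similarly $\#^s(x)/\vert s\vert \to \mathit{Pr}(x)$. Provided $\mathit{Pr}(x)>0$ (which is guaranteed since $x$ reaches a state in the minimal PFSA), the ratio gives
\cgather{
\lim_{\vert s\vert \to \infty}\phi^s(x)\big\vert_i =_{a.s.} \frac{\mathit{Pr}(x\sigma_i)}{\mathit{Pr}(x)}.
}
By Lemma~\ref{lemPROB2PFSA} and the canonical construction in Definition~\ref{defcanon}, this limit equals $\pitilde^C(\wp_x,\sigma_i) = (\wp_x \Pitilde)_i$. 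Hence $\phi^s(x) \to \wp_x \Pitilde$ almost surely in $\infty$-norm.

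\textbf{Step 2: synchronization controls the geometric error.} By Definition~\ref{defepsilonsynchro}, since $x$ is $\epsilon$-synchronizing, there exists a basis vector $e \in \mathcal{E}$ with $\Vert \wp_x - e \Vert_\infty \leqq \epsilon$; by Lemma~\ref{lemcanonstruc}(2) the state $e$ corresponds to $[x]$, so $e\Pitilde = \pitilde([x],\cdot)$. Both $\wp_x$ and $e$ are probability distributions on $Q$, so a standard computation gives $\Vert \wp_x - e\Vert_1 \leqq 2\epsilon$ (the coordinate of $e$ where it equals $1$ forces $(\wp_x)_{[x]} \geqq 1-\epsilon$, and the remaining mass is at most $\epsilon$). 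Writing $v = \wp_x - e$, and using that every entry of $\Pitilde$ lies in $[0,1]$,
\cgather{
\bigl\vert (v\Pitilde)_j \bigr\vert \leqq \sum_i \vert v_i\vert\, \Pitilde_{ij} \leqq \Vert v\Vert_1 \leqq 2\epsilon.
}
Therefore $\Vert \wp_x \Pitilde - \pitilde([x],\cdot)\Vert_\infty \leqq 2\epsilon$.

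\textbf{Step 3: assembly and constants.} Combining the two steps via the triangle inequality yields
\cgather{
\limsup_{\vert s\vert\to\infty}\Vert \phi^s(x) - \pitilde([x],\cdot)\Vert_\infty \leqq_{a.s.} 2\epsilon.
}
Since the statement of the proposition quantifies over all $\epsilon > 0$, the factor of two is absorbed by taking the synchronizing string to be $(\epsilon/2)$-synchronizing, yielding the bound as written. \textbf{The main obstacle} is the first step: one must confirm that Birkhoff's ergodic theorem applies to the conditional frequencies simultaneously for all $\sigma_i$ with $\mathit{Pr}(x)>0$, and that the canonical construction of Definition~\ref{defcanon} correctly identifies the limiting conditional distribution as $\wp_x \Pitilde$ independently of the initial state. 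The second step is a clean linear-algebra bound once $\epsilon$-synchronization is expressed in the canonical representation.
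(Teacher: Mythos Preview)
Your approach is essentially the paper's: both identify the almost-sure limit of $\phi^s(x)$ as $\wp_x\Pitilde$ (which the paper writes as the mixture $A_\alpha=\alpha\,\pitilde(q,\cdot)+(1-\alpha)\sum_j u_j\,\pitilde(q_j,\cdot)$) via an empirical-convergence theorem (Glivenko--Cantelli there, Birkhoff/SLLN in yours), and then bound the distance from this limit to $\pitilde([x],\cdot)$ using the $\epsilon$-synchronization hypothesis and the triangle inequality. The one discrepancy is your factor of two in Step~2 and the attempted repair in Step~3: you cannot ``absorb'' the constant by passing to an $(\epsilon/2)$-synchronizing string, since the hypothesis fixes $x$ as $\epsilon$-synchronizing for the given $\epsilon$; instead, the paper writes $\wp_x=\alpha e+(1-\alpha)u$ with $\alpha\geq 1-\epsilon$ and obtains $\Vert A_\alpha-\pitilde(q,\cdot)\Vert_\infty=(1-\alpha)\Vert\pitilde(q,\cdot)-\sum_j u_j\pitilde(q_j,\cdot)\Vert_\infty\leq(1-\alpha)\leq\epsilon$ directly (equivalently, your argument sharpens to $\epsilon$ once you separate the single negative coordinate of $v=\wp_x-e$ from the nonnegative ones, each part having total mass at most $\epsilon$).
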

\begin{proof}
 We use the Glivenko-Cantelli theorem~\cite{Fl70} on  uniform convergence of empirical distributions. Since $x$ is $\epsilon$-synchronizing:
\cgather{
\forall \epsilon > 0, \exists\bvec  \in \mathcal{E}, \vert \vert \wp_x -\bvec  \vert \vert_\infty \leqq \epsilon
}
Recall that $\mathcal{E}=\big \{ e^i \in [0 \ 1]^{\vert Q \vert } , i= 1,\cdots, \vert Q \vert \big \}$  denotes  the set of distributions over $Q$ satisfying:
\cgather{
e^i\vert_j = \begin{cases}
              1, & \textrm{if } i=j\\
              0, & \textrm{otherwise}
             \end{cases}
}
Let  $x$ $\epsilon$-synchronize to $q \in Q$. Thus, when we encounter $x$ while 
reading  $s$, we are guaranteed to be distributed over  $Q$ as $\wp_x$, where:
\cgather{
 \vert \vert \wp_x -\bvec  \vert \vert_\infty \leqq \epsilon
\Rightarrow \wp_x = \alpha \bvec +(1-\alpha) u
}
where $ \alpha \in [0,1]$, $\alpha \geqq 1 - \epsilon$, and $u$ is an unknown distribution over $Q$. Defining $A_\alpha = \alpha \pitilde(q,\cdot) + (1-\alpha) \sum_{j=1}^{\vert Q\vert}u_j \pitilde(q_j,\cdot)
$, we note that $\phi^s(x)$ is an empirical distribution for $A_\alpha$, implying:
 \caligns{
 &\lim_{\vert s \vert \rightarrow \infty}\vert \vert \phi^s(x) - \pitilde(q,\cdot) \vert \vert_\infty 
 =\lim_{\vert s \vert \rightarrow \infty}
\vert \vert \phi^s(x) - A_\alpha + A_\alpha 
- \pitilde(q,\cdot) \vert \vert_\infty \\
 & \leqq \overbrace{\lim_{\vert s \vert \rightarrow \infty}   \vert \vert \phi^s(x) -A_\alpha \vert\vert_\infty }^{\textrm{\scriptsize  a.s. $0$ by Glivenko-Cantelli}}  +  \lim_{\vert s \vert \rightarrow \infty}\vert \vert  A_\alpha - \pitilde(q,\cdot) \vert \vert_\infty \\
&\leqq_{a.s}  (1-\alpha) \left ( \vert \vert\pitilde(q,\cdot) - u \vert\vert_\infty  \right ) \leqq_{a.s}  \epsilon
 }
This completes the proof.
\end{proof}

The notion of canonical representations, along with that of the symbolic derivatives will be used to establish the correctness of the 
stream operations in Section~\ref{sec-stropr}. Note that the canonical representation is free from the notion  of initial states; intuitively this translates to our ability to carry out the stream operations (Table 1, main text) without knowledge of the initial states of the hidden models. The notion of the symbolic derivatives, along with Proposition~\ref{thmsymderiv} establishes that if the derivatives computed from two sufficiently long observed sequences $s_1,s_2$ match up closely, then the underlying generative PFSAs are also close. The detailed formulation in \cite{CL12} proves that we can conclude that the distance between these underlying models is small with a high probability (in the PAC sense).

We also need to briefly describe the concept of a metric on the space of probabilistic automata established in \cite{CL12}.
\begin{prop}[Metric For Probabilistic Automata]\label{propmetric}
 For two strongly connected PFSAs $G_1,G_2$, denote the symbolic derivative at $x\in \Sigma^\star$ as $\phi^s_{G_1}(x)$ and $\phi^s_{G_2}(x)$ respectively. Then, 
\cgathers{
\Theta(G_1,G_2) =\frac{\vert \Sigma \vert -1}{\vert \Sigma\vert} \lim_{\begin{subarray}{c}\vert s_1\vert \rightarrow \infty, \\ \vert s_2\vert \rightarrow \infty\end{subarray}}
\sum_{x \in \Sigma^\star}\left \{  \frac{\vert \vert \phi^{s_1}_{G_1}(x) - \phi^{s_2}_{G_2}(x)  \vert \vert_\infty}{\vert \Sigma \vert^{2\vert x \vert}}\right \} 
}
defines  a metric on the space of probabilistic automata on $\Sigma$.
\end{prop}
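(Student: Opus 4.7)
The plan is to verify the four defining properties of a metric (finiteness/convergence, non-negativity, symmetry, triangle inequality, and identity of indiscernibles) in order of difficulty, reserving the most work for the ``$\Theta(G_1,G_2)=0 \iff G_1=G_2$'' clause.

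First, I would show that the defining series converges so that $\Theta$ is well-defined. Since $\phi^{s_1}_{G_1}(x)$ and $\phi^{s_2}_{G_2}(x)$ are both probability distributions on $\Sigma$, we have the uniform bound $\Vert \phi^{s_1}_{G_1}(x)-\phi^{s_2}_{G_2}(x)\Vert_\infty \leqq 1$. Grouping strings by length gives
\begin{equation*}
\sum_{x\in\Sigma^\star}\frac{\Vert \phi^{s_1}_{G_1}(x)-\phi^{s_2}_{G_2}(x)\Vert_\infty}{\vert\Sigma\vert^{2\vert x\vert}} \leqq \sum_{n=0}^{\infty}\vert\Sigma\vert^{n}\cdot\vert\Sigma\vert^{-2n}=\frac{\vert\Sigma\vert}{\vert\Sigma\vert-1},
\end{equation*}
so $\Theta(G_1,G_2)\leqq 1$. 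The prefactor $(\vert\Sigma\vert-1)/\vert\Sigma\vert$ is chosen exactly to normalize this geometric bound, and dominated convergence lets us exchange the sum with the $\vert s_1\vert,\vert s_2\vert\to\infty$ limit termwise.

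Non-negativity and symmetry are immediate from the corresponding properties of $\Vert\cdot\Vert_\infty$. The triangle inequality follows by inserting an intermediate PFSA $G_2$ via $\Vert \phi^{s_1}_{G_1}(x)-\phi^{s_3}_{G_3}(x)\Vert_\infty\leqq \Vert \phi^{s_1}_{G_1}(x)-\phi^{s_2}_{G_2}(x)\Vert_\infty+\Vert \phi^{s_2}_{G_2}(x)-\phi^{s_3}_{G_3}(x)\Vert_\infty$ termwise, then summing against the weights $\vert\Sigma\vert^{-2\vert x\vert}$ and passing to the limit.

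The identity of indiscernibles is the real work, and I expect this to be the main obstacle because it requires translating pointwise convergence of empirical derivatives into equality of the underlying canonical PFSAs. For the forward direction, if $G_1$ and $G_2$ coincide as canonical representations (in the sense of Definition~\ref{defcanon}), then for every $\epsilon>0$ and every $\epsilon$-synchronizing string $x$, Proposition~\ref{thmsymderiv} yields $\Vert \phi^{s_k}_{G_k}(x)-\tilde{\pi}([x],\cdot)\Vert_\infty \leqq_{\mathrm{a.s.}}\epsilon$ for $k=1,2$ with the \emph{same} target distribution $\tilde{\pi}([x],\cdot)$; letting $\epsilon\downarrow 0$ over a sequence of ever finer synchronizing extensions of each $x$, each term in the series vanishes a.s., and dominated convergence forces $\Theta(G_1,G_2)=0$.

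For the converse, suppose $\Theta(G_1,G_2)=0$. Then a.s.\ every term vanishes, so $\lim_{\vert s_k\vert\to\infty}\phi^{s_1}_{G_1}(x)=\lim_{\vert s_k\vert\to\infty}\phi^{s_2}_{G_2}(x)$ for every $x\in\Sigma^\star$. By Proposition~\ref{thmsymderiv} these common limits are $\tilde{\pi}^{(1)}(\wp_x^{(1)},\cdot)=\tilde{\pi}^{(2)}(\wp_x^{(2)},\cdot)$, so the two QSPs assign identical conditional next-symbol distributions to every history $x$. This matches all finite-dimensional cylinder probabilities, hence identifies the two induced probability measures on $(\Sigma^\omega,\mathfrak{B})$, so the probabilistic Nerode equivalences of Definition~\ref{defnerode} coincide. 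The uniqueness-up-to-renaming of the canonical PFSA extracted by Lemma~\ref{lemPROB2PFSA} from that measure then gives $G_1=G_2$, which closes the argument. The delicate point throughout is keeping the almost-sure qualifier consistent across the countable family of terms, which I would handle by taking a single probability-one event on which Glivenko–Cantelli holds simultaneously for all $x$ (a countable union of null sets).
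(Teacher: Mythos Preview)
The paper does not actually prove this proposition in detail: its entire proof reads ``The above metric is slightly different from the one introduced in \cite{CL12}. However, the proof of the metric properties follows almost identically.'' So there is no in-paper argument to compare against; your explicit verification of the four metric axioms is substantially more than what the paper provides, and the overall structure (termwise bounds for convergence, symmetry, and the triangle inequality; a separate argument for identity of indiscernibles via the induced measures on $(\Sigma^\omega,\mathfrak{B})$) is the natural route and is correct.

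One small wrinkle in your forward direction of identity of indiscernibles: the detour through $\epsilon$-synchronizing strings and Proposition~\ref{thmsymderiv} is unnecessary and slightly muddled. You need each term $\Vert\phi^{s_1}_{G_1}(x)-\phi^{s_2}_{G_2}(x)\Vert_\infty$ to vanish for \emph{every} fixed $x\in\Sigma^\star$, not just for $\epsilon$-synchronizing ones, and ``letting $\epsilon\downarrow 0$ over synchronizing extensions of $x$'' does not address the term at $x$ itself. The cleaner observation is that by ergodicity and stationarity of the QSP, $\lim_{\vert s\vert\to\infty}\phi^s(x)$ exists almost surely and equals the deterministic conditional next-symbol distribution $\wp_x\widetilde{\Pi}$ in the canonical representation (Definition~\ref{defcanon}), regardless of whether $x$ is synchronizing. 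If $G_1=G_2$ as canonical PFSAs, these limits coincide for every $x$, so each term is zero. Your converse argument (equal limits for all $x$ forces equal cylinder probabilities, hence equal measures, hence equal Nerode classes and equal canonical PFSAs via Lemma~\ref{lemPROB2PFSA}) is fine as stated.
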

\begin{IEEEproof}
 The above metric is slightly different from the one introduced in \cite{CL12}. However, the proof of the metric properties follows almost identically.
\end{IEEEproof}
The following result is immediate, and justifies the expression given in Table 1 of main text (Row 4).

\begin{cor}[For Proposition~\ref{propmetric}]\label{cormetricbounds}
 For any two PFSA $G_1,G_2$:
\cgather{
0 \leqq \Theta(G_1,G_2) \leqq 1
}
\end{cor}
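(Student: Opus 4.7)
The statement splits into a trivial lower bound and a short upper-bound computation. The lower bound $\Theta(G_1,G_2)\geqq 0$ is immediate from the defining expression: the prefactor $(|\Sigma|-1)/|\Sigma|$ is nonnegative, the infinity norm $\|\phi^{s_1}_{G_1}(x)-\phi^{s_2}_{G_2}(x)\|_\infty$ is nonnegative for every $x\in\Sigma^\star$, and the weights $|\Sigma|^{-2|x|}$ are positive. The existence of the limit as $|s_1|,|s_2|\to\infty$ is guaranteed by Proposition~\ref{propmetric}, so the limit of a nonnegative quantity is nonnegative.

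For the upper bound, I would argue in two independent steps and then combine them. The first step is a pointwise bound on the integrand: by Definition~\ref{defsymderivative}, $\phi^{s}(x)\in\mathscr{D}(|\Sigma|-1)$ for every string $s$ and every $x\in\Sigma^\star$, so both $\phi^{s_1}_{G_1}(x)$ and $\phi^{s_2}_{G_2}(x)$ are probability vectors in $[0,1]^{|\Sigma|}$. Hence each coordinate of their difference lies in $[-1,1]$, giving
\[
\|\phi^{s_1}_{G_1}(x)-\phi^{s_2}_{G_2}(x)\|_\infty \;\leqq\; 1 \qquad\text{for every } x\in\Sigma^\star.
\]
The second step is summing the weights. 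There are exactly $|\Sigma|^k$ strings of length $k$ in $\Sigma^\star$ (counting $\lambda$ at $k=0$), so
\[
\sum_{x\in\Sigma^\star}\frac{1}{|\Sigma|^{2|x|}} \;=\; \sum_{k=0}^{\infty}\frac{|\Sigma|^k}{|\Sigma|^{2k}} \;=\; \sum_{k=0}^{\infty}|\Sigma|^{-k} \;=\; \frac{|\Sigma|}{|\Sigma|-1},
\]
where convergence of the geometric series uses $|\Sigma|\geqq 2$ (the case $|\Sigma|=1$ is degenerate, since then $\phi^s(x)$ is the constant vector $1$ and the prefactor vanishes, giving $\Theta\equiv 0$).

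Combining the two estimates before passing to the limit and then multiplying by the prefactor yields
\[
\Theta(G_1,G_2)\;\leqq\;\frac{|\Sigma|-1}{|\Sigma|}\cdot 1\cdot\frac{|\Sigma|}{|\Sigma|-1}\;=\;1,
\]
as desired. There is no real obstacle in this argument; the only point that deserves a sentence of justification is the legitimacy of interchanging the limit $|s_1|,|s_2|\to\infty$ with the infinite summation over $x\in\Sigma^\star$. This is handled by dominated convergence: the integrand is bounded by the summable weight sequence $|\Sigma|^{-2|x|}\cdot\frac{|\Sigma|-1}{|\Sigma|}$, whose sum is the finite number computed above, so the limit and the sum commute and all of the inequalities propagate to $\Theta$.
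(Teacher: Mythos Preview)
Your proof is correct and follows essentially the same route as the paper: bound each $\|\phi^{s_1}_{G_1}(x)-\phi^{s_2}_{G_2}(x)\|_\infty$ by $1$, sum the weights as a geometric series to get $|\Sigma|/(|\Sigma|-1)$, and cancel against the prefactor. The paper's lower bound simply invokes $G_1=G_2$ rather than termwise nonnegativity, and it does not bother to mention the dominated-convergence point you raise, but these are cosmetic differences.
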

\begin{IEEEproof}
 The lower bound is immediate by setting $G_1=G_2$. For the upper bound, we note:
\cgathers{
\Theta(G_1,G_2) =\frac{\vert \Sigma \vert -1}{\vert \Sigma\vert} \lim_{\begin{subarray}{c}\vert s_1\vert \rightarrow \infty, \\ \vert s_2\vert \rightarrow \infty\end{subarray}}
\sum_{x \in \Sigma^\star}\left \{  \frac{\vert \vert \phi^{s_1}_{G_1}(x) - \phi^{s_2}_{G_2}(x)  \vert \vert_\infty}{\vert \Sigma \vert^{2\vert x \vert}}\right \} \\
\leqq \frac{\vert \Sigma \vert -1}{\vert \Sigma\vert} \lim_{\begin{subarray}{c}\vert s_1\vert \rightarrow \infty, \\ \vert s_2\vert \rightarrow \infty\end{subarray}}
\sum_{x \in \Sigma^\star}\left \{  \frac{\max \left( \vert \vert \phi^{s_1}_{G_1}(x) - \phi^{s_2}_{G_2}(x)  \vert \vert_\infty\right )}{\vert \Sigma \vert^{2\vert x \vert}}\right \}\\
= \frac{\vert \Sigma \vert -1}{\vert \Sigma\vert} 
\sum_{x \in \Sigma^\star} \frac{1}{\vert \Sigma \vert^{2\vert x \vert}} = \frac{\vert \Sigma \vert -1}{\vert \Sigma\vert} 
\sum_{k=0}^\infty \frac{\vert \Sigma \vert^{k}}{\vert \Sigma \vert^{2k}}= \frac{\vert \Sigma \vert -1}{\vert \Sigma\vert} 
\sum_{k=0}^\infty \frac{1}{\vert \Sigma \vert^{k}}
}
where the last two steps follow from the fact that there are $\vert \Sigma \vert^{\vert x \vert }$ strings of length $\vert x \vert$, which allows us to replace the sum over $x \in \Sigma^\star$ to over $k=\vert x \vert$. Finally, noting that 
$\frac{\vert \Sigma \vert -1}{\vert \Sigma\vert} 
\sum_{k=0}^\infty \frac{1}{\vert \Sigma \vert^{k}} =1$, completes the proof.
\end{IEEEproof}

Next, we elucidate the relevant algebraic structures on the space of PFSA.

\section{Algebraic Structures On PFSA Space}\label{sec-algstruct}

 The material presented in this section  is reproduced from the first author's previous work~\cite{CWR10}, and is included here for the sake of completeness.

The formulation in Section~\ref{sec-pfsa} indicates that a symbolic dynamical process has a probabilistic finite state description if and only if the corresponding Nerode equivalence has a finite index.
%
\begin{defn}[Space of PFSA]\label{def:allmacandmeas}
The space of all PFSA over a given symbol alphabet is denoted by $\mathscr{A}$ and the space of all probability measures  $p$ inducing a finite-index probabilistic Nerode equivalence on the corresponding measure space $(\Sigma^{\omega},\mathfrak{B}_\Sigma,p)$
is denoted by $\Q$.
\end{defn}
As expected, there is a close relationship between $\mathscr{A}$ and $\Q$, which is made explicit in the sequel.
\begin{defn}[PFSA Map $\mathds{H}$] \label{def:H}
Let $p\in\mathscr{P}$ and $G=(Q,\Sigma,\delta,q_0,\widetilde{\Pi})\in\mathscr{A}$. The map $\mathds{H}:\mathscr{A}\rightarrow\mathscr{P}$ is defined as $\mathds{H}(G)=p$ such that the following condition is satisfied:
\cgather{ \label{equ:H}
\forall x = \sigma_1\cdots\sigma_r\in\Sigma^\star, \\ 
p(x)=\widetilde{\Pi}(q_0,\sigma_1) \prod_{k=1}^{r-1} \widetilde{\Pi}(\delta^\star(q_0,\sigma_1\cdots\sigma_k),\sigma_{k+1})
}
 where
$r\in\mathbb{N}$, the set of positive integers.
\end{defn}
\begin{defn}[Right Inverse $\mathds{H}_{-1}$]
The right inverse of the map $\mathds{H}$ is denoted by $\mathds{H}_{-1}:\mathscr{P}\rightarrow\mathscr{A}$ such that
\cgather{
 \forall p \in \Q, \ \mathds{H}(\mathds{H}_{-1} (p)) = p
}
\end{defn}
An explicit construction of  $\mathds{H}_{-1}$ is reported in~\cite{CR08};  we only require that such a map exists.
\begin{defn}[Perfect Encoding]\label{def:encodingM}
Given an alphabet $\Sigma$, a PFSA $G=(Q,\Sigma,\delta,q_0,\widetilde{\Pi})$ is said to be a perfect encoding of the measure space $(\Sigma^{\omega},\mathscr{B}_{\Sigma},p)$ if $p=\mathds{H}(G)$.
\end{defn}
There are possibly many PFSA realizations that encode the same probability measure on $\mathscr{B}_{\Sigma}$ due to existence of non-minimal realizations and state relabeling; neither of them affect the underlying encoded measure. From this perspective, a notion of PFSA equivalence is introduced as follows:
\begin{defn}[PFSA Equivalence]\label{def:equPFSA}
 Two PFSA $G_1$ and $G_2$ are defined to be equivalent if $\mathds{H}(G_1) = \mathds{H}(G_2)$. In this case, we say $G_1 = G_2$.
\end{defn}
%
In the sequel, a PFSA $G$ implies the equivalence class of $G$, $i.e.$,   $\{P\in\mathscr{A}:\mathds{H}(P) = \mathds{H}(G)\}$. 
%
\begin{defn}[Structural Equivalence] \label{def:sameStruct}
Two PFSA $G_i=(Q_i,\Sigma,\delta_i,q_0^i,\widetilde{\Pi}_i)\in\mathscr{A}$, $i=1,2$, are defined to have the equivalent (or identical) structure if $Q_1=Q_2,q_0^1 = q_0^2$ and $\delta_1(q,\sigma)=\delta_2(q,\sigma),\forall q\in Q_1 \ \forall\sigma\in\Sigma$.
\end{defn}
\begin{defn}[Synchronous Composition of PFSA] \label{def:product}
The binary operation of synchronous composition of two PFSA $G_i=(Q_i,\Sigma,\delta,q_0^{(i)},\widetilde{\Pi}_i)\in\mathscr{A}$ where $i=1,2$, denoted by $\sync: \mathscr{A}\times\mathscr{A}\rightarrow\mathscr{A}$ is defined as 
\cgather{
G_1 \sync G_2 = \left  (Q_1\times Q_2,\Sigma,\delta',(q_0^{(1)},q_0^{(2)}),\widetilde{\Pi}'\right ) 
\intertext{where $\delta'$ and $\widetilde{\Pi}'$ is computed as follows:}
 \forall q_i\in Q_1,q_j\in Q_2,\sigma\in\Sigma,  \left \{ \begin{array}{l} \delta' ( (q_i,q_j),\sigma) = \left( \delta_1(q_i,\sigma),\delta_2(q_j,\sigma) \right)\\
 \widetilde{\Pi}'((q_i,q_j),\sigma)=\widetilde{\Pi}_1 (q_i,\sigma)
\end{array}\right.
}
\end{defn}
In general,  $\sync$ $i.e.$  synchronous composition is non-commutative.
\begin{prop}[Synchronous Composition of PFSA] \label{pro:productSameMeasure}
Let $G_1, G_2\in\mathscr{A}$. Then,
 $\mathds{H}(G_1)=\mathds{H}(G_1 \sync G_2)$ and therefore $G_1=G_1 \sync G_2$ in the sense of Definition~\ref{def:equPFSA}.
\end{prop}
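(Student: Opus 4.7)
The plan is to verify the claim directly from Definition~\ref{def:H}, exploiting the key asymmetry built into synchronous composition: the output probability in $G_1 \sync G_2$ is copied verbatim from $G_1$, since $\widetilde{\Pi}'((q_i,q_j),\sigma) = \widetilde{\Pi}_1(q_i,\sigma)$. Intuitively, the second factor $G_2$ only ``rides along'' in the state coordinate and contributes nothing to the symbol-emission probabilities, so the induced measure on $\mathscr{B}_\Sigma$ must agree with that of $G_1$ alone.

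First I would fix an arbitrary $x = \sigma_1\cdots\sigma_r\in\Sigma^\star$ and expand $\mathds{H}(G_1 \sync G_2)(x)$ using the explicit product formula of Definition~\ref{def:H}. The $k$-th factor in that product is $\widetilde{\Pi}'\bigl(\delta'^{\,\star}((q_0^{(1)},q_0^{(2)}),\sigma_1\cdots\sigma_k),\, \sigma_{k+1}\bigr)$, where $\delta'^{\,\star}$ denotes the string extension of $\delta'$ (as introduced recursively in Definition~\ref{defpfsa}). Because $\delta'$ is componentwise on $Q_1\times Q_2$, a straightforward induction on $k$ gives
\begin{equation*}
\delta'^{\,\star}\bigl((q_0^{(1)},q_0^{(2)}),\sigma_1\cdots\sigma_k\bigr) \;=\; \Bigl(\delta_1^{\,\star}(q_0^{(1)},\sigma_1\cdots\sigma_k),\ \delta_2^{\,\star}(q_0^{(2)},\sigma_1\cdots\sigma_k)\Bigr).
\end{equation*}
Substituting into the definition of $\widetilde{\Pi}'$ collapses the $k$-th factor to $\widetilde{\Pi}_1\bigl(\delta_1^{\,\star}(q_0^{(1)},\sigma_1\cdots\sigma_k),\,\sigma_{k+1}\bigr)$, which is precisely the $k$-th factor of $\mathds{H}(G_1)(x)$. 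Taking the product over $k$ yields $\mathds{H}(G_1\sync G_2)(x) = \mathds{H}(G_1)(x)$ for every cylinder generator, and hence equality of the induced measures on all of $\mathscr{B}_\Sigma$.

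With the measure equality in hand, the second conclusion $G_1 = G_1 \sync G_2$ is immediate from Definition~\ref{def:equPFSA}, which declares two PFSA equivalent precisely when their $\mathds{H}$-images coincide. There is no real obstacle here; the only routine checks are (i) the componentwise induction for $\delta'^{\,\star}$, and (ii) confirming that $G_1 \sync G_2$ is a well-formed element of $\mathscr{A}$---in particular that $\widetilde{\Pi}'$ inherits the row-stochastic property from $\widetilde{\Pi}_1$, which follows because $\sum_{\sigma}\widetilde{\Pi}'((q_i,q_j),\sigma) = \sum_\sigma \widetilde{\Pi}_1(q_i,\sigma) = 1$ independently of $q_j$.
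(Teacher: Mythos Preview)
Your argument is correct and complete. The paper itself does not give a proof of this proposition at all; it simply cites Theorem~4.5 of~\cite{CR08}. Your direct unrolling of Definition~\ref{def:H}, together with the componentwise induction for $\delta'^{\,\star}$ and the observation that $\widetilde{\Pi}'$ ignores the second coordinate, is exactly the natural way to establish the claim and supplies the details the paper omits.
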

\begin{IEEEproof}
See Theorem 4.5 in~\cite{CR08}.
\end{IEEEproof}
\begin{figure*}[t]
\centering
 \input{Figures/figtwo_old.tex}
\captionN{Addition of arbitrary PFSAs with the same alphabet, using non-minimal realizations to equate structures (via synchronous composition) }\label{figS3}
\end{figure*}

Synchronous composition of PFSA allows transformation of PFSA with disparate structures to non-minimal descriptions that have the
same underlying graphs. This assertion is crucial for the development in the sequel, since any binary operation defined for two
PFSA with an identical structure can be extended to the general case on account of Definition~\ref{def:product} and Proposition~\ref{pro:productSameMeasure}.

Next we show that a restricted PFSA  subspace  can be assigned the algebraic structure of an Abelian group. We first construct the Abelian group on a subspace of probability measures, and then induce the group structure on this subspace of PFSA via the isomorphism between the two spaces.

\begin{defn}[Restricted PFSA Space]\label{defnA+}
Let $\mathscr{A}^+=\{ G=(Q,\Sigma,\delta,q_0,\widetilde{\Pi}): \widetilde{\Pi}(q,\sigma)>0 \ \forall q\in Q \ \forall \sigma\in\Sigma\}$ that is a proper subset of $\mathscr{A}$. It follows that the transition map of any PFSA in the subset $\mathscr{A}^+$ is a total function. We restrict the map $\mathds{H}:\mathscr{A}\rightarrow\mathscr{P}$ on a smaller domain $\mathscr{A}^+$, that is, $\mathds{H}^+:\mathscr{A}^+\rightarrow\mathscr{P}^+$, i.e., $\mathds{H}^+=\mathds{H}\arrowvert_{\mathscr{A}^+}$.
\end{defn}
\begin{defn}[Restricted Probability Measure]\label{defnP+}
Let $\mathscr{P}^+\triangleq\{p\in\mathscr{P}:p(x)\neq 0,\forall x\in\Sigma^\star\}$ that is a proper subset of $\mathscr{P}$. Each element of $\mathscr{P}^+$ is a probability measure that assigns a non-zero probability to each string on $\mathfrak{B}_\Sigma$.
Similar to Definition~\ref{defnA+}, we  restrict $\mathds{H}_{-1}$ on  $\mathscr{P}^+$, i.e., $\mathds{H}_{-1}^+=\mathds{H}_{-1}\arrowvert_{\mathscr{P}^+}$.
\end{defn}

Since we do not distinguish PFSA in the same equivalence class (See Definition~\ref{def:equPFSA}), we have the following result.
\begin{prop}[Isomorphism of $\mathds{H}^+$] \label{pro:isom}
The map $\mathds{H}^+$ is an isomorphism between the spaces $\mathscr{A}^+$ and $\Q^+$, and its inverse is $\mathds{H}^+_{-1}$.
\end{prop}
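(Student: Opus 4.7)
\begin{IEEEproof}[Proof Plan]
The plan is to unpack the claim into four separate verifications: that $\mathds{H}^+$ lands inside $\mathscr{P}^+$, that $\mathds{H}^+_{-1}$ lands inside $\mathscr{A}^+$, and that the two compositions are identities on the appropriate spaces (the identity on $\mathscr{A}^+$ being understood modulo PFSA equivalence per Definition~\ref{def:equPFSA}). Since no algebraic structure has yet been imposed on these spaces, ``isomorphism'' here should be read as ``bijection of sets'' (or more precisely, of PFSA equivalence classes on the $\mathscr{A}^+$ side); once this is established, the subsequent group structure is transported along $\mathds{H}^+$ later in the paper.

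First I would check that $\mathds{H}^+$ maps $\mathscr{A}^+$ into $\mathscr{P}^+$. Let $G=(Q,\Sigma,\delta,q_0,\widetilde{\Pi})\in\mathscr{A}^+$. By Definition~\ref{defnA+}, $\widetilde{\Pi}(q,\sigma)>0$ for every $q\in Q$ and every $\sigma\in\Sigma$. The product formula \eqref{equ:H} then expresses $p(x)$ as a finite product of strictly positive factors for every $x\in\Sigma^\star$, so $p(x)>0$ and $p\in\mathscr{P}^+$. Symmetrically, given $p\in\mathscr{P}^+$, the construction in Lemma~\ref{lemPROB2PFSA} yields $\widetilde{\pi}([x],\sigma)=Pr(x\sigma)/Pr(x)$, which is strictly positive because both numerator and denominator are positive; hence $\mathds{H}_{-1}(p)\in\mathscr{A}^+$, so $\mathds{H}^+_{-1}$ maps $\mathscr{P}^+$ into $\mathscr{A}^+$.

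Next I would dispatch the easy composition: for $p\in\mathscr{P}^+$ the defining property of the right inverse gives $\mathds{H}^+(\mathds{H}^+_{-1}(p))=\mathds{H}(\mathds{H}_{-1}(p))=p$, so $\mathds{H}^+\circ\mathds{H}^+_{-1}=\mathrm{id}_{\mathscr{P}^+}$. For the other composition, let $G\in\mathscr{A}^+$ and set $p=\mathds{H}^+(G)\in\mathscr{P}^+$ and $G'=\mathds{H}^+_{-1}(p)\in\mathscr{A}^+$. Applying $\mathds{H}^+$ again, $\mathds{H}^+(G')=\mathds{H}(\mathds{H}_{-1}(p))=p=\mathds{H}^+(G)$. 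By Definition~\ref{def:equPFSA}, two PFSAs inducing the same probability measure are equal as PFSAs, whence $G'=G$ and $\mathds{H}^+_{-1}\circ\mathds{H}^+=\mathrm{id}_{\mathscr{A}^+}$.

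The only point that requires any care is the identification step in the last composition, since in general $\mathds{H}_{-1}$ returns a minimal realization (the Nerode quotient) while the original $G$ may be non-minimal. The restriction to $\mathscr{A}^+$ (all transition probabilities strictly positive) is what ensures no delicate boundary behavior: every string appears with positive probability, the Nerode relation is well-posed with no degenerate classes, and strong connectivity is preserved, so Lemma~\ref{lemPROB2PFSA} applies cleanly. Beyond that subtlety the proof is essentially a matter of chasing definitions; I would conclude by remarking that the isomorphism so established is precisely what will let us transport the forthcoming Abelian group structure from $\mathscr{P}^+$ to $\mathscr{A}^+$ without ambiguity.
\end{IEEEproof}
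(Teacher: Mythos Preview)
Your proof is correct and follows the same route the paper implicitly takes: the paper's own proof is the single line ``Immediate from preceding discussion,'' which refers to the sentence just before the proposition stating that PFSAs are identified within their equivalence classes (Definition~\ref{def:equPFSA}). Your write-up is simply a careful unpacking of that one-liner---checking that the restrictions land in the right codomains and that the right-inverse property plus the equivalence-class convention yields a two-sided inverse---so there is no methodological difference, only a difference in level of detail.
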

\begin{IEEEproof}
Immediate from preceding discussion.
\end{IEEEproof}

\begin{defn}[Abelian Operation on $\mathscr{P^+}$] \label{def:addition}
 The addition operation $\oplus: \mathscr{P^+} \times \mathscr{P^+} \rightarrow \mathscr{P^+}$ is defined by $p_3\triangleq p_1 \oplus p_2, \forall p_1,p_2 \in \mathscr{P^+}$ such that
\begin{enumerate}
  \item $p_3(\epsilon)=1$.
  \item $\forall x \in \Sigma^\star$ and $\tau\in\Sigma$, $\frac{p_3(x\tau)}{p_3(x)}=\frac{p_1(x\tau)p_2(x\tau)}{\sum_{\alpha\in\Sigma} p_1(x\alpha)p_2(x\alpha)}$
 \end{enumerate}
\end{defn}
$p_3$ is a well-defined probability measure on $\mathscr{P^+}$, since $ \forall x\in\Sigma^\star$:
\cgather{
\Sigma_{\tau\in\Sigma}p_3(x\tau)= \Sigma_{\tau\in\Sigma}\frac{p_1(x\tau)p_2(x\tau)}{\sum_{\alpha\in\Sigma} p_1(x\alpha)p_2(x\alpha)}p_3(x)= p_3(x)
}
\begin{prop}[abelian Group of PFSA]\label{prop:abelian}
 The algebra $(\mathscr{P^+},\oplus)$ forms an Abelian group.
\end{prop}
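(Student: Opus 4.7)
The plan is to verify each group axiom in turn, working directly with the definition of $\oplus$ in terms of conditional next-symbol distributions. The key observation that unifies everything is the explicit formula for iterated sums: if we unfold the definition, one finds that for any finite family $p_1,\dots,p_n\in\mathscr{P}^+$,
\begin{gather*}
\big( p_1\oplus\cdots\oplus p_n\big)(x\tau)\Big/\big( p_1\oplus\cdots\oplus p_n\big)(x)
=\frac{\prod_{i=1}^{n} p_i(x\tau)}{\sum_{\alpha\in\Sigma}\prod_{i=1}^{n} p_i(x\alpha)}.
\end{gather*}
This formula makes both commutativity and associativity completely transparent, and I would prove it first by induction on $n$, using only Definition~\ref{def:addition}. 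Commutativity then follows from the symmetry of products in the numerator and the symmetric sum in the denominator, and associativity follows because both $(p_1\oplus p_2)\oplus p_3$ and $p_1\oplus (p_2\oplus p_3)$ reduce to the $n=3$ case of the displayed identity.

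Next I would exhibit the identity element. The candidate is the flat white noise measure $W$ defined by $W(x)=\vert\Sigma\vert^{-\vert x\vert}$, equivalently the measure with uniform conditional next-symbol distribution $W(x\tau)/W(x)=1/\vert\Sigma\vert$ for every $x\in\Sigma^\star$ and $\tau\in\Sigma$. Plugging $p_2=W$ into Definition~\ref{def:addition} gives
\begin{gather*}
\frac{p_1(x\tau)W(x\tau)}{\sum_{\alpha}p_1(x\alpha)W(x\alpha)}
=\frac{p_1(x\tau)\cdot\vert\Sigma\vert^{-1}W(x)}{W(x)\vert\Sigma\vert^{-1}\sum_{\alpha}p_1(x\alpha)}
=\frac{p_1(x\tau)}{p_1(x)},
\end{gather*}
so $p_1\oplus W=p_1$. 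For the inverse, given $p\in\mathscr{P}^+$ I would define $p^{-1}$ through its conditional distributions by
\begin{gather*}
\frac{p^{-1}(x\tau)}{p^{-1}(x)}\triangleq \frac{1/p(x\tau)\cdot p(x)}{\sum_{\alpha}1/p(x\alpha)\cdot p(x)},
\end{gather*}
with $p^{-1}(\lambda)=1$; positivity of $p$ on every string (the defining property of $\mathscr{P}^+$) makes this well-defined and strictly positive. A direct computation shows that in $p\oplus p^{-1}$ the products $p(x\tau)\,p^{-1}(x\tau)$ are independent of $\tau$, so the resulting conditional distribution is uniform, i.e.\ $p\oplus p^{-1}=W$.

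The main obstacle is closure in $\mathscr{P}^+$: I must argue that $p_1\oplus p_2$ and $p^{-1}$ again lie in $\mathscr{P}^+$, which requires (i) that each defines a genuine probability measure on $(\Sigma^\omega,\mathfrak{B})$ and (ii) that the induced probabilistic Nerode equivalence has finite index. Part (i) is the routine check that $\sum_{\tau\in\Sigma}p_3(x\tau)=p_3(x)$, which follows directly from Definition~\ref{def:addition}, combined with Kolmogorov consistency to extend from cylinder sets to all of $\mathfrak{B}$; positivity on $\Sigma^\star$ is inherited from positivity of $p_1,p_2$. Part (ii) is the substantive point, and I would handle it by lifting to the PFSA side via the isomorphism $\mathds{H}^+$ of Proposition~\ref{pro:isom}: synchronous composition (Definition~\ref{def:product}) lets me assume $\mathds{H}_{-1}^+(p_1)$ and $\mathds{H}_{-1}^+(p_2)$ share a common underlying graph, on which the pointwise conditional probabilities prescribed above assemble into a PFSA in $\mathscr{A}^+$ whose induced measure is precisely $p_1\oplus p_2$; the analogous construction on a single PFSA gives an explicit finite-state realization of $p^{-1}$. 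Finite index of the Nerode equivalence of $p_1\oplus p_2$ and $p^{-1}$ is then immediate from the existence of these finite PFSA generators, completing the verification that $(\mathscr{P}^+,\oplus)$ is an Abelian group.
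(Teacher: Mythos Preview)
Your proof is correct and follows essentially the same blueprint as the paper: the identity element is the flat white noise measure, and the inverse is built from reciprocal conditional probabilities, exactly as in the paper's argument. The two differences are refinements rather than a new route: your iterated-product formula $\prod_i p_i(x\tau)\big/\sum_\alpha\prod_i p_i(x\alpha)$ packages associativity and commutativity more cleanly than the paper's direct expansion, and you are more careful about closure---the paper declares it ``obvious'' without addressing why the Nerode equivalence of $p_1\oplus p_2$ (or of $-p$) stays finite-index, whereas you supply that via the PFSA realization on the synchronously composed graph.
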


\begin{IEEEproof}
Closure property and commutativity of $(\mathscr{P^+},\oplus)$ are obvious. The associativity, existence of identity and existence of inverse element are established next.\\
\textit{(1) Associativity} $i.e.$ $(p_1\oplus p_2)\oplus p_3 = p_1\oplus (p_2\oplus p_3)$. Now, $\forall x\in\Sigma^\star,\tau\in\Sigma$, we have:
\mltlne{
    \frac{((p_1\oplus p_2)\oplus p_3)(x\tau)}{((p_1\oplus p_2)\oplus p_3)(x)}
     =
\frac{(p_1\oplus p_2)(x\tau)p_3(x\tau)}{\sum_{\beta\in\Sigma} (p_1\oplus p_2)(x\beta)p_3(x\beta)}
\\
 =
\frac{p_1(x\tau) (p_2\oplus p_3)(x\tau)}{\sum_{\beta\in\Sigma} p_1(x\beta)(p_2\oplus p_3)(x\beta)}     
= 
\frac{(p_1\oplus (p_2\oplus p_3))(x\tau)}{(p_1\oplus (p_2\oplus p_3))(x)}
}
\textit{ (2) Existence of identity}: Let us introduce a probability measure $\mathbf{i}_\circ$ of symbol
strings such that:
\cgather{\label{eqwn}
\forall x\in\Sigma^\star, \ \mathbf{i}_\circ(x)=\left(\frac{1}{|\Sigma|}\right)^{|x|}
}
where $|x|$ denotes the length of the string $x$. Then, $\forall\tau\in\Sigma$ that $\frac{\mathbf{i}_\circ(x\tau)}{\mathbf{i}_\circ(x)}=\frac{1}{|\Sigma|}$. For a measure $p\in\mathscr{P^+}$ and $\forall \tau\in \Sigma$,
\caligns{
\frac{(p\oplus \mathbf{i}_\circ)(x\tau)}{(p\oplus \mathbf{i}_\circ)(x)} & = \frac{p(x\tau)\mathbf{i}_\circ(x\tau)}{\sum_{\alpha\in\Sigma}p(x\alpha)\mathbf{i}_\circ(x\alpha)}
=\frac{p(x\tau)}{p(x)}
}
This implies that $p\oplus \mathbf{i}_\circ = \mathbf{i}_\circ\oplus p = p$ by
Definition \ref{def:addition} and by commutativity.  Therefore,
$\mathbf{i}_\circ$ is the identity of the monoid $(\mathscr{P}^+, \oplus)$. \\
\textit{(3) Existence of inverse}: $\forall p\in\mathscr{P^+}$, $\forall x\in\Sigma^\star$ and $\forall \tau\in\Sigma$, let $-p$ be defined by the following relations:
\cgather{
     (-p)(\epsilon)=1 \\ \frac{(-p)(x\tau)}{(-p)(x)}=\frac{p^{-1}(x\tau)}{\sum_{\alpha\in\Sigma}p^{-1}(x\alpha)}
    \intertext{Then, we have:}
\frac{(p\oplus (-p))(x\tau)}{(p\oplus (-p))(x)}
      =\frac{p(x\tau)(-p)(x\tau)}{\sum_{\alpha\in \Sigma}p(x\alpha)(-p)(x\alpha)}
    =\frac{1}{|\Sigma|}
}
This gives $p\oplus (-p)=\mathbf{i}_\circ$ which completes the proof.
\end{IEEEproof}
%
We denote  the zero-element $\mathbf{i}_\circ$ of the Abelian group $(\mathscr{P}^+, \oplus)$ as \emph{flat white noise (FWN)}. 
%
\vspace{0pt}
\begin{figure}[t]
\centering
 \definecolor{blcol}{RGB}{150,150,250}
\definecolor{murcol}{RGB}{255,150,150}
\definecolor{healcol}{RGB}{150,235,150}
\definecolor{colzeta}{RGB}{250,250,200}
\definecolor{leftpcol}{RGB}{200,255,200}
\definecolor{rightpcol}{RGB}{200,200,255}
\definecolor{boxcol}{RGB}{210,200,200}
\definecolor{linecol}{RGB}{200,180,180}
\definecolor{colplus}{RGB}{235,220,220}
\definecolor{colinv}{RGB}{200,220,255}
\definecolor{cof}{RGB}{219,144,71}
\definecolor{pur}{RGB}{200,200,200}
\definecolor{greeo}{RGB}{91,173,69}
\definecolor{greet}{RGB}{52,111,72}
 \definecolor{nodecol}{RGB}{180,180,220}
 \definecolor{nodeedge}{RGB}{240,248,255}
  \definecolor{nodecolb}{RGB}{220,180,180} 
  \definecolor{nodecolc}{RGB}{180,220,180}
  \definecolor{nodecolcD}{RGB}{100,160,100}
  \definecolor{nodecolW}{RGB}{190,190,190}
  \definecolor{edgecol}{RGB}{160,160,180}
 \definecolor{nodecolD}{RGB}{140,140,180}
  \definecolor{nodecolbD}{RGB}{180,140,140}
\tikzset{oplus/.style={path picture={%
      \draw[black]
       (path picture bounding box.south) -- (path picture bounding box.north) 
       (path picture bounding box.west) -- (path picture bounding box.east);
      }}} 
\tikzset{%
fshadow/.style={      preaction={
         fill=black,opacity=.1,
         path fading=circle with fuzzy edge 20 percent,
         transform canvas={xshift=1mm,yshift=-1mm}
       }}
}

\begin{tikzpicture}[->,>=stealth',shorten >=1pt,auto,node distance=1.2cm,
                    thick,scale=1.5,font=\bf\sffamily\fontsize{7}{7}\selectfont]
  \tikzstyle{every state}=[fill=nodecolW,draw=nodeedge,text=black,minimum size=5, text width=4, fshadow]

\node[state] (A)         at (0,0)           {$\mspace{-4mu}q_1$};
  \path  (A) edge [draw=edgecol,in=120,out=60,loop,above] node {$\sigma_0\vert 0.5$} (A)
        (A) edge [draw=edgecol,in=-60,out=-120,loop,below] node {$\sigma_1\vert 0.5$} (A);
\node [anchor=north] at ([yshift=-.3in]A.south) {\txt{(a) Zero PFSA \\for binary alphabet}};

   \node[state, anchor=west] (B)        at ([xshift=.65in]A.east)         {$\mspace{-4mu}q_1$};
  \path  (B) edge [draw=edgecol,in=10,out=70,loop,above] node {$\sigma_0\vert 1/3$} (B)
         (B) edge [draw=edgecol,in=110,out=170,loop,above] node {$\sigma_1\vert 1/3$} (B)
      (B) edge [draw=edgecol,in=-120,out=-60,loop,below] node {$\sigma_2\vert 1/3$} (B);
\node [anchor=north] at ([yshift=-.3in]B.south) {\txt{(b) Zero PFSA \\for trinary alphabet}};

\end{tikzpicture}
\captionN{Zero PFSAs for different alphabet sizes }\label{figS4}
\end{figure}

\subsection{Explicit Computation of the Abelian Operation $\oplus$} \label{sec:relationship}

The isomorphism between $\mathscr{P}^+$ and $\mathscr{A}^+$ (See Proposition~\ref{pro:isom})
induces the following Abelian operation on
$\mathscr{A}^+$.
\begin{defn}[Addition Operation on PFSA] \label{def:PFSAoperation}
Given any $G_1,G_2\in\mathscr{P}^+$, the addition operation $\pfsum :\mathscr{A}^+\times\mathscr{A}^+\rightarrow\mathscr{A}^+$ is defined as: $$G_1 \ \pfsum \ G_2=\mathds{H}_{-1}^+(\mathds{H}^+(G_1)\oplus\mathds{H}^+(G_2))$$
\end{defn}

If the summand PFSA have identical structure (i.e., their underlying graphs are identical), then the
explicit computation of this sum is stated as follows.
\begin{prop} [PFSA Addition]\label{pro:sameStructure}
 If two PFSA $G_1,G_2\in\mathscr{A}^+$ are of the same structure, i.e., $G_i=(Q,\Sigma,\delta,q_0,\widetilde{\Pi}_i),i=\{1,2\}$, then we have $G_1 \pfsum G_2 = (Q,\Sigma,\delta,q_0, \widetilde{\Pi})$ where
\cgather{ \widetilde{\Pi}(q,\sigma)=\frac{\widetilde{\Pi}_1(q,\sigma)\widetilde{\Pi}_2(q,\sigma)}{\sum_{\alpha\in\Sigma}\widetilde{\Pi}_1(q,\alpha)\widetilde{\Pi}_2(q,\alpha)}
}
\end{prop}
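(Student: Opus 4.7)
The plan is to exploit the isomorphism $\mathds{H}^+:\mathscr{A}^+\rightarrow\mathscr{P}^+$ established in Proposition~\ref{pro:isom}. Since $G_1\pfsum G_2$ is defined as $\mathds{H}_{-1}^+(\mathds{H}^+(G_1)\oplus\mathds{H}^+(G_2))$, it suffices to let $G=(Q,\Sigma,\delta,q_0,\widetilde{\Pi})$ denote the candidate PFSA with $\widetilde{\Pi}$ as given in the statement, set $p=\mathds{H}^+(G)$, $p_i=\mathds{H}^+(G_i)$ for $i=1,2$, and verify the two characterizing properties of $p_1\oplus p_2$ from Definition~\ref{def:addition}; injectivity of $\mathds{H}^+$ then finishes the argument.

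First I would check that $G\in\mathscr{A}^+$ is well-defined, i.e.\ that $\widetilde{\Pi}(q,\cdot)$ is a strictly positive probability vector over $\Sigma$ for every $q\in Q$. Positivity is immediate because $G_1,G_2\in\mathscr{A}^+$ forces both $\widetilde{\Pi}_1(q,\sigma)>0$ and $\widetilde{\Pi}_2(q,\sigma)>0$, so the denominator is strictly positive; the stochastic condition $\sum_{\tau\in\Sigma}\widetilde{\Pi}(q,\tau)=1$ falls out by construction, as the numerator summed over $\tau$ equals the common denominator. The identity $p(\epsilon)=1$ is automatic from Definition~\ref{def:H}.

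The central computation is the ratio identity. Because the three PFSA share structure $(Q,\Sigma,\delta,q_0)$, for any $x=\sigma_1\cdots\sigma_r\in\Sigma^\star$ and any $\tau\in\Sigma$, letting $q=\delta^\star(q_0,x)$, the product formula of Definition~\ref{def:H} yields $p_i(x\tau)=p_i(x)\,\widetilde{\Pi}_i(q,\tau)$ for $i=1,2$ and analogously for $p$. Plugging these into the target ratio gives
\begin{align*}
\frac{p_1(x\tau)p_2(x\tau)}{\sum_{\alpha\in\Sigma} p_1(x\alpha)p_2(x\alpha)}
&=\frac{p_1(x)p_2(x)\,\widetilde{\Pi}_1(q,\tau)\widetilde{\Pi}_2(q,\tau)}{p_1(x)p_2(x)\sum_{\alpha\in\Sigma}\widetilde{\Pi}_1(q,\alpha)\widetilde{\Pi}_2(q,\alpha)}\\
&=\widetilde{\Pi}(q,\tau)=\frac{p(x\tau)}{p(x)},
\end{align*}
which matches the second defining property of $\oplus$ exactly. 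Combined with $p(\epsilon)=1$, this shows $p=p_1\oplus p_2$, and applying $\mathds{H}_{-1}^+$ (an honest inverse on $\mathscr{P}^+$) gives $G=G_1\pfsum G_2$ up to the equivalence of Definition~\ref{def:equPFSA}.

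The only delicate point is conceptual rather than computational: one must be sure that the shared-structure hypothesis lets us evaluate $\widetilde{\Pi}_1$ and $\widetilde{\Pi}_2$ at the \emph{same} state $q=\delta^\star(q_0,x)$, which is precisely what Definition~\ref{def:sameStruct} grants. Outside this hypothesis the telescoping above breaks, which is why Proposition~\ref{pro:productSameMeasure} and the synchronous composition $\sync$ are invoked elsewhere to reduce the general case to common-structure non-minimal realizations; here that machinery is not needed, so the proof reduces to the ratio verification above.
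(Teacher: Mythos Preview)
Your proof is correct and follows essentially the same route as the paper: both arguments hinge on the identity $p_i(x\tau)/p_i(x)=\widetilde{\Pi}_i(\delta^\star(q_0,x),\tau)$ coming from Definition~\ref{def:H} and the shared-structure hypothesis, and then reduce the ratio defining $\oplus$ to the claimed formula by factoring out $p_1(x)p_2(x)$. The only difference is cosmetic---the paper computes the morph of $\mathds{H}_{-1}^+(p_1\oplus p_2)$ directly, whereas you build the candidate $G$ first and verify $\mathds{H}^+(G)=p_1\oplus p_2$; your version is slightly more explicit about well-definedness and the $p(\epsilon)=1$ check, but the core computation is identical.
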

\begin{IEEEproof}
 Let $p_i=\mathds{H}^+(G_i)$, $i=\{1,2\}$ and since $G_1,G_2$ have the same structure, we have from  Eq. \eqref{equ:H}:
\cgather{
\forall \sigma \in \Sigma, \forall x \ \mathrm{s.t.} \  \delta^\star(q_0,x) = q \in Q, \notag \\ \frac{p_i(x\sigma)}{p_i(x)}=\widetilde{\Pi}_i(\delta^\star(q_0,x),\sigma)=\widetilde{\Pi}_i(q,\sigma)
}
Now, by Definition \ref{def:addition} and Definition \ref{def:H},
\caligns{
& \quad \widetilde{\Pi}(q,\sigma)
=\frac{(p_1\oplus p_2)(x\sigma)}{(p_1\oplus p_2)(x)}
=\frac{p_1(x\sigma)p_2(x\sigma)}{\sum_{\alpha\in\Sigma}p_1(x\alpha)p_2(x\alpha)}\\ =&\frac{\frac{p_1(x\sigma)p_2(x\sigma)}{p_1(x)p_2(x)}}{\sum_{\alpha\in\Sigma}\frac{p_1(x\alpha)p_2(x\alpha)}{p_1(x)p_2(x)}}
=\frac{\widetilde{\Pi}_1(q,\sigma)\widetilde{\Pi}_2(q,\sigma)}{\sum_{\alpha\in\Sigma}\widetilde{\Pi}_1(q,\alpha)\widetilde{\Pi}_2(q,\alpha)}
}
\end{IEEEproof}
%
The extension to the general case is achieved by using synchronous composition of probabilistic machines.
\begin{prop}[PFSA Addition (General case)]\label{propPsum}
Given two PFSA $G_1, G_2\in\mathscr{A}^+$, the sum
$G_1 \pfsum  G_2$ is computed via Proposition  \ref{pro:sameStructure} and Definition  \ref{def:product} as follows:
\cgather{
        G_1  \pfsum  G_2 = (G_1 \sync G_2) \pfsum ( G_2 \sync G_1)
}
\end{prop}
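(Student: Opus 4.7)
The plan is to reduce the general case to the identical-structure case of Proposition~\ref{pro:sameStructure} by using synchronous composition as a structural equalizer, and then to invoke the isomorphism $\mathds{H}^+$ of Proposition~\ref{pro:isom} together with the measure-preservation property of $\sync$ established in Proposition~\ref{pro:productSameMeasure}.

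First, I would verify that $G_1 \sync G_2$ and $G_2 \sync G_1$ are structurally equivalent in the sense of Definition~\ref{def:sameStruct}. From Definition~\ref{def:product}, both have state set $Q_1 \times Q_2$ (after identifying the obvious swap of coordinate labels) and transition map $((q_i,q_j),\sigma) \mapsto (\delta_1(q_i,\sigma),\delta_2(q_j,\sigma))$, so their underlying graphs coincide; only the output probability assignments $\widetilde{\Pi}'$ differ (the former uses $\widetilde{\Pi}_1$, the latter uses $\widetilde{\Pi}_2$). Hence the hypothesis of Proposition~\ref{pro:sameStructure} is satisfied, and the right-hand side $(G_1 \sync G_2) \pfsum (G_2 \sync G_1)$ can be computed in closed form via the explicit formula of that proposition.

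Next, I would apply Proposition~\ref{pro:productSameMeasure} twice to conclude that $\mathds{H}^+(G_1 \sync G_2) = \mathds{H}^+(G_1)$ and $\mathds{H}^+(G_2 \sync G_1) = \mathds{H}^+(G_2)$, because synchronous composition preserves the probability measure on $(\Sigma^\omega,\mathfrak{B}_\Sigma)$. Combining this with the defining relation of $\pfsum$ (Definition~\ref{def:PFSAoperation}), I then chain the equalities
\begin{align*}
G_1 \pfsum G_2 &= \mathds{H}^+_{-1}\bigl(\mathds{H}^+(G_1)\oplus \mathds{H}^+(G_2)\bigr)\\
&= \mathds{H}^+_{-1}\bigl(\mathds{H}^+(G_1\sync G_2)\oplus \mathds{H}^+(G_2\sync G_1)\bigr)\\
&= (G_1\sync G_2)\pfsum (G_2\sync G_1),
\end{align*}
where the last step uses Definition~\ref{def:PFSAoperation} in the opposite direction. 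Equality is interpreted modulo the PFSA equivalence of Definition~\ref{def:equPFSA}, as is standard throughout this section.

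The only non-routine step is the structural equivalence check, and even that is essentially bookkeeping about state relabeling; the heart of the argument is simply that $\sync$ leaves $\mathds{H}^+$ invariant, which lets us move from two PFSAs with disparate graphs to two non-minimal realizations sharing a common graph, precisely the setting where Proposition~\ref{pro:sameStructure} applies. I do not anticipate a genuine obstacle; the subtlety to be careful about is ensuring that the non-minimal realizations $G_1 \sync G_2$ and $G_2 \sync G_1$ remain in $\mathscr{A}^+$ (i.e., all symbol probabilities stay strictly positive), which is immediate from the product structure of $\widetilde{\Pi}'$ and the fact that $G_1, G_2 \in \mathscr{A}^+$.
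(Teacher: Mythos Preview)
Your proposal is correct and follows essentially the same approach as the paper: note that $G_1\sync G_2$ and $G_2\sync G_1$ share the same structure up to state relabeling, invoke Proposition~\ref{pro:productSameMeasure} to replace $\mathds{H}^+(G_i)$ by $\mathds{H}^+$ of the corresponding synchronous composition, and then apply Definition~\ref{def:PFSAoperation} at both ends of the chain. Your added remark about the compositions remaining in $\mathscr{A}^+$ is a welcome bit of care that the paper leaves implicit.
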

\begin{IEEEproof}
Noting that $G_1\sync G_2$ and $G_2\sync G_1$ have the same structure up to state relabeling, it follows from Proposition~\ref{pro:productSameMeasure}:
\caligns{
 \mathds{H}^+(G_1  \pfsum  G_2)
 &=\mathds{H}^+(G_1)\oplus\mathds{H}^+(G_2) \mspace{30mu} \mathrm{(See\ Definition \ \ref{def:PFSAoperation})}\\
& =\mathds{H}^+(G_1 \sync G_2)\oplus\mathds{H}^+(G_2 \sync G_1)\\ &=\mathds{H}^+\bigg ((G_1 \sync G_2)  \pfsum  (G_2 \sync G_1)\bigg )
}
which completes the proof.
\end{IEEEproof}
\begin{example}
Let $G_1$ and $G_2$ be two PFSA with identical structures, such that the probability morph matrices are:
\cgather{
 \widetilde{\Pi}_1 =
\begin{pmatrix}
 0.2 & 0.8 \\
 0.4 & 0.6
\end{pmatrix}
\textrm{  and }
\widetilde{\Pi}_2 =
\begin{pmatrix}
 0.1 & 0.9 \\
 0.6 & 0.4
\end{pmatrix}
}
Then the $\widetilde{\Pi}$-matrix for the sum $G_1 \pfsum G_2$, denoted by $\widetilde{\Pi}_{12}$, is
\cgathers{
\widetilde{\Pi}_{12} =
\begin{pmatrix}
 0.1\times0.2 & 0.9\times 0.8 \\
 0.6 \times 0.4 & 0.4 \times 0.6
\end{pmatrix} \xrightarrow[{ rows}]{{ Normalize}}
\begin{pmatrix}
 0.027 & 0.973 \\
 0.5 & 0.5
\end{pmatrix}
}
\end{example}
\section{Correctness Of Stream Operations}\label{sec-stropr}

In this section (Section~\ref{sec-stropr}), we prove that the stream operations described in Table~\ref{tab1} of main text are indeed correct.
\subsection{Independent Stream Copy}

We show that the ``Independent Stream Copy'' operation produces an independent realization from a pseudo-copy of the PFSA model generating the input stream.
First, we formalize the notion of pseudo-copies.

\begin{defn}\label{defpseudocopy}
Given a PFSA $G=(Q,\Sigma,\delta,\pitilde)$ in the canonical representation, a pseudo-copy is a canonical PFSA $\mathds{P}_\gamma(G)=(Q,\Sigma,\delta,\mathds{P}(\pitilde))$, where we have:
\cgather[3pt]{\label{Eqpseudo1}
\mathds{P}_\gamma(\Pi) = \gamma [ \mathbb{I} - (1-\gamma)\Pi]^{-1} \Pi
}
for some scalar $\gamma \in (0,1)$.
\end{defn}

We note that that while the row-stochastic matrix $\Pi$ may not be invertible, and $ [ \mathbb{I} - \Pi]$ is definitely singular (since $\Pi$ has a eigenvalue at 1); the matrix $\gamma [ \mathbb{I} - (1-\gamma)\Pi]^{-1} $ is always well-defined  for $\gamma \in (0,1)$, and additionally is a non-negative row-stochastic matrix~\cite{CR07}.

We use the following notation:
\begin{notn}
For a given string $s$, the underlying PFSA generator is denoted as $G\leftarrow s$, and for a given PFSA $G$,  $G\rightarrow s$ is a realization generated by $G$.
Note that $G \leftarrow s$ automatically implies that we are referring to the PFSA generator as $\vert s \vert \rightarrow \infty$, since one cannot have a unique generator for bounded strings.
\end{notn}

\begin{prop}[Independent Stream Copy]\label{propstreamcopy} Given a symbol stream $s$ with a hidden PFSA generator $G$, let stream $s'$ be generated via:
\begin{list}{\labelitemi}{\leftmargin=0.75em}
\item[1] Generate stream $\omega_0$ from FWN
 \item[2] Read current symbol $\sigma_1$ from $s$, and $\sigma'$ from
$\omega_0$
\item[3] If $\sigma= \sigma'$, then write to output $s'$
\item[4] Move read positions one step to right, and go to step 1
\end{list}
Then, we have: 
\begin{enumerate}
 \item Well-defined convergence of underlying models:
\cgather{
\lim_{\vert s'\vert\rightarrow \infty} (G' \leftarrow s') = \lim_{\vert s\vert\rightarrow \infty}\mathds{P}_\frac{1}{2}(G \leftarrow s)
}
\item If $s',s''$ are generated from the above algorithm from the same input stream $s$, then, in the limit of infinite length, $s',s''$ are independent realizations of $\mathds{P}_\frac{1}{2}(G\leftarrow s)$.
\item If $s_1',s_2'$ are generated by the algorithm  for input streams $s_1,s_2$ respectively, then we have:
\mltlne{\forall \epsilon > 0,
 \Theta(G_1\leftarrow s_1, G_2 \leftarrow s_2  ) \leqq \epsilon \\ \Rightarrow \Theta(G_1'\leftarrow s_1', G_2' \leftarrow s_2'  ) 
\leqq \vert \Sigma \vert\epsilon
}
\mltlne{\forall \epsilon > 0,
 \Theta(G_1\leftarrow s_1, G_2 \leftarrow s_2  ) \geqq \epsilon \\ \Rightarrow\Theta(G_1'\leftarrow s_1', G_2' \leftarrow s_2'  ) \geqq \frac{\vert \Sigma \vert }{(2\vert \Sigma\vert -1 )^2}\epsilon
}
\end{enumerate}

\end{prop}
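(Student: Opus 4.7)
\medskip
\noindent\textbf{Proof proposal.} My plan is to recognize the algorithm as a thinning of the sample path of $G$ by an independent Bernoulli selector derived from the FWN stream, compute the embedded emission-to-emission kernel explicitly, and then transfer quantitative bounds through the resulting closed-form map. The parameter $1/2$ in the statement should be read as the general value $1/|\Sigma|$, which specializes to $1/2$ when $|\Sigma|=2$.

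First I would set up the joint process $(q_i,\sigma_i,\sigma'_i)_{i\geqq 0}$, where $q_i$ is the state of $G$, $\sigma_i$ is the symbol emitted by $G$ (with law $\pitilde(q_i,\cdot)$), and $\sigma'_i$ is the symbol drawn from the FWN stream (uniform on $\Sigma$, independent of everything in $G$). The acceptance event $\{\sigma_i=\sigma'_i\}$ has unconditional probability $1/|\Sigma|$, is independent across $i$ given $(q_i,\sigma_i)$, and, conditional on acceptance at state $q$, the emitted symbol has distribution $\pitilde(q,\cdot)$. Between consecutive emissions, the $G$-state evolves under the full transition matrix $M=\sum_\sigma \Pi_\sigma$ for a geometrically distributed number of silent steps with parameter $1-1/|\Sigma|$.

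For item (1), I would compute the emission-to-emission kernel by summing over the number $k$ of silent steps that precede the next emission:
\begin{equation*}
\sum_{k=0}^{\infty}\Bigl(1-\tfrac{1}{|\Sigma|}\Bigr)^{\!k} M^{k}\,\tfrac{1}{|\Sigma|}\Pi_\sigma \;=\; \bigl(\mathbb{I}-(1-\tfrac{1}{|\Sigma|})M\bigr)^{-1}\tfrac{1}{|\Sigma|}\Pi_\sigma,
\end{equation*}
where the Neumann series converges because $(1-1/|\Sigma|)M$ has spectral radius strictly less than $1$. Reading off the morph probabilities of the embedded process recovers exactly $\mathds{P}_{1/|\Sigma|}(\pitilde)$ as in Definition~\ref{defpseudocopy}. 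Ergodicity of the joint chain then lets me apply Lemma~\ref{lemQSP2PFSA} to the emission process to conclude that $G'\leftarrow s'$ coincides (in the sense of Definition~\ref{def:equPFSA}) with $\mathds{P}_{1/|\Sigma|}(G\leftarrow s)$ as $|s'|\to\infty$.

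For item (2), I would run the construction twice with independent FWN streams $\omega_0,\omega_0'$ on the same input $s$. The two acceptance sequences are independent Bernoulli$(1/|\Sigma|)$ processes and are independent of the trajectory of $G$. Applying item (1) to each output marginally yields a realization of $\mathds{P}_{1/|\Sigma|}(G)$; the asymptotic factorization of joint empirical frequencies follows from mixing of the underlying ergodic chain together with the independence of the two selectors, which decouples any two finite windows in $s',s''$ with probability tending to $1$.

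The main obstacle is item (3), which demands matching two-sided Lipschitz bounds for the map $G\mapsto\mathds{P}_{1/|\Sigma|}(G)$ in the metric $\Theta$ of Proposition~\ref{propmetric}. My strategy is a sensitivity analysis of the closed form $\Pi\mapsto\tfrac{1}{|\Sigma|}(\mathbb{I}-(1-\tfrac{1}{|\Sigma|})\Pi)^{-1}\Pi$: write $\mathds{P}(\Pi_1)-\mathds{P}(\Pi_2)$ using the resolvent identity $A^{-1}-B^{-1}=A^{-1}(B-A)B^{-1}$, bound the resulting operator in row-sum norm using that each $(\mathbb{I}-(1-1/|\Sigma|)\Pi_i)^{-1}$ is row-stochastic up to the factor $|\Sigma|$, and compare $\ell_\infty$-differences of the induced symbolic derivatives term by term in the weighted series defining $\Theta$. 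The factor $|\Sigma|$ in the upper bound arises from the $|\Sigma|^{-1}$ prefactor competing with the row-sum of the inverse; the lower bound requires inverting this map (the inverse is again a rational function of $\Pi$ with denominator at most $(2|\Sigma|-1)$ after clearing fractions), and the denominator squared accounts for the joint perturbation of numerator and denominator in the inversion. I expect this last quantitative step---tracking the constants uniformly across all histories $x\in\Sigma^\star$ simultaneously---to be the technically delicate part of the argument.
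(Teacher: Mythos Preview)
Your plan is correct and coincides with the paper's proof: the Neumann-series computation of the embedded emission kernel for (1), independence of the FWN selectors for (2), and the resolvent identity $A^{-1}-B^{-1}=A^{-1}(B-A)B^{-1}$ applied to $A_i=\mathbb{I}-(1-\gamma)\Pi_i$ for the upper bound in (3), with $\gamma=1/|\Sigma|$. Two small points where the paper is more concrete than your sketch: it first brings $G_1,G_2$ to a common graph via synchronous composition (and state-splitting so that distinct symbols lead to distinct states) before comparing transition matrices, and for the lower bound it avoids computing the inverse map as a rational function---it simply rewrites $\Delta'=-\gamma A_2^{-1}\Delta A_1^{-1}$ as $-\gamma\Delta=A_2\Delta' A_1$ and uses $\|A_i\|_\infty\leq 2-\gamma=(2|\Sigma|-1)/|\Sigma|$, which immediately gives the constant $\gamma/(2-\gamma)^2=|\Sigma|/(2|\Sigma|-1)^2$.
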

\begin{IEEEproof}
(1) Let $\gamma$ be the probability that the first symbol in the input stream $s$ is recorded in the output. Since, the stream $\omega_0$ is FWN, we conclude  that $\gamma= \frac{1}{\Sigma}$, and that $\gamma$ is also the constant probability that any symbol in $s$ is recorded. Thus, assuming that the symbolic derivatives computed are exact ($i.e.$ the input stream is infinite), the transition matrix $M$ of a realization for the PFSA $G'\leftarrow s'$ can be expressed as a function of the transition matrix $\Pi$ for $G\leftarrow s$ as:
\cgather[4pt]{
M = \gamma \Pi +(1-\gamma) \gamma\Pi^2 + (1-\gamma)^2\gamma \Pi^3 + \cdots\\
\Rightarrow M = \gamma [ \mathbb{I} - (1-\gamma)\Pi]^{-1} \Pi = \mathds{P}_\gamma(\Pi)
}
Since, the transformation $[ \mathbb{I} - (1-\gamma)\Pi]^{-1}$ is invertible, the rank of 
$M$ is the same as $\Pi$, which implies that states in $G\leftarrow s$ cannot collapse when we pass to $\mathds{P}_\gamma(G)$, implying in turn that $G'\leftarrow s'$ is has indeed the same minimal structure as $G \leftarrow s$, which establishes claim (1).

(2) Claim (1) implies:
\cgather{
\lim_{\vert s' \vert \rightarrow \infty}G'\leftarrow s' =\lim_{\vert s'' \vert \rightarrow \infty} G''\leftarrow s''
}
The independence claim then follows immediately from noting that random erasure, as executed by the stated algorithm, eliminates any possibility of  synchronization between the states of the same underlying  model $G'\leftarrow s'$ in the limit of infinite  string lengths .


(3) Consider the PFSAs  $G_1 \leftarrow s_1, G_2 \leftarrow s_2$ as $\vert s_1\vert, \vert s_2 \vert \rightarrow \infty$. Let us bring them to the same structure, via the transformations $(G_1 \leftarrow s_1 )\otimes (G_2 \leftarrow s_2)$ and $(G_2 \leftarrow s_2 )\otimes (G_1 \leftarrow s_1)$ respectively (See [REF]). Let us denote the transition matrices of the PFSAs in their transformed representations as $\Pi_1,\Pi_2$ respectively. Then denoting $\Delta = \Pi_1-\Pi_2$, and $\Delta' = \mathds{P}_\gamma(\Pi_1)-\mathds{P}_\gamma(\Pi_2)$, we claim:
\cgather{
\frac{\gamma}{(2-\gamma)^2} \vert\vert \Delta \vert\vert_\infty\leqq  \vert\vert\Delta'\vert\vert_\infty  \leqq  \frac{1}{\gamma}\vert\vert \Delta\vert\vert_\infty \label{eq32}
}
To establish this claim, we first note that for any stochastic matrix $A$, we have:
\cgather[5pt]{
\gamma [ \mathbb{I} - (1-\gamma)A]^{-1} A =  \frac{1}{1-\gamma} \gamma [ \mathbb{I} - (1-\gamma)A]^{-1} - \frac{\gamma}{1-\gamma}\mathbb{I}
}
which implies the upper bound in Eq.~\eqref{eq32} as  follows:
\mltlne{
(1-\gamma)\left (\mathds{P}_\gamma(\Pi_1)-\mathds{P}_\gamma(\Pi_2)\right ) \notag  \\
=  \gamma [ \mathbb{I} - (1-\gamma)\Pi_1]^{-1} -  \gamma [ \mathbb{I} - (1-\gamma)\Pi_2]^{-1} \notag  \\ 
= \gamma(1-\gamma) [ \mathbb{I} - (1-\gamma)\Pi_2]^{-1}(\Pi_2 -  \Pi_1) [ \mathbb{I} - (1-\gamma)\Pi_1]^{-1}
}
This implies:
\mltlne{
\Rightarrow \Delta'  = -\gamma [ \mathbb{I} - (1-\gamma)\Pi_2]^{-1} \Delta  [ \mathbb{I} - (1-\gamma)\Pi_1]^{-1}\\
\shoveleft{\Rightarrow \vert\vert\Delta'\vert\vert_\infty  \leqq \gamma \vert\vert  [ \mathbb{I} - (1-\gamma)\Pi_2]^{-1} \vert \vert_\infty} \notag \\
\times  \vert\vert  [ \mathbb{I} - (1-\gamma)\Pi_1]^{-1} \vert \vert_\infty  \vert\vert \Delta \vert\vert_\infty\\
\shoveleft{\Rightarrow \vert\vert\Delta'\vert\vert_\infty  \leqq \gamma  \times \frac{1}{\gamma} \times \frac{1}{\gamma} \times  \vert\vert \Delta \vert\vert_\infty= \frac{1}{\gamma}  \vert\vert \Delta \vert\vert_\infty}
}
And the lower bound follows from noting:
\mltlne{
\Rightarrow \Delta'  = -\gamma [ \mathbb{I} - (1-\gamma)\Pi_2]^{-1} \Delta  [ \mathbb{I} - (1-\gamma)\Pi_1]^{-1}\\
\shoveleft{\Rightarrow -\gamma  \Delta  = [ \mathbb{I} - (1-\gamma)\Pi_2] \Delta' [ \mathbb{I} - (1-\gamma)\Pi_1]}\\
\shoveleft{\Rightarrow \gamma  \vert \vert \Delta \vert\vert_\infty \leqq \vert \vert[ \mathbb{I} - (1-\gamma)\Pi_2] \vert\vert_\infty \vert \vert[ \mathbb{I} - (1-\gamma)\Pi_1] \vert\vert_\infty \vert \vert \Delta'\vert\vert_\infty}\\
\shoveleft{\Rightarrow \vert\vert\Delta'\vert\vert_\infty \geqq \frac{\gamma}{\vert \vert[ \mathbb{I} - (1-\gamma)\Pi_2] \vert\vert_\infty \vert \vert[ \mathbb{I} - (1-\gamma)\Pi_1] \vert\vert_\infty    }  \vert \vert \Delta \vert \vert_\infty} \\
\geqq \frac{\gamma}{(2-\gamma)^2} \vert\vert \Delta \vert\vert_\infty
}
Next, we compute bounds on the probability morph matrices.  

We denote the probability morph matrices for the relevant  PFSAs as follows
(PFSAs on left, morph matrices on right):
\cgather{
\begin{array}{r|l}
(G_1 \leftarrow s_1 )\otimes (G_2 \leftarrow s_2) & \Pitilde_1\\
(G_2 \leftarrow s_2 )\otimes (G_1 \leftarrow s_1) & \Pitilde_2\\
\mathds{P}_\gamma((G_1 \leftarrow s_1 )\otimes (G_2 \leftarrow s_2)) & \mathds{P}(\Pitilde_1) \\
 \mathds{P}_\gamma( (G_2 \leftarrow s_2 )\otimes (G_1 \leftarrow s_1) ) & \mathds{P}(\Pitilde_2)
\end{array} \notag
\intertext{And, additionally, we use the notation:}
\widetilde{\Delta} = \Pitilde_1 -\Pitilde_2 \\
\widetilde{\Delta}' = \mathds{P}(\Pitilde_1) -\mathds{P}(\Pitilde_2)}
Without loss of generality, we assume that for each PFSA, given a state,  each symbol leads to a distinct state. This can be arranged via state splitting if necessary, and implies:
\cgather{
\vert\vert\Delta\vert \vert_\infty = \vert \vert \Pitilde_1 - \Pitilde_2 \vert\vert_\infty\\ \vert\vert\Delta'\vert \vert_\infty = \vert \vert \mathds{P}(\Pitilde_1) - \mathds{P}(\Pitilde_2) \vert\vert_\infty
} 
which therefore leads to the bounds:
\mltlne{
\frac{\gamma}{(2-\gamma)^2} \vert\vert \Pitilde_1 - \Pitilde_2 \vert\vert_\infty\leqq  \vert\vert\mathds{P}(\Pitilde_1) - \mathds{P}(\Pitilde_2)\vert\vert_\infty  \\ \leqq  \frac{1}{\gamma}\vert\vert \Pitilde_1 - \Pitilde_2\vert\vert_\infty \label{eqpitB}
}

Recall the definition of the PFSA metric  (See Proposition~\ref{propmetric}):
\mltlne{
\Theta(G_1\leftarrow s_1,G_2\leftarrow s_2) \\ =\frac{\vert \Sigma \vert -1}{\vert \Sigma\vert} \lim_{\vert s_1\vert \rightarrow \infty, \vert s_2\vert \rightarrow \infty}
\sum_{x \in \Sigma^\star}\left \{  \frac{\vert \vert \phi^{s_1}_{G_1}(x) - \phi^{s_2}_{G_2}(x)  \vert \vert_\infty}{\vert \Sigma \vert^{2\vert x \vert}}\right \} 
}
and,  note that:
\mltlne{\label{eq46}
\Theta(G_1\leftarrow s_1,G_2\leftarrow s_2) \leqq \epsilon \\ \Rightarrow \forall x \in \Sigma^\star, \vert \vert \phi^{s_1}_{G_1}(x) - \phi^{s_2}_{G_2}(x)  \vert \vert_\infty  \leqq \epsilon
}
\mltlne{\label{eq46}
\Theta(G_1\leftarrow s_1,G_2\leftarrow s_2) \geqq \epsilon \\ \Rightarrow \forall x \in \Sigma^\star, \vert \vert \phi^{s_1}_{G_1}(x) - \phi^{s_2}_{G_2}(x)  \vert \vert_\infty  \geqq \epsilon
}

Since the bounds established in Eq.~\eqref{eqpitB} is applicable to any non-minimal realization of the PFSAs  $(G_1 \leftarrow s_1 )\otimes (G_2 \leftarrow s_2)$ and $(G_2 \leftarrow s_2 )\otimes (G_1 \leftarrow s_1)$, considering the full $\Sigma$-ary tree as the limiting ``unfolded'' realization, we conclude from Eq.~\eqref{eqpitB} that:
\mltlne{
\Theta(G_1\leftarrow s_1,G_2\leftarrow s_2) \leqq \epsilon \\  \Rightarrow \Theta(G_1'\leftarrow s_1',G_2'\leftarrow s_2') \leqq \frac{1}{\gamma}\epsilon
}
and also:
\mltlne{
\Theta(G_1\leftarrow s_1,G_2\leftarrow s_2) \geqq \epsilon \\ \Rightarrow \Theta(G_1'\leftarrow s_1',G_2'\leftarrow s_2') \geqq \frac{\gamma}{(2-\gamma)^2}\epsilon
}
The desired bounds then follow from noting that in the stated algorithm, we have $\gamma =\frac{1}{\vert \Sigma\vert}$. This completes the proof.
 \end{IEEEproof}
%
\begin{rem}
Proposition~\ref{propstreamcopy} establishes that the stream $s'$ 
obtained from an input stream $s$ may not be a realization from the  hidden generator for the latter ($i.e.$ stream $s$), but is a realization from a PFSA which is a pseudo-copy of the generator for $s$. 

Also, note that it is not true in general that a pseudo-copy is close to the original PFSA, in the sense of our metric.

Nevertheless, Proposition~\ref{propstreamcopy} shows that if the distance between two PFSAs is small, then so is the distance between their pseudo-copies; and if the distance between two PFSAs is large, then so is the distance between the pseudo-copies.

Thus, if we determine the distance between pseudo-copies, we have a good estimate of the distance between the original machines.
\end{rem}
\subsection{Stream Summation}
\begin{prop}[Stream Summation]\label{propstreamsum} Given a symbol streams $s_1,s_2$ with  hidden PFSA generators $G_1,G_2$, let stream $s'$ be generated via:\begin{list}{\labelitemi}{\leftmargin=0.75em}
 \item[1] Read current symbols $\sigma_i$ from $s_i$ ($i=1,2$)
\item[2] If $\sigma_1 =  \sigma_2$, then write 
to output $s'$
\item[3] Move read positions one step to right, and go to step 1
\end{list}
Then, denoting the FWN generator as $W$, we have:
\begin{enumerate}
\item If $(G_1\leftarrow s_1) + (G_2 \leftarrow s_2)$ has a single state in its minimal realization, then we have:
\cgather{
(G'\leftarrow s') = (G_1\leftarrow s_1) + (G_2 \leftarrow s_2)
}
$i.e.$, then $s'$ is an exact realization of $(G_1\leftarrow s_1) + (G_2 \leftarrow s_2)$ in the limit $\vert s_1 \vert, \vert s_2 \vert  \rightarrow \infty$.
\item If $(G_1\leftarrow s_1) + (G_2 \leftarrow s_2)$ has $N > 1$ states in its minimal realization, we have the lower bound:
\mltlne{
\Theta((G_1\leftarrow s_1) + (G_2 \leftarrow s_2), W) \geqq \epsilon \\ \Rightarrow \Theta((G' \leftarrow s'), W) \geqq\frac{\epsilon}{\vert \Sigma \vert^N + \vert \Sigma \vert^{N-1}  } 
}
\item We have the upper bound:
\mltlne{
\Theta\left ((G_1\leftarrow s_1) + (G_2 \leftarrow s_2), W \right ) \leqq \epsilon\\ \Rightarrow \Theta\left ((G' \leftarrow s'), W\right ) \leqq \epsilon
}
\end{enumerate}
\end{prop}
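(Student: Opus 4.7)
The approach is to characterize the output stream $s'$ as a realization of a hidden Markov process whose state is the joint pair $(q_1,q_2) \in Q_1 \times Q_2$ in a common non-minimal realization of $G_1 \leftarrow s_1$ and $G_2 \leftarrow s_2$ (obtained via synchronous composition as in Proposition~\ref{propPsum}), coupled with a selective-write rule. First I would establish the local structure: given joint state $(q_1,q_2)$, the per-step match probability is $\sum_\alpha \pitilde_1(q_1,\alpha)\pitilde_2(q_2,\alpha)$, and conditional on a match the written symbol has distribution exactly $\pitilde_{G_1+G_2}((q_1,q_2),\cdot)$ by Proposition~\ref{pro:sameStructure}. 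Between writes, the pair evolves as an independent product chain; so the effective one-step transition of the generator $G'$ of $s'$ is given by the Neumann series
\begin{equation*}
M_{G'} \;=\; (I - M_{\text{miss}})^{-1} M_{\text{match}},
\end{equation*}
in complete analogy with the pseudo-copy construction in Proposition~\ref{propstreamcopy}.

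For part (1), a single-state minimal realization of $G_1+G_2$ forces $\pitilde_{G_1+G_2}((q_1,q_2),\cdot) = \mathcal{U}_\Sigma$ for every joint state. Hence every written symbol is uniform and independent of the past, so the limit of $s'$ is a realization of FWN, which coincides with the single-state sum. This part is essentially direct once the conditional distribution identity is in place.

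For parts (2) and (3) I would work with the symbolic derivatives $\phi^{s'}_{G'}(x)$ and relate them to $\phi^{s_1,s_2}_{G_1+G_2}(x)$ at each history $x \in \Sigma^\star$. The Neumann-series identity expresses $\phi^{s'}_{G'}(x) - \mathcal{U}_\Sigma$ as a convex combination of vectors $\phi_{G_1+G_2}(\cdot) - \mathcal{U}_\Sigma$ indexed by reachable joint states. Since convex combinations cannot increase the $\ell^\infty$-norm and map the uniform distribution to itself, I obtain the upper bound in (3) by summing over $x \in \Sigma^\star$ against the weights $|\Sigma|^{-2|x|}$. For the lower bound in (2) I would exploit that an $N$-state PFSA sum is $\epsilon$-distinguishable from FWN already using histories of length at most $N$ (otherwise its minimal realization would collapse), so the metric sum restricted to $|x| \leq N$ carries the deviation $\epsilon$. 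The weighted count of such strings, $\sum_{k=0}^{N}|\Sigma|^k \leqq |\Sigma|^N + |\Sigma|^{N-1}$ (combined with the pigeonhole argument on which history retains the deviation), furnishes the stated factor $\frac{1}{|\Sigma|^N + |\Sigma|^{N-1}}$.

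The main obstacle is the lower bound in (2): the Neumann averaging can move mass across joint states, so I must show that at least one distinguishing history for $G_1+G_2$ survives under the selective-write dynamics without being averaged away. The key technical step is to locate a short distinguishing string, transfer its $\ell^\infty$ deviation through the convex combination using that the combination coefficients are bounded away from zero in the reachable support, and then re-weight by the metric's geometric factor. Once this is done, the counting bound above yields the stated multiplicative loss.
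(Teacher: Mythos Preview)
Your part (1) has a slip: a single-state minimal realization of $G_1+G_2$ does \emph{not} force the morph to be $\mathcal{U}_\Sigma$; it only forces a \emph{fixed} distribution $v$ (uniform only in the special case $G_2=-G_1$). The paper proves the general statement: the first output symbol has law $v_i \propto \phi^{s_1}(\lambda)\vert_i\,\phi^{s_2}(\lambda)\vert_i$, after a match the two hidden states remain synchronized on the common graph, and induction gives that every output symbol is i.i.d.\ $v$. Replace $\mathcal{U}_\Sigma$ by $v$ and your argument goes through; as written your conclusion (``the limit of $s'$ is a realization of FWN'') is strictly weaker than what is claimed.

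For part (2) your route differs from the paper's, and your sketch does not close the gap you yourself flag. The paper does not chase a short distinguishing string through the Neumann averaging, nor does it appeal to combination coefficients being bounded away from zero (such bounds would depend on the unknown $\pitilde_1,\pitilde_2$). Instead it models the output as a walk on the canonical graph of $H=G_1+G_2$ augmented with unreported ``jumps'' back to the state $[\lambda]$ at every mismatch, so that every output morph has the form $\pitilde'(q',\cdot)=p\,\pitilde([\lambda],\cdot)+(1-p)\,\pitilde(q,\cdot)$ for some $q\in Q_H$. The pivot is that the argument of part (1) yields $\pitilde'([\lambda],\cdot)=\pitilde([\lambda],\cdot)$ exactly; a short contradiction then exhibits a state $q_\star$ of $G'$ with deviation at least $\epsilon$. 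Since $G'$ inherits at most $N$ states, $q_\star$ recurs at least once per $N$ levels of the $\Sigma$-ary tree, and summing $\tfrac{|\Sigma|-1}{|\Sigma|}\sum_{i\geq 0}|\Sigma|^{-(2i+N)}\epsilon$ produces the factor $(|\Sigma|^N+|\Sigma|^{N-1})^{-1}$. Your counting estimate $\sum_{k=0}^{N}|\Sigma|^k \leq |\Sigma|^N+|\Sigma|^{N-1}$ is also false for $|\Sigma|\geq 2$, $N\geq 2$, so even the arithmetic behind your stated factor needs repair. Your part (3) matches the paper's convex-combination argument.
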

\begin{IEEEproof}
(1) If  $(G_1\leftarrow s_1) + (G_2 \leftarrow s_2)$ has a single causal state:
\cgather{\forall x \in \Sigma^\star, 
 \lim_{\vert s' \vert \rightarrow \infty} \phi^{s'}(x) = v,  
\textrm{ where } \sum_i v_i = 1, v_i \geqq 0
}
 We assume without loss of generality that $G_1\leftarrow s_1,G_2\leftarrow s_2$ in their canonical representations (See Definition~\ref{defcanon}). Thus, the streams $s_1,s_2$ can be assumed to start at states of the canonical representation which maps to the corresponding stationary distribution over the states of the corresponding initial-marked PFSAs (See Definition~\ref{defcanon}, and the discussion immediately after). Also, note that we can delete arbitrary prefixes from $s_1$ and $s_2$, and still assume that they start at these states. Thus, we delete prefixes of $s_1,s_2$ up to the point where the next symbols match, and we see our first output symbol. 

 Since we see a symbol in the output if there is a match in both $s_1$ and $s_2$, it follows that the probability of seeing the first symbol in $s'$ as $\sigma_i$ is given by:
\cgather{
\lim_{\vert s_1 \vert, \vert s_2 \vert \rightarrow \infty}  \frac{1}{ \sum_i\phi^{s_1}(\lambda)\vert_i \phi^{s_2}(\lambda)\vert_i   }\phi^{s_1}(\lambda)\vert_i \phi^{s_2}(\lambda)\vert_i = v_i
}

Also, since $G_1\leftarrow s_1$ and $G_2\leftarrow s_2$ can be assumed to have  the same graph without loss of generality (via considering non-minimal realizations if necessary), it follows that the next hidden states $q,q'$ in $s_1,s_2$ after seeing the first output symbol, are still synchronized. Thus, we conclude, that if the first observed symbol in $s'$ is $\sigma$, then the probability that the next symbol is $\sigma_j$, is also given by:
\cgather{
\lim_{\vert s_1  \vert, \vert s_2 \vert \rightarrow \infty}   \frac{1}{ \sum_i\phi^{s_1}(\lambda)\vert_j \phi^{s_2}(\lambda)\vert_j   }\phi^{s_1}(\lambda)\vert_i \phi^{s_2}(\lambda)\vert_i = v_j
}

It follows from straightforward induction, that at any point in $s'$, the distribution of the next symbol is given by $v$, $i.e$, in the limit $s_1 \rightarrow \infty, s_2 \rightarrow \infty$, $s'$ is an exact realization from the sum  $(G_1\leftarrow s_1) + (G_2 \leftarrow s_2)$ (if the latter has a single causal state).

\begin{figure*}[t]
\centering
\vspace{-50pt}
      \tikzset{%
fshadow/.style={      preaction={
         fill=black,opacity=.2,
         path fading=circle with fuzzy edge 20 percent,
         transform canvas={xshift=1mm,yshift=-1mm}
       }}
}
\def\SCALE{.7}
  \begin{tikzpicture}[->,>=latex',shorten >=1pt,auto,node distance=1.2cm  ,scale=.9]
\node []  (N0)  {
  \begin{tikzpicture}[->,>=latex',shorten >=1pt,auto,node distance=1.2cm  ,,scale=\SCALE]
 \definecolor{nodecol}{RGB}{180,180,220}
 \definecolor{nodeedge}{RGB}{240,248,255}
  \definecolor{edgecol}{RGB}{150,150,250}
  \tikzstyle{every state}=[fill=Red1!20,draw=gray,text=black,minimum size=3, text width=10,scale=1,fshadow]
   \tikzstyle{every edge}=[ thick]
  \node[state] (A)         []           {$\mspace{-0mu}\boldsymbol{\wp_{x}}$};
  \node[state]         (B) at ([xshift=1in]A.east) {$\mspace{-4mu}\boldsymbol{\wp_{x'}}$};
  \node[state]         (C) at ([yshift=.5in]$(A)!.5!(B)$) {$\mspace{-4mu}\boldsymbol{\wp_{x''}}$};
  \node[state]         (D) at ([yshift=-.55in]$(A)!.5!(B)$) {$\mspace{-8mu}\boldsymbol{\wp_{x'''}}$};
\coordinate (T1) at ($(B)!.5!(C)$);
  \coordinate (T2) at ($(A)!.5!(D)$);
 \coordinate (T3) at ($(T1)!2.5!(T2)$);
 \coordinate (T4) at ($(T2)!2.5!(T1)$);
 \coordinate (T5) at ($(T3)!.5!(T4)$);
 \coordinate (T6) at ($(A)!.5!(C)$);
 \coordinate (T7) at ($(T5)!3.5!(T6)$);
 \coordinate (T8) at ($(D)!2.5!(C)$);
 \coordinate (T8a) at ($(D)!3.2!(C)$);
 \coordinate (T9) at ($(D)!3.5!(A)$);
 \coordinate (T31) at ($(T1)!3.5!(T2)$);
 \coordinate (T88) at ($(B)!1.5!(A)$);
 \coordinate (T88a) at ($(A)!.5!(T7)$);
 \coordinate (T89) at ($(B)!1.5!(T88a)$);
 \coordinate (T4a) at ($(T2)!3!(T1)$);
  \node[state]         (G) at (T7) {$\mspace{0mu}\boldsymbol{\wp_{\lambda}}$};
  \path (A) edge   [draw=edgecol,bend left]           node [below] {$\sigma_1\vert p_3$} (B)
   (A) edge [draw=edgecol,bend left, above] node [sloped,] {$\sigma_0\vert p_1$} (C)
        (B)  edge   [draw=edgecol,bend left]           node [above] {$\sigma_0\vert p_2$} (A);
             \path   (C) edge [draw=edgecol,bend left] node [sloped,above] {$\sigma_1\vert p_1$} (B);
          \path   (B) edge [draw=edgecol,bend left] node [sloped,below] {$\sigma_1\vert p_1$} (D);
          \path   (D) edge [draw=edgecol,bend left] node [sloped,below] {$\sigma_0\vert p_1$} (A)
 (C) edge [draw=edgecol,in=120,out=60,loop,above] node {$\sigma_0\vert p_1$} (C)
 (G) edge [draw=edgecol,in=120,out=90,loop,above] node {$\sigma_0\vert p_1$} (G)
 (D) edge [draw=edgecol,in=-120,out=-60,loop,below] node {$\sigma_1\vert p_1$} (D);
           \path   (G) edge [draw=edgecol,bend left] node [above] {$\sigma_1\vert p_1$} (C);
           \path   (A) edge [draw=edgecol,] node [below, sloped] {$\sigma_1\vert p_1$} (G);
\end{tikzpicture}
};
\node [anchor=west]  (N1)  at ([xshift=.25in]N0.east) {
  \begin{tikzpicture}[->,>=latex',shorten >=1pt,auto,node distance=1.2cm  ,,scale=\SCALE]
 \definecolor{nodecol}{RGB}{180,180,220}
 \definecolor{nodeedge}{RGB}{240,248,255}
  \definecolor{edgecol}{RGB}{150,150,250}
  \tikzstyle{every state}=[fill=Blue1!10,draw=gray,text=black,minimum size=3, text width=10,scale=1,fshadow]
   \tikzstyle{every edge}=[ thick]
  \node[state] (A)         []           {$\mspace{-0mu}\boldsymbol{\wp_{x}}$};
  \node[state]         (B) at ([xshift=1in]A.east) {$\mspace{-4mu}\boldsymbol{\wp_{x'}}$};
  \node[state]         (C) at ([yshift=.5in]$(A)!.5!(B)$) {$\mspace{-4mu}\boldsymbol{\wp_{x''}}$};
  \node[state]         (D) at ([yshift=-.55in]$(A)!.5!(B)$) {$\mspace{-8mu}\boldsymbol{\wp_{x'''}}$};
\coordinate (T1) at ($(B)!.5!(C)$);
  \coordinate (T2) at ($(A)!.5!(D)$);
 \coordinate (T3) at ($(T1)!2.5!(T2)$);
 \coordinate (T4) at ($(T2)!2.5!(T1)$);
 \coordinate (T5) at ($(T3)!.5!(T4)$);
 \coordinate (T6) at ($(A)!.5!(C)$);
 \coordinate (T7) at ($(T5)!3.5!(T6)$);
 \coordinate (T8) at ($(D)!2.5!(C)$);
 \coordinate (T8a) at ($(D)!3.2!(C)$);
 \coordinate (T9) at ($(D)!3.5!(A)$);
 \coordinate (T31) at ($(T1)!3.5!(T2)$);
 \coordinate (T88) at ($(B)!1.5!(A)$);
 \coordinate (T88a) at ($(A)!.5!(T7)$);
 \coordinate (T89) at ($(B)!1.5!(T88a)$);
 \coordinate (T4a) at ($(T2)!3!(T1)$);
  \node[state]         (G) at (T7) {$\mspace{0mu}\boldsymbol{\wp_{\lambda}}$};
  \path (A) edge   [draw=edgecol,bend left]           node [below] {$\sigma_1\vert q_3$} (B)
   (A) edge [draw=edgecol,bend left, above] node [sloped,] {$\sigma_0\vert q_1$} (C)
        (B)  edge   [draw=edgecol,bend left]           node [above] {$\sigma_0\vert q_2$} (A);
             \path   (C) edge [draw=edgecol,bend left] node [sloped,above] {$\sigma_1\vert q_1$} (B);
          \path   (B) edge [draw=edgecol,bend left] node [sloped,below] {$\sigma_1\vert q_1$} (D);
          \path   (D) edge [draw=edgecol,bend left] node [sloped,below] {$\sigma_0\vert q_1$} (A)
 (C) edge [draw=edgecol,in=120,out=60,loop,above] node {$\sigma_0\vert q_1$} (C)
  (G) edge [draw=edgecol,in=120,out=90,loop,above] node {$\sigma_0\vert q_1$} (G)
(D) edge [draw=edgecol,in=-120,out=-60,loop,below] node {$\sigma_1\vert q_1$} (D);
           \path   (G) edge [draw=edgecol,bend left] node [above] {$\sigma_1\vert q_1$} (C);
           \path   (A) edge [draw=edgecol,] node [below, sloped] {$\sigma_1\vert q_1$} (G);
\end{tikzpicture}
};
\node [anchor=west] (N2) at ([yshift=.35in, xshift=-.5in]N1.east)  {
  \begin{tikzpicture}[->,>=latex',shorten >=1pt,auto,node distance=1.2cm  ,,scale=\SCALE]
 \definecolor{nodecol}{RGB}{180,180,220}
 \definecolor{nodeedge}{RGB}{240,248,255}
  \definecolor{edgecol}{RGB}{150,150,250}
  \tikzstyle{every state}=[fill=Green1!30,draw=Green3,text=black,minimum size=3, text width=10,scale=1,fshadow]
   \tikzstyle{every edge}=[ thick]
  \node[state] (A)         []           {$\mspace{-0mu}\boldsymbol{\wp_{x}}$};
  \node[state]         (B) at ([xshift=1in]A.east) {$\mspace{-4mu}\boldsymbol{\wp_{x'}}$};
  \node[state]         (C) at ([yshift=.5in]$(A)!.5!(B)$) {$\mspace{-4mu}\boldsymbol{\wp_{x''}}$};
  \node[state]         (D) at ([yshift=-.55in]$(A)!.5!(B)$) {$\mspace{-8mu}\boldsymbol{\wp_{x'''}}$};
\coordinate (T1) at ($(B)!.5!(C)$);
  \coordinate (T2) at ($(A)!.5!(D)$);
 \coordinate (T3) at ($(T1)!2.5!(T2)$);
 \coordinate (T4) at ($(T2)!2.5!(T1)$);
 \coordinate (T5) at ($(T3)!.5!(T4)$);
 \coordinate (T6) at ($(A)!.5!(C)$);
 \coordinate (T7) at ($(T5)!3.5!(T6)$);
 \coordinate (T8) at ($(D)!2.5!(C)$);
 \coordinate (T8a) at ($(D)!3.2!(C)$);
 \coordinate (T9) at ($(D)!3.5!(A)$);
 \coordinate (T31) at ($(T1)!3.5!(T2)$);
 \coordinate (T88) at ($(B)!1.5!(A)$);
 \coordinate (T88a) at ($(A)!.5!(T7)$);
 \coordinate (T89) at ($(B)!1.5!(T88a)$);
 \coordinate (T4a) at ($(T2)!3!(T1)$);
  \node[state]         (G) at (T7) {$\mspace{0mu}\boldsymbol{\wp_{\lambda}}$}; 
  \path (A) edge   [draw=edgecol,bend left]           node [below] {$\sigma_1\vert p_3 q_3$} (B)
   (A) edge [draw=edgecol,bend left, above] node [sloped,] {$\sigma_0\vert p_1q_1$} (C)
        (B)  edge   [draw=edgecol,bend left]           node [above] {$\sigma_0\vert p_2 q_2$} (A);
             \path   (C) edge [draw=edgecol,bend left] node [sloped,above] {$\sigma_1\vert p_1q_1$} (B);
          \path   (B) edge [draw=edgecol,bend left] node [sloped,below] {$\sigma_1\vert p_1q_1$} (D);
          \path   (D) edge [draw=edgecol,bend left] node [sloped,below] {$\sigma_0\vert p_1q_1$} (A)
 (C) edge [draw=edgecol,in=120,out=60,loop,above] node {$\sigma_0\vert q_1$} (C)
  (G) edge [draw=edgecol,in=120,out=90,loop,above] node {$\sigma_0\vert p_1q_1$} (G)
(D) edge [draw=edgecol,in=-120,out=-60,loop,below] node {$\sigma_1\vert p_1q_1$} (D);
           \path   (G) edge [draw=edgecol,bend left] node [above] {$\sigma_1\vert p_1q_1$} (C);
           \path   (A) edge [draw=edgecol,] node [below, sloped] {$\sigma_1\vert p_1q_1$} (G);

  \draw  [-latex,edgecol, ultra thick, dashed, gray, opacity=.5,text opacity=1,text=black] (A) .. controls (T88) and  (T89) .. node [midway, left] {$m_1$} (G);
             \draw  [-latex,edgecol, ultra thick, dashed, gray, opacity=.5,text opacity=1,text=black] (B) .. controls (T4a) and  (T8a) .. node [midway, right] {$m_2$} (G);         
            \draw  [-latex,edgecol, ultra thick, dashed, gray, opacity=.5,text opacity=1,text=black] (D) .. controls (T31) and (T9)  .. node [midway, right] {$m_3$} (G); 
                       \draw  [-latex,edgecol, ultra thick, opacity=.5,text opacity=1,dashed, gray, text=black] (C) .. controls (T4) and (T8) .. node [midway, right] {$m_4$} (G);
\end{tikzpicture}
};
\node [anchor=north] (L0) at (N0.south) {(a) $G_1 \leftarrow s_1$};
\node [anchor=north] (L1) at ([xshift=1.75in,yshift=.07in]L0.east) {(b) $G_2 \leftarrow s_2$};
\node [anchor=north] (L2) at ([xshift=2.15in,yshift=.07in]L1.east) {(c) Construct with jumps};
\end{tikzpicture}
\vspace{-10pt}
\captionN{Illustration for Proposition~\ref{propstreamsum}. Note that using the same structure for $G_1 \leftarrow s_1$ and $G_2 \leftarrow s_2$ causes no loss of generality, since arbitrary PFSAs over the same alphabet can be brought to the same structure via possibly non-minimal realizations}\label{figproof1}
\end{figure*}

(2) Let $H=(G_1\leftarrow s_1) + (G_2 \leftarrow s_2)$ and assume $\Theta\left (H, W \right ) \geqq \epsilon$. Also, let the set of  states in the minimal realization of the canonical representation for  $H$ be $Q$, and additionally let  $\Crd(Q)=N > 1$. It follows from the definition of our metric (and the fact that $\epsilon'$-synchronizing strings must occur for all $\epsilon' > 0$), that:
 \cgather{
 \forall q \in Q_H,  \vert \vert \pitilde(q,\cdot) - \mathcal{U}_\Sigma\vert \vert_\infty \geqq \epsilon \label{eq56}
 }
where, as before, $\mathcal{U}_\Sigma =  \begin{pmatrix}
\frac{1}{\vert\Sigma\vert} & \cdots & \frac{1}{\vert\Sigma\vert}
\end{pmatrix}$. We observe that the stream summation algorithm 
can be thought to be producing the output sequence by traversing the arcs in the canonical representation for $H$ augmented with ``jumps'' (See Fig.~\ref{figproof1}), where there are unreported and unlabeled transitions back to the state corresponding to the equivalence class $[ \lambda]$ from each state: whenever we have a mismatch we jump back to $[\lambda]$. The probabilities of these back transitions can be easily computed, but not important here. However, this implies that if $q'$ is a state in the canonical representation for  $G' \leftarrow s'$, then we have (assuming $\pitilde'$ is the morph function for  $G' \leftarrow s'$):
\mltlne{
\pitilde'(q',\cdot)= p(q') \pitilde([\lambda],\cdot) + (1-p(q')) \pitilde(q,\cdot), \\ \textrm{for some} q \in Q, \textrm{where } p(q') \in [0,1]
}
Since, $\pitilde'(q',\cdot)$ is a weighted average of $\pitilde([\lambda],\cdot)$, and $\pitilde(q,\cdot)$ (both of which satisfy Eq.~\eqref{eq56}), it is possible that:
\cgather[4pt]{
 \vert \vert \pitilde'(q',\cdot)- \mathcal{U}_\Sigma\vert \vert_\infty \leqq \epsilon 
}
Assume, if possible, that ($Q'$ is the state set in the minimal realization of the canonical representation for $G' \leftarrow s'$):
\cgather{
\forall q' \in Q', \vert \vert \pitilde'(q',\cdot)- \mathcal{U}_\Sigma\vert \vert_\infty \leqq \epsilon \label{eqass}
}
We note that the same argument in claim (1) implies that:
\cgather{
 \pitilde'([\lambda],\cdot)=  \pitilde([\lambda],\cdot) \geqq \epsilon
}
where the inequality follows from Eq.~\eqref{eq56}.
Since, $ \pitilde'([\lambda],\cdot)$ is some weighted average of all $\pitilde'(q',\cdot)$ vectors, and by assumption Eq.~\eqref{eqass}, the norm of the difference  of each of these vectors from $\mathcal{U}_\Sigma\vert$ is bounded above by $\epsilon$, it follows that:
\cgather{
 \vert \vert \pitilde'([\lambda],\cdot)- \mathcal{U}_\Sigma\vert \vert_\infty \leqq \epsilon 
}
which is a contradiction. Thus, there exists at least one state $q_\star \in Q'$, such that
\cgather{
 \vert \vert \pitilde'(q_\star,\cdot)- \mathcal{U}_\Sigma\vert \vert_\infty \geqq \epsilon 
}
We also note that $G'\leftarrow s'$ has at most $N$ states, since, if 
none of the $\pitilde'$ rows are equal, then we can represent $G'\leftarrow s'$ using the same graph as $H$, with the rows of $\pitilde$ replaced with those from $\pitilde'$.

Now, we compute $\Theta(G'\leftarrow s', W)$.
We note that since $G'\leftarrow s'$ has at most $N$ states, $\lim_{\vert s' \vert \rightarrow \infty}\phi^{s'}(x)$ equals  $\pitilde'(q_\star,\cdot)$ at least once every $N$ levels, which implies:
\cgather{
\Theta(G'\leftarrow s', W) \geqq \frac{\vert \Sigma\vert -1}{\vert \Sigma\vert} \sum_{i=0}^\infty \frac{1}{\vert \Sigma\vert^{2i +N}}\epsilon =  \frac{\epsilon}{\vert \Sigma \vert^N + \vert \Sigma \vert^{N-1}  } 
}

(3) Assume $\Theta\left (H, W \right ) \leqq \epsilon$. It follows from the definition of our metric (and the fact that $\epsilon'$-synchronizing strings must occur for all $\epsilon' > 0$), that:
 \cgather{
 \forall q \in Q_H,  \vert \vert \pitilde(q,\cdot) - \mathcal{U}_\Sigma\vert \vert_\infty \leqq \epsilon \label{eq56}
 }
where, as before, $\mathcal{U}_\Sigma =  \begin{pmatrix}\frac{1}{\vert\Sigma\vert} & \cdots & \frac{1}{\vert\Sigma\vert}
\end{pmatrix}$.
Since, (using the notation used for claim (2) above) $\pitilde'(q',\cdot)$ is a weighted average of $\pitilde([\lambda],\cdot)$,
it follows immediately that:
\cgather{
\forall q' \in Q', \vert \vert \pitilde'(q',\cdot)- \mathcal{U}_\Sigma\vert \vert_\infty \leqq \epsilon
}
and also, that the norm of the difference between  any weighted average (with the weights being positive and summing to unity), of the rows of the $\pitilde$ matrix from $\mathcal{U}_\Sigma$, is bounded above by $\epsilon$, This implies that each term in $\Theta(G'\leftarrow s',W)$ is bounded above by $\epsilon$, which establishes the desired bound:
\cgather[4pt]{
\Theta(G'\leftarrow s',W) \leqq \epsilon
}
This completes the proof.
\end{IEEEproof}

\begin{rem}
Note that the lower bound established in claim (2) is obviously not tight; since if $N=1$, then we have exact summation, whereas the bound is off by a factor of $\vert \Sigma \vert$.
\end{rem}
\begin{rem}
 Proposition~\ref{propstreamsum}  establishes that the stream summation algorithm  works perfectly if the summands sum to a one state machine, which includes FWN. For arbitrary inputs, the deviation of the realized sum from FWN is small if the deviation of the sum of the original models is small; and conversely the deviation of the realized sum from FWN is large if the deviation of the sum of the original models is large.
\end{rem}

\begin{cor}[Contrapositives to Proposition~\ref{propstreamsum}]\label{corinvstream}
Using the notation of  Proposition~\ref{propstreamsum}, we have:
\mltlne{
\textrm{Lower Bound:} \mspace{20mu} \Theta(G'\leftarrow s', W  ) < \epsilon \\ \Rightarrow\Theta( (G_1\leftarrow s_1) + (G_2 \leftarrow s_2),W ) < \left ( \vert\Sigma\vert^N + \vert \Sigma \vert^{N-1} \right )\epsilon
}
\mltlne{
\textrm{Equality:} \mspace{20mu} \Theta(G'\leftarrow s', W  ) =0\\ \Rightarrow\Theta( (G_1\leftarrow s_1) + (G_2 \leftarrow s_2),W ) =0  \label{eqeq}
}
\mltlne{
\textrm{Upper  Bound:} \mspace{20mu} \Theta(G'\leftarrow s', W  ) > \epsilon\\ \Rightarrow\Theta( (G_1\leftarrow s_1) + (G_2 \leftarrow s_2),W ) > \epsilon 
}
\end{cor}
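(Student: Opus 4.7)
The plan is to obtain this corollary almost entirely for free from Proposition~\ref{propstreamsum} by contrapositive reasoning, with a small additional argument for the equality case. I would begin by fixing the notation: write $H := (G_1 \leftarrow s_1) + (G_2 \leftarrow s_2)$ and let $N \geqq 1$ be the number of states in the minimal canonical realization of $H$, exactly as in the proposition.

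For the upper bound statement, I would simply contrapose claim (3) of Proposition~\ref{propstreamsum}. That claim reads $\Theta(H,W) \leqq \epsilon \Rightarrow \Theta(G' \leftarrow s', W) \leqq \epsilon$; its logical contrapositive is $\Theta(G' \leftarrow s', W) > \epsilon \Rightarrow \Theta(H, W) > \epsilon$, which is precisely the upper bound assertion. No additional estimate is required.

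For the lower bound statement, I would contrapose claim (2). Claim (2) can be written as the implication $\Theta(H,W) \geqq \eta \Rightarrow \Theta(G' \leftarrow s', W) \geqq \eta/(\vert\Sigma\vert^N + \vert\Sigma\vert^{N-1})$, valid for every $\eta > 0$. Taking the contrapositive and substituting $\epsilon := \eta/(\vert\Sigma\vert^N + \vert\Sigma\vert^{N-1})$, i.e.\ $\eta = (\vert\Sigma\vert^N + \vert\Sigma\vert^{N-1})\epsilon$, gives exactly $\Theta(G' \leftarrow s', W) < \epsilon \Rightarrow \Theta(H, W) < (\vert\Sigma\vert^N + \vert\Sigma\vert^{N-1})\epsilon$. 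This is just a rescaling, so no new machinery is needed; I only have to be careful to state the implication for \emph{every} $\epsilon > 0$ so that the substitution is legitimate.

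The equality assertion then follows by applying the lower bound inequality with $\epsilon$ allowed to be arbitrarily small. If $\Theta(G' \leftarrow s', W) = 0$, then for every $\epsilon > 0$ we have $\Theta(G' \leftarrow s', W) < \epsilon$, and hence by the lower bound $\Theta(H, W) < (\vert\Sigma\vert^N + \vert\Sigma\vert^{N-1})\epsilon$. Since $N$ and $\vert\Sigma\vert$ are fixed finite quantities, letting $\epsilon \downarrow 0$ forces $\Theta(H, W) = 0$, which is the claim. I do not expect any real obstacle here — the only subtlety is to make explicit that Proposition~\ref{propstreamsum} (2) was established for an arbitrary threshold $\epsilon$, so that its contrapositive holds universally and the limit argument for the equality case is justified; once that is noted, the corollary is immediate.
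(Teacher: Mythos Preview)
Your handling of the two bound statements is exactly what the paper does: both are obtained as straight contrapositives of claims (2) and (3) of Proposition~\ref{propstreamsum}, with the obvious rescaling of the threshold for the lower bound.

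Where you diverge from the paper is in the equality case. You deduce it by applying the already-established lower bound for every $\epsilon>0$ and letting $\epsilon\downarrow 0$, using that $N$ and $\vert\Sigma\vert$ are fixed. The paper instead re-enters the structural content of the proof of Proposition~\ref{propstreamsum}: it uses that every state of $G'\leftarrow s'$ has morph given by a convex combination $p\,\pitilde([\lambda],\cdot)+(1-p)\,\pitilde(q,\cdot)$ for some $q\in Q_H$, together with $\pitilde'([\lambda],\cdot)=\pitilde([\lambda],\cdot)$, to argue directly that if all such combinations equal $\mathcal{U}_\Sigma$ then $\pitilde(q,\cdot)=\mathcal{U}_\Sigma$ for every $q$. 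Your route is cleaner and entirely self-contained once the contrapositive lower bound is in hand; the paper's route is more constructive in that it pins down \emph{why} $H$ must be FWN rather than inferring it from a vanishing bound. Both are valid.
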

\begin{IEEEproof}
(Equality:) Note that $ G'\leftarrow s' = W $, and the fact that every state in $G'$ is a convex combination of $[\lambda]$ and some state $q \in Q_H$,  implies:
\cgather{
\forall x \in \Sigma^\star, \lim_{\vert s' \vert \rightarrow \infty}\phi^{s'}(x)= p \pitilde([\lambda],\cdot) + (1-p) \pitilde(q,\cdot) = \mathcal{U}_\Sigma
}
Also, since $\pitilde([\lambda],\dot) =  \lim_{\vert s' \vert \rightarrow \infty}\phi^{s'}(\lambda)$ for arbitrary input streams, it follows that:
\cgather{
\forall q \in Q_H, p \mathcal{U}_\Sigma + (1-p) \pitilde(q,\cdot) = \mathcal{U}_\Sigma 
\Rightarrow \forall q \in Q_H, \pitilde(q,\cdot) = \mathcal{U}_\Sigma
}
which establishes Eq.~\eqref{eqeq}.

The other bounds follow by taking the contrapositive of the inequalities established in Proposition~\ref{propstreamsum}.
%
%
\end{IEEEproof}
\begin{rem}
Corollary~\ref{corinvstream} implies that if the output sequence from the stream summation algorithm is FWN, then the summands are exact inverses of each other.
\end{rem}

\subsection{Stream Inversion}
\begin{lem}[Stream Summation to FWN]\label{lemstreaminv}
Let streams $s_i, i=1,\cdots, \vert \Sigma\vert$ be $\vert \Sigma \vert$ independent realizations from a PFSA $G$ defined over the alphabet $\Sigma$. And let $s'$ be generated as follows:
\begin{list}{\labelitemi}{\leftmargin=0.75em}
\item[1] Read current symbols $\sigma_i$ from $s_i$ ($i=1,\cdots,\vert\Sigma\vert$)
\item[2] If $\sigma_i\neq  \sigma_j$ for all distinct  $i,j$, then write 
$\sigma_1$ to output $s'$
\item[3] Move read positions one step to right, and go to step 1
\end{list}
Then, we have:
\cgather{
\Theta(G'\leftarrow s',W) = 0
}
where $W$ is the FWN generator for the alphabet size $\vert\Sigma\vert$.
\end{lem}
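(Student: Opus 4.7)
My plan is to show $\Theta(G'\leftarrow s',W)=0$ by proving first marginal and then block uniformity of the output stream. Taking the limit $|s_i|\to\infty$ so that the joint state process of the $n=|\Sigma|$ independent copies of $G$ is stationary and ergodic with product measure $\wp_\lambda^{\otimes n}$, the independence of the streams implies that at any fixed time $t$ the emitted tuple $(X_1^{(t)},\ldots,X_n^{(t)})$ is i.i.d.\ $\mu^{\otimes n}$, where $\mu(\sigma)=\sum_q\wp_\lambda(q)\pitilde(q,\sigma)$ is the stationary symbol distribution of $G$. The key arithmetic identity is that $\prod_i\mu(\pi(i))=\prod_\tau\mu(\tau)$ is a constant independent of $\pi\in S_\Sigma$, so conditional on the ``success'' event (the tuple being a permutation of $\Sigma$) all $n!$ permutations are equally likely; in particular the first coordinate---the output symbol---is uniform over $\Sigma$ with probability $1/n$, giving $\phi^{s'}(\lambda)=\mathcal{U}_\Sigma$.

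To upgrade marginal uniformity to full block uniformity, I would exploit the exchangeability of the $n$ i.i.d.\ streams. Consider the $n$ parallel output streams $S_i=(X_i^{(T_j)})_j$, where $T_j$ are the success times; by stream-label symmetry they are identically distributed, and at each $T_j$ the tuple $(S_1^{(j)},\ldots,S_n^{(j)})$ forms a permutation of $\Sigma$. Averaging over $i$, the expected count of streams whose $k$-block at consecutive success times equals a given $x\in\Sigma^k$ equals $n\cdot P(\text{block }x\text{ appears in }s')$, while summing over $x\in\Sigma^k$ gives total count $n$. Combined with a suitable extension of the permutation-invariance identity to multi-time joint distributions, this would force each $k$-block to have stationary probability $1/|\Sigma|^k$; ergodicity of the joint state process then yields $\phi^{s'}(x)\to\mathcal{U}_\Sigma$ for every $x\in\Sigma^\star$, giving $\Theta(G'\leftarrow s',W)=0$ by Proposition~\ref{propmetric}.

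The main obstacle is the block-uniformity step. Although marginal uniformity at each single success time is immediate from the identity $\prod_i\mu(\pi(i))=\prod_\tau\mu(\tau)$, conditioning on past outputs biases the joint state distribution of the streams at subsequent success times through the Markov dynamics, and the state-conditional distribution of the next output need not be uniform even though the state-averaged marginal is. Making the claim rigorous requires either an asymptotic-independence/mixing argument showing that conditioning on past outputs is eventually forgotten, or a direct algebraic identity that extends the permutation-invariance used for the one-time marginal to joint distributions over multiple success times. Ergodicity is the bridge between theoretical stationary block probabilities and the empirical symbolic derivatives $\hat{\zeta}$ appearing in the definition of $\Theta$.
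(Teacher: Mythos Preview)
Your marginal-uniformity step matches the paper's opening: the paper also uses the symmetry of the $n=\vert\Sigma\vert$ parallel output streams $s_i'$ and the counting observation that at each success time the $n$ output symbols form a permutation of $\Sigma$, forcing $\phi^{s'}(\lambda)=\mathcal{U}_\Sigma$. You are also right that this does not extend directly to blocks, and the obstacle you name---that conditioning on past outputs biases the joint state distribution, so the output law conditional on a generic joint state $(q_1,\ldots,q_n)$ need not be uniform---is real. Your exchangeability/block-counting sketch does not close this gap, as you yourself note.

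The paper closes it by a different route, using two ingredients absent from your plan. First, the permutation-product identity holds \emph{state by state}, not only at the stationary symbol law $\mu$: if all $n$ streams sit at the \emph{same} state $q$, then $\prod_{j}\pitilde(q,\sigma_{\pi(j)})=\prod_{\tau\in\Sigma}\pitilde(q,\tau)$ is $\pi$-independent, so conditional on success the output is uniform at every synchronized state $q$, giving $v^q=\mathcal{U}_\Sigma$ for all $q$. Second, the paper introduces a ``jump-back-to-$[\lambda]$'' construction on the canonical representation: all streams are regarded as traversing $G$ starting at $[\lambda]$; whenever a success occurs with the streams synchronized at some $q$ the output law is $v^q$, and whenever a success occurs without synchronization the streams are modeled as re-initializing to $[\lambda]$ before that success, so the output law is $v^{[\lambda]}$. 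The paper then writes $\lim_{\vert s'\vert\to\infty}\phi^{s'}(x)=\sum_{q\in Q}p(q,x)\,v^q$ for every history $x$, a convex combination over synchronized states only; since each $v^q=\mathcal{U}_\Sigma$, block uniformity follows and $\Theta(G'\leftarrow s',W)=0$.

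So the idea you are missing is precisely this reduction to synchronized states via the jump device, which sidesteps the multi-state conditioning problem rather than attacking it head-on with the mixing or multi-time algebraic identity you were reaching for.
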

\begin{IEEEproof}
Let the set of states for the minimal realization of the canonical representation for $G$ be $Q$, and the corresponding morph function be $\pitilde$. Similarly, let the state set for  $G'\leftarrow s'$ be $Q'$, and the associated probability morph function be $\pitilde'$. 

Let $s_i'$ be the sequence obtained by copying the current symbol from the input stream $s_i$ in Step (2) of the above scheme. (Thus, $s_1' = s'$.)
It is obvious from the symmetry of the scheme that:
\cgather{
\forall i,j \in \{1,\cdots,\vert\Sigma\vert \}, (G'_i \leftarrow s_i') = (G'_j \leftarrow s_j')
}
It follows that, if  $f_i^j$ is the frequency of the $j^{th}$ symbol from the alphabet in the stream $s_i'$, then we have:
\cgather{
\forall i, \lim_{\vert s_i' \vert \rightarrow \infty} \frac{f_i^j}{\vert s_i'\vert} = \pitilde'([\lambda],\cdot)\big \vert_j\\
\Rightarrow \lim_{\vert s_i' \vert \rightarrow \infty} \frac{\sum_i f_i^j}{\vert s_i'\vert} = \lim_{\vert s_i' \vert \rightarrow \infty} \frac{\vert\Sigma\vert f_i^j}{\vert s_i'\vert} = \vert\Sigma\vert \pitilde'([\lambda],\cdot)\big \vert_j
}
where we also used the fact that $\vert s_i'\vert = \vert s_i' \vert$.
Next, noting that we have an output symbol in each $s_i'$ only when each new symbol is distinct, we conclude:
\cgather{
\forall j,k, \sum_i f_i^j = \sum_i f_i^k
}
which in turn implies for the $j^{th}$ and $k^{th}$ alphabet symbols:
\mltlne{
 \lim_{\vert s_i' \vert \rightarrow \infty} \frac{ \sum_i f_i^j}{\vert s_i' \vert} =  \lim_{\vert s_i' \vert \rightarrow \infty} \frac{\vert \Sigma\vert f_i^j}{\vert s_i'\vert}  = \vert \Sigma\vert \pitilde'([\lambda],\cdot)\big \vert_j \\ = \lim_{\vert s_i' \vert \rightarrow \infty} \frac{ \sum_i f_i^k}{ \vert s_i' \vert}   =  \lim_{\vert s_i' \vert \rightarrow \infty} \frac{\vert \Sigma\vert f_i^k}{\vert s_i'\vert} = \vert \Sigma\vert \pitilde'([\lambda],\cdot)\big \vert_k 
}
and hence we conclude:
\cgather{
\pitilde'([\lambda],\cdot) = \lim_{\vert s'\vert \rightarrow \infty} \phi^{s'}(\lambda) = \mathcal{U}_\Sigma \label{eq72}
}

Next, denote the $r^{th}$ symbol in the stream $s'$ as $s'(r)$. Then, if we assume that the streams $s_i$ were all synchronized to the same state of $G$ just prior to the generation of $s'(r)$, we have:


\mltlne{
\forall \sigma_i,\sigma_k \in \Sigma, Prob(s'(r)=\sigma_i)  \\ = c\prod_{\sigma_j \in \Sigma}\pitilde(\wp_\lambda,\sigma_j) = Prob(s'(r)=\sigma_k) 
}
which implies:
\cgather{
\forall \sigma_k \in \Sigma,  Prob(s'(r)=\sigma_k) = \frac{1}{\vert \Sigma\vert}\label{eq80}
}

Next, we consider the following construction: Consider the PFSA $G$, with the streams $s_i$ traversing the transitions via the symbol-labeled arcs, with each $s_i$ initialized to the state $[\lambda]$. Note that we have a new symbol in the output $s'$ if all current symbols in the $\vert \Sigma\vert$ input  streams are distinct; which can occur in two possible ways:
\begin{enumerate}
\item all $s_i$ streams are synchronized to some state $q \in Q$, and a distinct symbol is generated for each $s_i$
\item no such synchronization; but the symbols generated are distinct
\end{enumerate}
In the second case, we assume that a re-initialization occurs; $i.e.$, all the streams jump back to state $[\lambda]$ before the distinct symbols are generated causing the new output symbol.

Note that this construction causes no loss of generality, as we are simply defining a path, with jumps, for the given input streams through the PFSA $G$. 

Denote the probability distribution of the output symbol, when the streams are synchronized at some state $q \in Q$ be $v^q$, $i.e.$ $v^q_i$ is the probability of seeing the $i^{th}$ symbol, given that we indeed have a new output symbol.
 
Next we observe that the streams $s_i$ can be assumed to be synchronized at $[\lambda]$ when the first symbol appears in the output $s'$ (since deletion of  arbitrary leading prefixes has no effect in the limit of infinite data). Thus, we have (from Eq.~\eqref{eq72}):
\cgather{
\lim_{\vert s' \vert \rightarrow \infty} \phi^{s'}(\lambda) = \mathcal{U}_\Sigma
}
 Note that the next symbol may be produced after a ``silent'' jump to some state $q \in Q$. Additionally, the probability that the jump occurs to a specific state $q$ is an explicit function of the parameters (morph probabilities, and transition structure) of the PFSA $G$. However we do not need to compute these probabilities; we simply conclude that:
\cgather{
\forall x \in \Sigma^\star, \lim_{\vert s' \vert \rightarrow \infty} \phi^{s'}(x) = \sum_{q \in Q} p(q,x) v^q\\
\textrm{where } p(q,x) \in [0,1], \sum_q p(q,x) = 1
}
Noting that Eq.~\eqref{eq80} establishes that $\forall q \in Q, v^q = \mathcal{U}_\Sigma$, we conclude:
\cgather{
\forall x \in \Sigma^\star, \lim_{\vert s' \vert \rightarrow \infty} \phi^{s'}(x) = \mathcal{U}_\Sigma
}
which establishes that $G'\leftarrow s'$ is the FWN generator. This completes the proof.
\end{IEEEproof}

\begin{prop}[Stream Inversion]\label{propstreaminversion}
Given a stream $s$ which is generated by some hidden PFSA $G$, let stream $s'$ be generated via:
\begin{list}{\labelitemi}{\leftmargin=0.75em}
\item[1] Generate $\vert \Sigma \vert-1$ independent copies of $s_1$: $s_1,\cdots, s_{\vert\Sigma\vert -1}$
 \item[2] Read current symbols $\sigma_i$ from $s_i$ ($i=1,\cdots,\vert\Sigma\vert-1$)
\item[3] If $\sigma_i\neq  \sigma_j$ for all distinct  $i,j$, then write 
$\Sigma \setminus \bigcup_{i=1}^{\vert \Sigma \vert-1}\sigma_i$
to output $s'$
\item[4] Move read positions one step to right, and go to step 1
\end{list}
Then, we have:
\cgather{
\Theta(-G,G'\leftarrow s')=0
}
\end{prop}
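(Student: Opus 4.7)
The plan is to reduce this proposition to the previously established Lemma~\ref{lemstreaminv} (the $\vert\Sigma\vert$-copy self-annihilation to FWN) combined with the equality clause of Corollary~\ref{corinvstream} and the uniqueness of inverses in the Abelian group $(\mathscr{P}^+,\oplus)$ established in Section~\ref{sec-algstruct}. The key observation is that the inversion procedure, when followed by a stream summation against a fresh independent copy of $s$, is operationally identical to the $\vert\Sigma\vert$-fold matching scheme of Lemma~\ref{lemstreaminv}.

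Concretely, I would proceed as follows. First, introduce one more independent copy $s_0$ of $s$, so that $s_0,s_1,\dots,s_{\vert\Sigma\vert-1}$ are $\vert\Sigma\vert$ mutually independent realizations from the hidden generator $G$. Running the stated inversion algorithm on $s_1,\dots,s_{\vert\Sigma\vert-1}$ yields $s'$; by construction $s'(r)$ is the unique symbol absent from $\{s_1(r),\dots,s_{\vert\Sigma\vert-1}(r)\}$ whenever those $\vert\Sigma\vert-1$ symbols are pairwise distinct, and $s'$ produces no output otherwise. Next I would feed $s_0$ and $s'$ into the stream summation procedure of Proposition~\ref{propstreamsum}: a symbol is emitted exactly when $s_0(r)=s'(r)$, which by the definition of $s'$ happens precisely when $s_0(r),s_1(r),\dots,s_{\vert\Sigma\vert-1}(r)$ are all distinct. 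The emitted symbol is $s_0(r)$, so the output of this composite pipeline coincides, letter for letter, with the output produced by Lemma~\ref{lemstreaminv} applied to the $\vert\Sigma\vert$ independent copies $s_0,s_1,\dots,s_{\vert\Sigma\vert-1}$.

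Lemma~\ref{lemstreaminv} then tells us that this composite output has a FWN generator, i.e.\ its deviation from $W$ in the metric $\Theta$ is zero. Applied to our situation this means $\Theta(G''\leftarrow s'',W)=0$, where $s''$ is the stream summation of $s_0$ and $s'$. Invoking the equality clause of Corollary~\ref{corinvstream} (a stream-summation output that matches FWN forces the two summand generators to sum to $W$), I conclude
\begin{equation*}
(G\leftarrow s_0)\;\pfsum\;(G'\leftarrow s')\;=\;W.
\end{equation*}
Since $s_0$ is an independent realization of $G$, the left summand equals $G$. By Proposition~\ref{prop:abelian} the inverse of $G$ in the Abelian group $(\mathscr{P}^+,\oplus)$ is unique, hence $G'\leftarrow s' = -G$ in the sense of Definition~\ref{def:equPFSA}, which gives $\Theta(-G,\,G'\leftarrow s')=0$ as required.

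The main obstacle I anticipate is subtle and technical rather than deep: one must justify that appending the auxiliary stream $s_0$ and interleaving the summation step is a legitimate manipulation, in particular that the stream-summation correctness (Proposition~\ref{propstreamsum}) and Corollary~\ref{corinvstream}'s equality clause both still apply when one of the summands is itself produced by a selective-erasure procedure from other independent sources. This rests on the independence of $s_0$ from $s'$ (which is immediate by construction of the copies) and on the fact that the limit statements in the relevant Propositions are asymptotic in the lengths of the inputs, so any finite transient caused by stream shortening through selective erasure is absorbed in the $\vert s\vert\rightarrow\infty$ limit. Aside from this bookkeeping, all the algebraic content is already in place.
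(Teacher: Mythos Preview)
The paper disposes of this proposition in one line, ``Follows immediately from Lemma~\ref{lemstreaminv},'' which should be read as saying that the \emph{proof technique} of that Lemma transfers directly. With $\vert\Sigma\vert-1$ independent copies synchronized at a state $q$, the missing symbol equals $\sigma_i$ with probability proportional to $\prod_{j\neq i}\pitilde(q,\sigma_j)\propto 1/\pitilde(q,\sigma_i)$, which is precisely the morph of $-G$ at $q$ (cf.\ the inverse construction in the proof of Proposition~\ref{prop:abelian}); the Lemma's jump-construction argument then carries over verbatim to identify the output generator as $-G$.

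Your black-box reduction is a different and more elaborate route, and it has a genuine gap at the ``letter-for-letter coincidence'' step. The stream-summation algorithm of Table~\ref{tab1} compares its two inputs at \emph{consecutive} indices: the $k$-th symbol of $s_0$ against the $k$-th symbol of the compressed stream $s'$. But the $k$-th symbol of $s'$ was produced at original position $r_k$ in $s_1,\dots,s_{\vert\Sigma\vert-1}$ (the $k$-th position at which those $\vert\Sigma\vert-1$ symbols happened to be pairwise distinct), and in general $r_k\neq k$. Lemma~\ref{lemstreaminv}'s scheme, by contrast, compares $s_0$ against the missing symbol \emph{at the same original index $r$}. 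So running inversion to completion and then feeding the shortened $s'$ into stream summation with $s_0$ does \emph{not} reproduce the Lemma's pipeline, and the two outputs are not the same process. The obstacle you flag at the end (whether Proposition~\ref{propstreamsum} and Corollary~\ref{corinvstream} apply when one summand is a selective-erasure product) is not the real problem---those results hold for any independent pair of streams with PFSA generators---the problem is that you have not established that the summed stream is FWN, and the attempted identification with the Lemma's output is precisely where the argument breaks. Repairing your route would require showing that subsampling $s_0$ at the random positions $(r_k)$ (independent of $s_0$ but not i.i.d.\ geometric) leaves the summation generator unchanged, which is not free; the direct adaptation of the Lemma's per-state computation avoids all of this.
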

\begin{IEEEproof}
Follows immediately from Lemma~\ref{lemstreaminv}.
\end{IEEEproof}
\begin{prop}[Asymptotic Complexity]\label{propcompleity}
 The asymptotic time complexity of carrying out the stream operations is $O(\vert s \vert \vert \Sigma \vert$
\end{prop}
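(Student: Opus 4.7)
The plan is to verify the $O(|s||\Sigma|)$ bound by treating each of the four stream operations in Table~\ref{tab1} in turn, showing that each makes at most a linear number of passes over its input(s) while spending at most $O(|\Sigma|)$ work per symbol position. Because the full annihilation circuit in Fig.~\ref{fig2}D is just a fixed-size composition of these primitives, any per-primitive bound of $O(|s||\Sigma|)$ propagates to the whole circuit with a constant blowup.

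First I would handle the two simplest cases. Stream Summation reads one symbol from each of $s_1$ and $s_2$, performs one equality test in $\Sigma$, and either emits or discards, so the cost is $O(1)$ per input position for $O(|s|)$ total. Independent Stream Copy draws one symbol from a FWN generator (uniform sampling from $\Sigma$ in $O(1)$ amortized per draw), performs one comparison, and writes or skips, again giving $O(|s|)$. In both cases the bound $O(|s||\Sigma|)$ holds trivially.

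The main work is Stream Inversion, where I would track two stages. Stage one invokes Independent Stream Copy $|\Sigma|-1$ times to produce the auxiliary streams $s_1,\ldots,s_{|\Sigma|-1}$; by the previous paragraph this is $O(|\Sigma|\cdot|s|)$. Stage two walks the $|\Sigma|-1$ auxiliary streams in lockstep; at each position the pairwise-distinctness test can be implemented in $O(|\Sigma|)$ time using a length-$|\Sigma|$ occupancy array indexed by alphabet symbol (avoiding the naive $|\Sigma|^2$ comparison). Because each auxiliary stream has expected length $|s|/|\Sigma|$ (one symbol survives per FWN match), the number of iterations is $O(|s|/|\Sigma|)$, and this stage contributes $O(|s|)$. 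Summing the two stages yields $O(|s||\Sigma|)$. For Deviation from FWN, the prescribed depth $\ell=\ln(1/\epsilon^\star)/\ln|\Sigma|$ makes $|\Sigma|^{\ell}=1/\epsilon^\star$ a constant in $|s|$; all counts $\#^s(x\sigma_i)$ for $|x|\leqq\ell$ can be accumulated in a single sliding-window pass over $s$ that updates $O(\ell)$ counters per position, after which the weighted $\ell_\infty$ sum costs $O(|\Sigma|^{\ell+1})=O(1)$, so the estimator is likewise $O(|s||\Sigma|)$.

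The step I expect to be the main obstacle is making the expected-length argument for Stream Inversion deterministic rather than probabilistic: strictly speaking the walks in stage two terminate when the shortest auxiliary stream is exhausted, so I would either state the result as expected time (invoking the fact that the copy probability is exactly $1/|\Sigma|$ at every position, independent of hidden state, by Proposition~\ref{propstreamcopy}) or prove a high-probability version via a Chernoff bound on the number of surviving symbols. A secondary care-point is the implicit random-access model assumed for uniform sampling in $O(1)$ per draw; if the sampling primitive is only $O(\log|\Sigma|)$, the bound degrades only in the logarithmic sense and is still dominated by $|s||\Sigma|$.
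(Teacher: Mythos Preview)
Your proposal is correct and follows essentially the same approach as the paper: decompose into the individual stream operations, observe that copy and summation are $O(|s|)$ symbol-by-symbol procedures with constant work per position, and note that inversion's dominant cost is generating the $|\Sigma|-1$ independent copies, yielding $O(|s||\Sigma|)$. The paper's proof is considerably terser than yours---it stops after the copy-count observation and does not separately analyze the distinctness test in stage two of inversion, the deviation-from-FWN estimator, or the expected-versus-deterministic subtlety you raise; your treatment is strictly more careful on all these points.
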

\begin{IEEEproof}
 The algorithmic steps in each of the operations of stream copy and stream summation  proceed in a symbol-by-symbol fashion, with no memory of previous symbols. Also, each step involves a constant number of integer comparisons. Assuming that each new symbol from the FWN processes involved can be generated with constant complexity, we conclude that the asymptotic time complexity of both stream summation and stream copy is $O(\vert s \vert$. The stream inversion operation needs to generate $\vert \Sigma \vert -1$ stream copies, implying that its asymptotic time complexity is $O(\vert s \vert \vert \Sigma \vert)$.
\end{IEEEproof}
\begin{figure}[t]
\hspace{10pt}
 \begin{tikzpicture}
\begin{scope}[scale=1, shift={(9,0)},font=\bf \fontsize{8}{8}\selectfont]
 \pgfplotsset{every axis legend/.append style={
at={(0.265,0.625)},
anchor=south}}
\begin{semilogyaxis}[ legend cell align=left,legend style={draw=gray,font= \fontsize{8}{8}\selectfont},
axis line style={black, opacity=0.5,  thick, rounded corners=0pt},
axis on top=true, 
scale=1,grid style={dashed, gray!30},
enlargelimits=false, 
width=2.5in, 
height=2in,     
,ymax=1, 
 ymin=0.00001, 
semithick,grid, 
axis background/.style={fill=black!0,},
xlabel={Alphabet Size ($\vert \Sigma \vert$)},
yticklabel style={xshift=.025in},
ylabel={Upper bound on \\ annihilation efficiency},
ylabel style={, align=center,yshift=.1in},
    scaled x ticks = false,   
      x tick label style={/pgf/number format/fixed,
      /pgf/number format/1000 sep = \thinspace 
      } 
  ];
\input{Figures/beta.tex}
\node at (axis cs:2.75,.25) {0.25};
\node at (axis cs:3.75,.075) {0.074};
\node at (axis cs:4.75,.024) {0.023};
\end{semilogyaxis}
\end{scope}
\end{tikzpicture}
\vspace{-5pt}

\captionN{Upper bound on annihilation efficiency $ \frac{(\vert \Sigma \vert -1)! }{ \vert \Sigma \vert^{\vert \Sigma \vert}}$ vs  alphabet size $\vert \Sigma \vert$. This illustrates why having a fine quantization would necessitate large amounts of data to pass the self-annihilation test.}\label{figS5}
\end{figure}
\subsection{Annihilation Efficiency}
To pass the self-annihilation test, a data stream must be sufficiently long; and  the required length $\vert s \vert $ of the  input $s$ with a specified threshold $\epsilon^\star$ is dictated by the characteristics of the generating process. Thus  the rate of convergence of the self-annihilation error as a function of  $\vert s\vert$ quantifies the sample  complexity of information annihilation. Let $s'$ be obtained from $s$ via stream inversion, and $s''$ be obtained via stream summation of $s$ and $s'$. Then, it follows that $s''$ is always a realization of the FWN process, which has an uniform probability of generating any symbol at any point. Thus, for any $x\in \Sigma^\star$, the vectors $\phi^{s''}(x)$ (See Table~\ref{tab1}, row 4) are empirical distributions which converge to the flat distribution as $\vert s''\vert \rightarrow \infty$. Additionally,  the Central Limit Theorem (CLT)~\cite{fell45} dictates  the convergence rate to scale as $1/\sqrt{\vert s''\vert}$ irrespective of the generating process for the input $s$. However, 
 selective erasure in annihilation (See Table~\ref{tab1}) implies that $\vert s''\vert < \vert s\vert$, and the expected shortening ratio $\beta = \mathbf{E}(\vert s''\vert / \vert s\vert) $ does indeed depend on the generating process. We refer to  $\beta$ as the \textit{annihilation efficiency}, since the convergence rate of the self-annihilation error scales as $1/\sqrt{\beta\vert s\vert}$. 
 
 Next, we compute $\beta$ in terms of the symbol frequencies:

\begin{prop}\label{propbeta}
 Given an input stream $s$, let stream $s'$ be produced via stream inversion from $s$, and let $s''$ be produced via stream summation of $s$ and $s'$. Let $p_i$ be the probability of observing symbol $\sigma_i \in \Sigma$. Then, we have
\cgather{
\beta = \mathbf{E}(\vert s''\vert / \vert s \vert ) = (\vert \Sigma \vert -1)!\prod_i p_i 
}
\end{prop}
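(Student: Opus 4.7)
The plan is to compute the per-step probability that a symbol is appended to $s''$, then invoke the strong law of large numbers to convert this into the ratio $\beta = \mathbf{E}(|s''|/|s|)$.

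First, I would set up the joint randomness. At step $t$, let $X_t \sim p$ be the current symbol of $s$ and let $Y_t^{(1)},\ldots,Y_t^{(|\Sigma|-1)} \sim p$ be the current symbols of the $|\Sigma|-1$ independent copies used internally by the stream-inversion procedure (see Table~\ref{tab1} and Proposition~\ref{propstreaminversion}). By construction these $|\Sigma|$ draws are mutually independent with common distribution $p$.

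Next, I would characterize the event that causes a symbol to be recorded in $s''$ at step $t$. Inversion emits a symbol at step $t$ precisely when $Y_t^{(1)},\ldots,Y_t^{(|\Sigma|-1)}$ are pairwise distinct, in which case it writes the unique missing letter $\mu_t = \Sigma \setminus \{Y_t^{(1)},\ldots,Y_t^{(|\Sigma|-1)}\}$. Stream summation (Proposition~\ref{propstreamsum}) then records a symbol in $s''$ iff $X_t = \mu_t$, which together with the first condition is equivalent to requiring that $X_t, Y_t^{(1)},\ldots,Y_t^{(|\Sigma|-1)}$ jointly realize a permutation of $\Sigma$ in which $X_t$ occupies the distinguished role of the ``missing-from-the-copies'' slot.

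The decisive step is the combinatorial count. Because $X_t$ plays a distinguished role in the summation (it is compared against $\mu_t$, not treated as exchangeable with the inversion streams), the ordered assignments of the $|\Sigma|-1$ copy random variables to the remaining $|\Sigma|-1$ letters of $\Sigma$ number $(|\Sigma|-1)!$; by mutual independence each such ordering has probability $\prod_i p_i$. This yields
\begin{equation*}
\Pr(\text{symbol written to } s'' \text{ at step } t) \;=\; (|\Sigma|-1)!\,\prod_i p_i .
\end{equation*}
Stationarity of $s$ and independence across $t$ then let me invoke the strong law of large numbers on the indicator sequence, giving $|s''|/|s| \to (|\Sigma|-1)!\prod_i p_i$ almost surely, which on taking expectations is exactly the claimed $\beta$.

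The main obstacle will be the bookkeeping in the combinatorial step: I need to justify carefully that $s$'s distinguished role in the annihilation circuit (as opposed to the interchangeable inversion copies) reduces the naive count $|\Sigma|!$ to $(|\Sigma|-1)!$, and verify that any position/time misalignment between $s$ and $s'$ introduced by the selective-erasure nature of the summation operation does not asymptotically perturb the per-step probability.
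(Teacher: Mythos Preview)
Your single-timestep model collapses three sequential erasure operations (independent stream copy, inversion, summation) into one synchronous draw of $|\Sigma|$ symbols, and in doing so you incur two errors that happen to cancel.

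First, the combinatorial count. Under your model the event ``a symbol is written to $s''$'' is exactly ``$(X_t,Y_t^{(1)},\ldots,Y_t^{(|\Sigma|-1)})$ is a permutation of $\Sigma$''. The fact that $X_t$ sits in a distinguished coordinate of the tuple does not restrict its \emph{value}: it is still a random variable taking $|\Sigma|$ possible values. For each fixed value $X_t=\sigma_j$ there are $(|\Sigma|-1)!$ orderings of the $Y$'s, each joint outcome having probability $\prod_i p_i$; summing over $j$ gives $|\Sigma|\cdot(|\Sigma|-1)!\prod_i p_i = |\Sigma|!\prod_i p_i$, not $(|\Sigma|-1)!\prod_i p_i$.

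Second, you have silently dropped the $1/|\Sigma|$ shortening that comes from the independent-stream-copy operation. In the actual pipeline the $|\Sigma|-1$ copies each have expected length $|s|/|\Sigma|$, so the inversion loop runs for only $\approx |s|/|\Sigma|$ positions, and the subsequent summation runs for only $\approx |s'|$ positions since $|s'|\ll |s|$. Your setup, by placing fresh $Y_t^{(k)}$ at every index $t$ of $s$, treats the copies as if they had full length $|s|$; the misalignment you flag at the end is not a small perturbation but precisely this missing $1/|\Sigma|$ factor.

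The paper's argument avoids this by tracking the three stages separately: the copies have expected length $|s|/|\Sigma|$; inversion then retains a fraction $\alpha=(|\Sigma|-1)!\sum_j\prod_{i\neq j}p_i$ of those positions; and summation of $s$ with $s'$ retains a further fraction equal to the match probability between the marginals of $s$ and $s'$. Multiplying the three shortening factors yields $(|\Sigma|-1)!\prod_i p_i$. If you want a clean per-step argument, the natural index is position in the shortest intermediate stream (the copies), not position in $s$.
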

\begin{IEEEproof}
 To generate $s'$ from $s$, we first need to generate $\vert \Sigma \vert -1$ independent stream copies of $s$. It is clear from the stated algorithm for independent stream copy (See Table~\ref{tab1} in main text and Proposition~\ref{propstreamcopy}) that the expected length of each of these copies is $\frac{1}{\vert \Sigma\vert} \vert s\vert$. The probability of obtaining a symbol in the output by comparing these $\vert \Sigma \vert -1$ streams (to get $s'$) is simply the probability of seeing a different symbol in each of the copied streams (as stated in the algorithm for stream inversion in Table~\ref{tab1} of main text). Denoting this probability as $\alpha$, we have:
\mltlne{
\alpha = (\vert \Sigma \vert -1)!\sum_{j=1}^{\vert \Sigma \vert} \prod_{i\neq j} p_i\\
= (\vert \Sigma \vert -1)! \left \{ \prod_{i} p_i \right \}\left \{ \sum_{i} \frac{1}{p_i} \right \}
= \frac{1}{H} \vert \Sigma \vert! \prod_{i} p_i \label{eq32}
}
where $H$ is the harmonic mean of the probability vector $p$.
Thus, the expected length of $s'$ is $\alpha \vert s \vert$. The final step is stream summation of $s$ and $s'$ to obtain $s''$. We note that the probability of seeing symbol $\sigma_i$ in the inverted stream $s'$ is $k/p_i$, where $k=\frac{1}{\sum_i \frac{1}{p_i}} = \frac{H}{\vert \Sigma \vert}$. It follows that stream summation of $s$ and $s'$, would result in an expected length of $\frac{H}{\vert \Sigma \vert}\vert s'\vert$, which when combined with Eq.~\eqref{eq32} completes the proof. 
\end{IEEEproof}
\begin{cor}
 Annihilation efficiency satisfies:
\cgather{
(\vert \Sigma \vert -1)! \eta^{\vert \Sigma \vert} \{ \frac{1}{\eta} +1 -\vert \Sigma \vert \} \leqq \beta \leqq \frac{(\vert \Sigma \vert -1)! }{ \vert \Sigma \vert^{\vert \Sigma \vert}}
}
where $\eta$ is the probability of occurrence of the rarest symbol in the input stream $s$.
\end{cor}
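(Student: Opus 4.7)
My starting point is Proposition~\ref{propbeta}, which identifies $\beta = (|\Sigma|-1)!\prod_i p_i$, so the corollary reduces to bounding the product $\Pi(p) \triangleq \prod_{i=1}^{|\Sigma|} p_i$ over the feasible set $\mathcal{F}_\eta = \{p \in \mathbb{R}^{|\Sigma|} : p_i \geqq \eta \text{ for all } i,\ \sum_i p_i = 1\}$. The two inequalities will fall out as the maximum and minimum of $\Pi$ on $\mathcal{F}_\eta$.

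For the \emph{upper bound}, I would simply apply the AM--GM inequality to $p_1,\ldots,p_{|\Sigma|}$: since $\sum_i p_i = 1$, we have $\Pi(p) \leqq \left(\frac{1}{|\Sigma|}\right)^{|\Sigma|}$, with equality iff all $p_i$ are equal to $1/|\Sigma|$. Multiplying by $(|\Sigma|-1)!$ yields the stated upper bound $\beta \leqq (|\Sigma|-1)!/|\Sigma|^{|\Sigma|}$. Note this bound is tight exactly when the observed stream is already close to FWN, which is consistent with the intuition that uniform statistics make selective erasure most efficient.

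For the \emph{lower bound}, the key observation is that $\log \Pi(p) = \sum_i \log p_i$ is \emph{concave} in $p$ on $\mathcal{F}_\eta$; hence $\Pi$ is minimized at an extreme point of the polytope $\mathcal{F}_\eta$. The polytope is cut out by the single equality $\sum_i p_i = 1$ together with the $|\Sigma|$ inequalities $p_i \geqq \eta$, so its extreme points are exactly the vectors obtained by saturating $|\Sigma|-1$ of the inequality constraints. Up to permutation, every such extreme point has $|\Sigma|-1$ coordinates equal to $\eta$ and the remaining coordinate equal to $1-(|\Sigma|-1)\eta$ (which is $\geqq \eta$ precisely because $\eta \leqq 1/|\Sigma|$, consistent with $\eta$ being the rarest symbol probability). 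Evaluating $\Pi$ at such a point gives $\eta^{|\Sigma|-1}\bigl(1-(|\Sigma|-1)\eta\bigr)$, and after factoring $\eta^{|\Sigma|}$ this equals $\eta^{|\Sigma|}\bigl(\eta^{-1} + 1 - |\Sigma|\bigr)$. Multiplying by $(|\Sigma|-1)!$ gives exactly the claimed lower bound.

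The proof is essentially a one-line optimization on the simplex, so I do not anticipate a real obstacle; the only subtlety worth spelling out is the feasibility check $1-(|\Sigma|-1)\eta \geqq \eta$ that justifies saturating $|\Sigma|-1$ constraints at once, and the concavity argument that rules out interior minimizers. Both are standard and need only a sentence each.
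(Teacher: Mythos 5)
Your proposal is correct and follows essentially the same route as the paper: the paper's own proof simply asserts that $\prod_i p_i$ is maximized at the uniform distribution and minimized (given $\eta=\min_i p_i$) when all but one coordinate equal $\eta$, yielding $\eta^{\vert\Sigma\vert-1}\bigl(1-(\vert\Sigma\vert-1)\eta\bigr)$. Your AM--GM step and the concave-minimization/extreme-point argument (with the feasibility check $1-(\vert\Sigma\vert-1)\eta\geqq\eta$) just supply the standard justifications the paper leaves implicit.
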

\begin{IEEEproof}
 The upper bound follows from noting that the product $\prod_i p_i$ is maximized when $\forall i, p_i = 1/\vert \Sigma\vert$. The lower bound is obtained by assuming that $\eta = \min_i p_i$, upon which the minimum value of the product $\prod_i p_i$ is given by:
\cgather{
\eta^{\vert \Sigma \vert -1}(1-\eta (\vert \Sigma \vert -1)) = \eta^{\vert \Sigma \vert }\{ \frac{1}{\eta} +1 -\vert \Sigma \vert \}
}
\end{IEEEproof}
\begin{rem}
 The upper bound for the annihilation efficiency is realized if the input  $s$ is FWN. 
\end{rem}
\subsection{Distance Between Hidden Generators}
\begin{defn}[FWN Deviation Estimators]\label{defdevest}
 For a string $s$, the complete white noise deviation estimator $\zeta(s)$ is defined as:
\cgather[5pt]{
\zeta(s) =\frac{\vert \Sigma \vert -1}{\vert \Sigma\vert}\sum_{x \in \Sigma^\star} \left \{\frac{1}{\vert \Sigma \vert^{2\vert x \vert}} \vert \vert \phi^s(x) - \mathcal{U}_\Sigma  \vert \vert_\infty\right \}
}
And the partial white noise deviation estimator $\hat{\zeta}(s,\ell)$ is defined:
\cgather[5pt]{
\hat{\zeta}(s,\ell) =\frac{\vert \Sigma \vert -1}{\vert \Sigma\vert}\sum_{x \in \Sigma^\ell} \left \{\frac{1}{\vert \Sigma \vert^{2\vert x \vert}} \vert \vert \phi^s(x) - \mathcal{U}_\Sigma  \vert \vert_\infty\right \}
}
which only carries out the summation for all strings up to length $\ell$. 
\end{defn}
\begin{prop}[Causality Claim 1]\label{prophiddenmodel}
 Given a stream $s$, and denoting the hidden generator for $s$ as $G_s$, and the zero model as $W$, we have:
\cgather[5pt]{
\lim_{\vert s \vert \rightarrow \infty} \vert \Theta(G_s,W) -\zeta(s) \vert =0 
 }
\end{prop}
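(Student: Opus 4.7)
The strategy is to recognize $\zeta(s)$ as a term-by-term empirical analogue of the closed-form expression for $\Theta(G_s,W)$ given by Proposition~\ref{propmetric}, and then justify an interchange of limit and sum.

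First, I would specialize Proposition~\ref{propmetric} to the case $G_1=G_s$, $G_2=W$. Because $W$ is the single-state PFSA with morph function identically equal to $\mathcal{U}_\Sigma$, any realization $s_W$ of $W$ satisfies $\lim_{|s_W|\to\infty}\phi^{s_W}_W(x)=\mathcal{U}_\Sigma$ almost surely for every fixed $x\in\Sigma^\star$, by the Glivenko--Cantelli bound used inside the proof of Proposition~\ref{thmsymderiv}. This already formally suggests
\[
\Theta(G_s,W)\;=\;\frac{|\Sigma|-1}{|\Sigma|}\lim_{|s|\to\infty}\sum_{x\in\Sigma^\star}\frac{\|\phi^s(x)-\mathcal{U}_\Sigma\|_\infty}{|\Sigma|^{2|x|}}\;=\;\lim_{|s|\to\infty}\zeta(s),
\]
which is exactly the claim. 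The real content is in justifying the double interchange of limits (over $|s|$, $|s_W|$) and the countable sum over $\Sigma^\star$.

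Second, I would set up a dominated-convergence argument. For every $x\in\Sigma^\star$ and every pair of realizations, $\|\phi^s(x)-\phi^{s_W}_W(x)\|_\infty\leqq 1$ and $\|\phi^s(x)-\mathcal{U}_\Sigma\|_\infty\leqq 1$, so each term of the defining series is bounded in absolute value by $|\Sigma|^{-2|x|}$. Since there are $|\Sigma|^{|x|}$ strings of length $|x|$, the bounding series $\sum_x|\Sigma|^{-2|x|}=\sum_{k\geqq 0}|\Sigma|^{-k}=|\Sigma|/(|\Sigma|-1)$ is finite (this is precisely the calculation in Corollary~\ref{cormetricbounds}). Dominated convergence on the counting measure over $\Sigma^\star$ therefore permits the limit and the sum to be exchanged, reducing everything to pointwise convergence for each fixed $x$.

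Third, for each fixed $x$ I need to show
\[
\lim_{|s|\to\infty}\bigl\|\phi^s(x)-\mathcal{U}_\Sigma\bigr\|_\infty\;=\;\lim_{\substack{|s|\to\infty\\|s_W|\to\infty}}\bigl\|\phi^s(x)-\phi^{s_W}_W(x)\bigr\|_\infty\quad\text{a.s.}
\]
By the triangle inequality the two quantities differ by at most $\|\phi^{s_W}_W(x)-\mathcal{U}_\Sigma\|_\infty$, which tends to $0$ almost surely as $|s_W|\to\infty$ by Glivenko--Cantelli applied to the i.i.d. sequence generated by $W$. This gives the desired pointwise equality for every $x$ where $\phi^s(x)$ has a well-defined almost-sure limit (for example any $\epsilon$-synchronizing $x$ via Proposition~\ref{thmsymderiv}); for strings that are not strongly synchronizing, $\phi^s(x)$ still converges almost surely by stationarity and ergodicity of $\mathcal{H}$, to the conditional symbol distribution induced on the equivalence class of $x$ under $\sim_N$, which is precisely the quantity that appears inside $\Theta(G_s,W)$.

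The main obstacle I expect is the last point: handling strings $x$ for which the probabilistic Nerode class $[x]$ is only approached as a mixture rather than a delta, so that the limit of $\phi^s(x)$ is an $\epsilon$-approximation of some $\pitilde([x],\cdot)$ rather than an exact match. The cleanest way I would handle this is to avoid any claim of identifying $\lim\phi^s(x)$ with a specific PFSA state distribution, and instead argue directly from the almost-sure existence of the limit (ergodic averages of bounded functions of $\mathcal{H}$) combined with the triangle-inequality comparison to $\phi^{s_W}_W(x)\to\mathcal{U}_\Sigma$. Once pointwise convergence is in hand, dominated convergence closes the argument and yields $|\Theta(G_s,W)-\zeta(s)|\to 0$ as $|s|\to\infty$.
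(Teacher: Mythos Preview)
Your approach is correct and shares the same core ingredients as the paper's proof: Glivenko--Cantelli for pointwise convergence of $\phi^s(x)$ to its limiting value, together with the summable dominating sequence $|\Sigma|^{-2|x|}$ to pass the limit through the infinite sum. The paper's argument is organized slightly differently and more directly: rather than going through the two-stream definition of $\Theta$ and handling the $s_W\to\infty$ limit separately, it introduces $\phi^\star(x)=\lim_{|s'|\to\infty}\phi^{s'}(x)$ at the outset, writes $\Theta(G_s,W)$ directly in terms of $\|\phi^\star(x)-\mathcal{U}_\Sigma\|_\infty$, and then applies the triangle inequality termwise in both directions to obtain the single bound
\[
|\Theta(G_s,W)-\zeta(s)|\;\leqq\;\frac{|\Sigma|-1}{|\Sigma|}\sum_{x\in\Sigma^\star}\frac{\|\phi^\star(x)-\phi^s(x)\|_\infty}{|\Sigma|^{2|x|}},
\]
after which Glivenko--Cantelli kills each term. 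This avoids your detour through $\phi^{s_W}_W(x)\to\mathcal{U}_\Sigma$ and the discussion of synchronizing versus non-synchronizing strings. On the other hand, the paper is silent on the interchange of limit and infinite sum that you explicitly justify via dominated convergence, so your version is in that respect more complete.
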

\newcommand{\supdiffphi}{\vert \vert \phi^s(x) -\phi^\star(x) \vert \vert_\infty)}
\begin{IEEEproof}
For a string $s'$ generated by the model $G(s)$, we denote $\displaystyle\lim_{\vert s' \vert \rightarrow \infty} \phi^{s'}(x)$ as $\phi^\star(x)$. Then:
\mltlne{
\Theta(G(s),W)= \frac{\vert \Sigma \vert -1}{\vert \Sigma\vert}\sum_{x \in \Sigma^\star} \frac{1}{\vert \Sigma \vert^{2\vert x \vert}} \vert \vert \phi^\star(x) - \mathcal{U}_\Sigma    \vert \vert_\infty \notag \\
\shoveleft{\leqq  \frac{\vert \Sigma \vert -1}{\vert \Sigma\vert}\sum_{x \in \Sigma^\star} \left \{ \frac{1}{\vert \Sigma \vert^{2\vert x \vert}} \vert \vert \phi^\star(x) - \phi^s(x)\vert \vert_\infty \right \}} \\ \shoveright{+ \frac{\vert \Sigma \vert -1}{\vert \Sigma\vert}\sum_{x \in \Sigma^\star} \left \{\frac{1}{\vert \Sigma \vert^{2\vert x \vert}} \vert \vert \phi^s(x) - \mathcal{U}_\Sigma  \vert \vert_\infty\right \}}\notag\\
\leqq  \underbrace{\frac{\vert \Sigma \vert -1}{\vert \Sigma\vert}\sum_{x \in \Sigma^\star} \left \{\frac{1}{\vert \Sigma \vert^{2\vert x \vert}} \vert \vert \phi^\star(x) - \phi^s(x)  \vert \vert_\infty \right \}}_{B}+ \zeta(s) 
}
Starting with $\zeta(s)$ on the RHS, we end up with:
\mltlne{
\zeta(s) \leqq \Theta(G(s),W) \\ + \frac{\vert \Sigma \vert -1}{\vert \Sigma\vert}\sum_{x \in \Sigma^\star} \left \{\frac{1}{\vert \Sigma \vert^{2\vert x \vert}} \vert \vert \phi^\star(x) - \phi^s(x)  \vert \vert_\infty \right \}
}
which then implies:
\cgathers{
 \vert \Theta(G_s,W) -\zeta(s) \vert \leqq \frac{\vert \Sigma \vert -1}{\vert \Sigma\vert}\sum_{x \in \Sigma^\star} \left \{\frac{1}{\vert \Sigma \vert^{2\vert x \vert}} \vert \vert \phi^\star(x) - \phi^s(x)  \vert \vert_\infty \right \}
}
We note that $\phi^s(x)$ is an empirical estimate of $\phi^\star(x)$, which then implies via the Glivenko-Cantelli theorem~\cite{Fl70} that
\cgather[5pt]{
\forall x \in \Sigma^\star, \supdiffphi \xrightarrow{a.s.} 0
}
which completes the proof.
\end{IEEEproof}
\vspace{5pt}

Finally we establish our causality claim:
while the deviation from FWN  is estimated by function $\hat{\zeta}(s,\ell)$ from a finite observed  string $s$ and consideration of finite  histories of length bounded by $\ell$,  it converges to the deviation of the underlying process from FWN  in the limit of infinite data (See next proposition). It thus follows that the distance $\epsilon$ calculated by annihilating a stream $s$ against a second stream $s'$ converges to the absolute deviation of $G_s -G_{s'}$ from the FWN generator $W$.
%
\begin{figure}[t]
\tikzexternaldisable
\centering 
\definecolor{blcol}{RGB}{150,150,250}
\definecolor{murcol}{RGB}{255,150,150}
\definecolor{healcol}{RGB}{150,235,150}
\definecolor{colzeta}{RGB}{250,250,200}
\definecolor{leftpcol}{RGB}{200,255,200}
\definecolor{rightpcol}{RGB}{200,200,255}
\definecolor{boxcol}{RGB}{210,200,200}
\definecolor{linecol}{RGB}{200,180,180}
\definecolor{colplus}{RGB}{235,220,220}
\definecolor{colinv}{RGB}{200,220,255}
\definecolor{cof}{RGB}{219,144,71}
\definecolor{pur}{RGB}{200,200,200}
\definecolor{greeo}{RGB}{91,173,69}
\definecolor{greet}{RGB}{52,111,72}
 \definecolor{nodecol}{RGB}{180,180,220}
 \definecolor{nodeedge}{RGB}{240,248,255}
  \definecolor{nodecolb}{RGB}{220,180,180}
  \definecolor{nodecolc}{RGB}{180,220,180}
  \definecolor{nodecolcD}{RGB}{100,160,100}
  \definecolor{nodecolW}{RGB}{190,190,190}
  \definecolor{edgecol}{RGB}{160,160,180}
 \definecolor{nodecolD}{RGB}{140,140,180}
  \definecolor{nodecolbD}{RGB}{180,140,140}
\tikzset{%
fshadow/.style={      preaction={
         fill=black,opacity=.1,
         path fading=circle with fuzzy edge 20 percent,
         transform canvas={xshift=1mm,yshift=-1mm}
       }}
}
\begin{tikzpicture}
\newsavebox\Gtwostate
\sbox{\Gtwostate}{%
\begin{scope}[->,>=stealth',shorten >=1pt,auto,node distance=1.2cm,
                    semithick,scale=.65,font=\bf \fontsize{6}{6}\selectfont]
  \tikzstyle{every state}=[fill=DodgerBlue3,draw=nodeedge,text=black,minimum size=3, text width=2,scale=1,fshadow, opacity=.5, text opacity=1]
  \node[state] (A)         []           {$\mspace{-6mu}q_1$};
  \node[state]         (B) [right of=A] {$\mspace{-6mu}q_2$};
  \path (A) edge   [draw=edgecol,bend left]           node {$\sigma_1\vert 0.7$} (B)
        (A) edge [draw=edgecol,in=120,out=60,loop,above] node {$\sigma_0\vert 0.3$} (A)
        (B) edge [draw=edgecol,in=60,out=120,loop,above] node {$\sigma_1\vert 0.9$} (B)
            edge   [draw=edgecol,bend left]           node {$\sigma_0\vert 0.1$} (A);
\end{scope}
}
\newsavebox\Gfourstate
\sbox{\Gfourstate}{%
\begin{scope}[->,>=stealth',shorten >=1pt,auto,node distance=1.2cm,
                    semithick,scale=.65,font=\bf \fontsize{6}{6}\selectfont]
  \tikzstyle{every state}=[fill=Red1,draw=nodeedge,text=black,minimum size=3, text width=2,scale=1,fshadow, opacity=.5, text opacity=1]
  \node[state] (A)         []           {$\mspace{-6mu}q_1$};
  \node[state]         (B) [right of=A] {$\mspace{-6mu}q_2$};
  \node[state]         (C) [below of=B] {$\mspace{-6mu}q_3$};
  \node[state]         (D) [below of=A] {$\mspace{-6mu}q_4$};
  \path 
(A) edge   [draw=edgecol,bend left]           node {$\sigma_1\vert 0.7$} (B)
(B) edge   [draw=edgecol,bend left, below]           node [rotate=90] {$\sigma_1\vert 0.7$} (C)
(C) edge   [draw=edgecol,bend left]           node {$\sigma_0\vert 0.7$} (D)
(D) edge   [draw=edgecol,bend left,above]           node [rotate=90]{$\sigma_0\vert 0.7$} (A)
        (A) edge [draw=edgecol,in=170,out=120,loop,above] node [xshift=.075in]{$\sigma_0\vert 0.3$} (A)
        (C) edge [draw=edgecol,in=-10,out=-60,loop,right] node [yshift=0.05in]{$\sigma_1\vert 0.9$} (C)
        (D)    edge   [draw=edgecol,bend left, below]           node [rotate=45, xshift=.0in, yshift=0.025in]{$\sigma_1\vert 0.1$} (B)        (B)    edge   [draw=edgecol,bend left,above]           node[rotate=45, xshift=0in, yshift=-.05in] {$\sigma_0\vert 0.1$} (D);
\end{scope}
}

\begin{scope}[scale=1, shift={(0,0)},font=\bf \fontsize{8}{8}\selectfont]
 \pgfplotsset{every axis legend/.append style={
at={(0.265,0.625)},
anchor=south}}
\begin{loglogaxis}[ legend cell align=left,legend style={draw=gray,font= \fontsize{8}{8}\selectfont},
axis line style={black, opacity=0.5,  thick, rounded corners=0pt},
axis on top=true, 
scale=1,grid style={dashed, gray!30},
enlargelimits=false, 
width=3.25in, 
height=2.75in,     
semithick,grid,
 axis background/.style={
shade,top color=gray!30,bottom color=white},
xlabel={Symbol Length [No. of symbols]},xlabel style={yshift=.051in},
yticklabel style={xshift=.025in},
ylabel={Self-annihilation Error},
ylabel style={yshift=-.1in},
    scaled x ticks = false,
      x tick label style={/pgf/number format/fixed,
      /pgf/number format/1000 sep = \thinspace 
      }
  ]; 
\input{Figures/conv1.txt}
\input{Figures/conv2.txt}
\draw [dashed,black, very thick] (axis cs:100,0.085) -- (axis cs:9000,0.0085);
\end{loglogaxis}
\node[inner sep=65pt] at (4.85,2.5) (NgoneT1) {\usebox{\Gtwostate}};
\node[inner sep=0pt] at (1,2) (NgoneT2) {\usebox{\Gfourstate}};
\node[inner sep=2pt, draw=black, fill=pur] at (2.5,4.5) (NgoneT3) {$\sim O(n^{-\frac{1}{2}}) $};
\node[inner sep=2pt, draw=DodgerBlue2, fill=pur, font= \fontsize{8}{8}\selectfont] at (5.1,4.7) (NgoneT4) {\color{DodgerBlue3}\txt{Smaller \# of states\\But slower \\convergence}};
\draw[->,>=stealth',shorten >=1pt,  thick, drop shadow] (NgoneT3) -- (3,2.35);
\draw[->,>=stealth',shorten >=1pt,  thick, drop shadow] (NgoneT4) -- (4.75+.75,3.15+0);
\end{scope}
%
%
\end{tikzpicture}
\vspace{-35pt} 

\captionN{\textbf{ Convergence rate of the self-annihilation error} Shown to scale as $O(1/\sqrt{n})$ as dictated by the Central Limit Theorem. The convergence rates do not depend directly on the descriptional complexity of the generating processes; note that the data from the two state process has a slower convergence rate compared to that from the four state process. As discussed in the section on computational complexity, the convergence   rate scales as  $O(\sqrt{\beta n})$ where $\beta$ is the expected shortening of the input 
 stream due to the selective erasure of the symbols in the different steps of the annihilation process.
 We establish in Proposition~\ref{propbeta} that  if $p_i$ is the occurrence probability of the symbol $\sigma_i$ in the stream $s''$, then we have: $\beta = (\vert \Sigma \vert -1) !\prod_i p_i $.
}\label{fig4C}
\end{figure}
\begin{prop}[Causality Claim 2]\label{prophiddenmodeltwo}
 Given a stream $s$, and denoting the hidden generator for $s$ as $G_s$, and the zero model as $W$, we have:
\cgather[5pt]{
\lim_{\vert s \vert \rightarrow \infty} \vert \Theta(G_s,W) -\hat{\zeta}(s,\ell) \vert \leqq \epsilon 
 }
where the partial estimator is evaluated upto length $\displaystyle\ell = \frac{\ln\frac{1}{\epsilon}}{\ln \vert \Sigma \vert} $
\end{prop}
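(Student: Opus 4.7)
The plan is to bound the target quantity by a triangle inequality split into two pieces: the known convergence of the full estimator $\zeta(s)$ to $\Theta(G_s,W)$, and a deterministic truncation tail that measures the difference between $\zeta(s)$ and its partial version $\hat{\zeta}(s,\ell)$. Concretely, I would write
\begin{gather*}
\big\vert \Theta(G_s,W) - \hat{\zeta}(s,\ell) \big\vert \leqq \big\vert \Theta(G_s,W) - \zeta(s) \big\vert + \big\vert \zeta(s) - \hat{\zeta}(s,\ell) \big\vert
\end{gather*}
and dispose of the first term using Proposition~\ref{prophiddenmodel}, which guarantees that it vanishes almost surely as $\vert s \vert \rightarrow \infty$. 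The entire problem therefore reduces to controlling the truncation tail uniformly in $s$.

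Next I would bound the tail directly from the definitions. Since $\phi^s(x)$ and $\mathcal{U}_\Sigma$ are both probability vectors on $\Sigma$, every coordinate of their difference lies in $[-1,1]$, so $\vert\vert \phi^s(x)-\mathcal{U}_\Sigma \vert\vert_\infty \leqq 1$ (a sharper bound of $(\vert\Sigma\vert-1)/\vert\Sigma\vert$ is available but not needed). Grouping the strings by length and using that there are exactly $\vert \Sigma \vert^k$ strings of length $k$, I would estimate
\begin{gather*}
\vert \zeta(s)-\hat{\zeta}(s,\ell) \vert \leqq \frac{\vert \Sigma \vert -1}{\vert \Sigma\vert}\sum_{k=\ell+1}^{\infty} \frac{\vert \Sigma \vert^k}{\vert \Sigma \vert^{2k}} = \frac{\vert \Sigma \vert -1}{\vert \Sigma\vert}\sum_{k=\ell+1}^{\infty} \frac{1}{\vert \Sigma \vert^{k}}.
\end{gather*}
Summing the geometric series yields $\vert \zeta(s)-\hat{\zeta}(s,\ell) \vert \leqq \vert \Sigma \vert^{-(\ell+1)} \leqq \vert \Sigma \vert^{-\ell}$.

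Finally, I would substitute the chosen truncation depth $\ell = \ln(1/\epsilon)/\ln\vert\Sigma\vert$, which gives $\vert\Sigma\vert^{\ell} = 1/\epsilon$ and hence bounds the tail by $\epsilon$. Combined with the almost-sure vanishing of the first term, this delivers the claimed inequality in the limit $\vert s \vert \rightarrow \infty$.

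I do not anticipate any serious obstacles here: the argument is essentially a clean geometric-series tail bound stacked on top of Proposition~\ref{prophiddenmodel}. The only mildly delicate point is making sure the crude bound $\vert\vert \phi^s(x)-\mathcal{U}_\Sigma \vert\vert_\infty \leqq 1$ is used uniformly in $s$, so that the truncation error is a deterministic function of $\ell$ and $\vert\Sigma\vert$ and therefore commutes with the limit $\vert s\vert \to \infty$; that is what justifies taking the limit inside the triangle inequality without any additional probabilistic control on the tail terms.
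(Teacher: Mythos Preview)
Your proposal is correct and follows essentially the same approach as the paper: a triangle-inequality split into $\vert\Theta(G_s,W)-\zeta(s)\vert$ plus the truncation tail, with the first term handled by Proposition~\ref{prophiddenmodel} and the second bounded by the geometric series $\sum_{k>\ell}\vert\Sigma\vert^{-k}\leqq\vert\Sigma\vert^{-\ell}$. The paper's proof is terser (it writes the tail bound $1/\vert\Sigma\vert^\ell$ in one line without spelling out the geometric sum), but the argument is identical.
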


\begin{IEEEproof}Noting that we have:
 \mltlne{
\vert \Theta(G_s,W) -\hat{\zeta}(s,\ell) \vert \leqq \vert \Theta(G_s,W) -\zeta(s) \vert \\ +
\left \vert   \frac{\vert \Sigma \vert -1}{\vert \Sigma\vert}\sum_{x \in \Sigma^\star \setminus \Sigma^\ell} \left \{\frac{1}{\vert \Sigma \vert^{2\vert x \vert}} \vert \vert \phi^s(x) - \mathcal{U}_\Sigma  \vert \vert_\infty\right \}\right  \vert \notag\\
 \leqq \vert \Theta(G_s,W) -\zeta(s) \vert + \frac{1}{\vert \Sigma \vert^\ell}
}
the result follows using Proposition~\ref{prophiddenmodel}.
\end{IEEEproof}

\section{Quantization Techniques}\label{sec-quant}

Information annihilation operates on symbolic sequences. Thus, we need to specify a quantization scheme to map possibly continuous-valued data streams to symbolic sequences. This is accomplished by the choice of a symbol alphabet, where each letter in the alphabet denotes a slice of the data range. 
Given a particular quantization scheme, we map each continuous-valued observation to the symbol representing the slice of the data range to which the observation belongs. Thus any chosen quantization scheme incurs error, which can be made  small by using a fine quantization, $i.e.$, a large alphabet.

However, the length of the observed data limits the size of the alphabet that we can use. This is a direct consequence of the fact that the \textit{annihilation efficiency} falls rapidly with the alphabet size (See Proposition~\ref{propbeta}, and Fig.~\ref{figS5}). Thus, if $s$ is the input stream, $s'$ is obtained via stream inversion from $s$,    and $s''$ is the output from stream summation of $s$ and $s'$, then the expected ratio of the lengths $\vert s'' \vert / \vert s \vert $ falls rapidly as the alphabet size $\vert \Sigma \vert$ is increased, making the estimation of the deviation of $s''$ from FWN more and more difficult. Since the convergence rate of the self-annihilation error scales as $1/\sqrt{\beta \vert s \vert}$, it follows that the self-annihilation error increases rapidly with finer quantization (See Fig.~\ref{figS1} for illustration on the EEG dataset). 

\subsection{Desired Properties of Quantization Schemes}
It follows that a good quantization scheme is defined by the following properties:
\begin{enumerate}
\item The frequency of the rarest symbol in the quantized data streams are too small. This is to  ensure that  symbols are represented faithfully according to its generation probability from each state in the hidden model; too few occurrences of a particular symbol may represent statistical fluctuations rather than the generation probabilities.
\item The average self-annihilation error for the observed data streams is small, $i.e.$, if $\epsilon_{ii}$ is the self-annihilation error for the observed data stream $s_i$, then we require that $\displaystyle \frac{1}{T}\sum_{i=1}^T \epsilon_{ii}$ is small (where $T$ is the total number of observed data streams).
\item The average discrimination between data-streams is high, $i.e.$, if for two streams $s_i,s_j$, the similarity computed by information annihilation is $\epsilon_{ij}$, then we require that $\displaystyle \frac{1}{T(T-1)}\sum_{i=1}^T\sum_{\begin{subarray}{c}
j=1\\ i\neq j                                     
                                    \end{subarray}
}^T\epsilon_{ij}$
 is large (where $T$ is the total number of observed data streams).
\end{enumerate}

One approach to choosing a quantization that satisfies the stated properties is the following: We restrict ourselves to maximum-entropy quantizations, $i.e.$,  schemes in which each symbol occurs with the same frequency in the data set. In Fig.~\ref{figS1} plates (a)-(c), we show three such maximum-entropy schemes for the EEG-dataset. The alphabet size is increased from $2$ to $4$, and we choose the slices of the data-range such that each slice contains an approximately equal number of data points. For example, in plate (c) of Fig.~\ref{figS1}, each of the four slices contains approximately $25\%$ of the total number of observations in the data set.
Such maximum-entropy schemes guarantee that property $1$ (See above) is satisfied. For the remaining properties, we plot the mean self-annihilation error and the mean discrimination, for each alphabet size. As expected, we see that finer alphabets lead to high average  discrimination, while at the same time incur high average self-annihilation errors (See Fig.~\ref{figS1}(d)). The ratio of the two quantities is more useful, and in Fig.~\ref{figS1}(e), we note that the trinary maximum-entropy quantization minimizes this ratio; implying high discrimination and low self-annihilation error.
\begin{figure}[t]
%
\centering
\input{Figures/figSIfour.tex}
\vspace{-20pt}

\captionN{Plates(a)-(c): Maximum-entropy quantization schemes for the EEG dataset with alphabet sizes $2$, $3$, and $4$ respectively ($1.5s$ of a single time series shown for clarity). As explained in the text, ``maximum-entropy'' in this context refers to the fact that each data slice contains approximately an equal number of observations. Plate (d) shows how the average self-annihilation error, as well as the average discrimination between different data streams,  increases exponentially with alphabet size. Plate (e) shows that the ratio of the average self-annihilation error to average discrimination has a minimum at an alphabet size $3$. }\label{figS1}
\end{figure}

We note that if our chosen quantization is too coarse, then distinct processes may evaluate to be similar. However, too coarse an alphabet produces errors in only one direction; identical processes will still evaluate to be identical (or nearly so), provided the streams pass the self-annihilation test.

\section{Comparison Against Simple Statistical Approches to Similarity}\label{sec-addl}
\subsection{Smashing \& Mutual Information}\label{sec-addla}
Smashing two finite quantized  data streams manipulates the statistical information 
contained in them. Notions of information-theoretic interdependence of sequential data have been investigated
in the literature extensively; one such concept is \textit{mutual information} between streams. For discrete random variables, mutual information quantifies 
the  amount  of  information  one  random  variable  contains  about  another.

\begin{figure}[t]
\centering
\def\SCALE{.75}
\tikzset{%
fshadow/.style={      preaction={
         fill=black,opacity=.1,
         path fading=circle with fuzzy edge 20 percent,
         transform canvas={xshift=1mm,yshift=-1mm}
       }}
}
\mnp{1.7in}{\centering
\begin{tikzpicture}[scale=\SCALE,->,>=stealth',shorten >=1pt,auto,node distance=1.2cm,
                    semithick,font=\bf \fontsize{8}{8}\selectfont]
  \tikzstyle{every state}=[fill=IndianRed1,draw=black,text=black,minimum size=3, text width=2,scale=\SCALE,,fshadow]
  \node[state,  thick] (A)         [align=center]           {$\mspace{-7mu}q_1$};
  \path  (A) edge [draw=Red4,in=120,out=60,loop,above,   thick] node {$\sigma_0\vert 0.7$} (A)
        (A) edge [draw=Red4,in=-60,out=-120,loop,below,  thick] node {$\sigma_1\vert 0.3$} (A);
\node [draw=black, path fading=east,fill=IndianRed1!50,font=\sffamily \fontsize{6}{6}\selectfont,label=below:{\bf \fontsize{8}{8}\selectfont Stream A1}] (A1) at ([xshift=1in, yshift=.475in]A.east) [align=left] {\sffamily
110001100101\\
001101000100\\
100100001100\\
011000011100\\
010010100010\\
100101000110\\
10001100$\cdots$};        
\node [draw=black, path fading=east,fill=IndianRed1!50,font=\sffamily \fontsize{6}{6}\selectfont,,label=below:{\bf \fontsize{8}{8}\selectfont Stream A2}] (A2) at ([xshift=1in, yshift=-.475in]A.east) [align=left] {\sffamily
010001111100\\
000100100000\\
100010100111\\
010001011000\\
000001000000\\
100100100001\\
10001110$\cdots$}; 
        \node [anchor=north,font=\bf \fontsize{8}{8}\selectfont] at ([yshift=-1in, xshift=1in]A.south) {(a) Model A \& sample streams};
        \node [anchor=north,font=\bf \fontsize{8}{8}\selectfont] at ([yshift=-.5in]A.south) { Model A};
\draw [->,>=latex, dashed, thick ] (A.east) -- (A1.west);        
\draw [->, >=latex, dashed, thick ] (A.east) -- (A2.west);        
\end{tikzpicture}
}
\mnp{1.7in}{\centering
\begin{tikzpicture}[scale=\SCALE,->,>=stealth',shorten >=1pt,auto,node distance=1.2cm,
                    semithick,font=\bf \fontsize{8}{8}\selectfont]
  \tikzstyle{every state}=[fill=DodgerBlue1!50,draw=black,text=black,minimum size=3, text width=2,scale=\SCALE,,fshadow]
  \node[state,  thick] (A)         [align=center]           {$\mspace{-7mu}q_1$};
  \path  (A) edge [draw=DodgerBlue2,in=120,out=60,loop,above,   thick] node {$\sigma_0\vert 0.9$} (A)
        (A) edge [draw=DodgerBlue2,in=-60,out=-120,loop,below,  thick] node {$\sigma_1\vert 0.1$} (A);
\node [draw=black, path fading=east, ,fill=lightgray,text=DodgerBlue2,font= \fontsize{6}{6}\selectfont,label=below:{\bf \fontsize{8}{8}\selectfont Stream B1}] (B1) at ([xshift=1in, yshift=.475in]A.east) [align=left] {\sffamily
100000100000\\
000000010100\\
000000001110\\
000000000011\\
000000010000\\
000000000010\\ 
00000000$\cdots$};          
\node [draw=black, path fading=east, ,fill=lightgray,text=DodgerBlue2,font= \fontsize{6}{6}\selectfont,label=below:{\bf \fontsize{8}{8}\selectfont Stream B2}] (B2) at ([xshift=1in, yshift=-.475in]A.east) [align=left] {\sffamily
000000000000\\
101100000000\\
000100001010\\
000000101100\\
001000000000\\
000100000010\\
00000100$\cdots$};        
        \node [anchor=north,font=\bf \fontsize{8}{8}\selectfont] at ([yshift=-1in, xshift=1in]A.south) {(a) Model B \& sample streams};
        \node [anchor=north,font=\bf \fontsize{8}{8}\selectfont] at ([yshift=-.5in]A.south) { Model B};
\draw [->,>=latex, dashed, thick ] (A.east) -- (B1.west);        
\draw [->, >=latex, dashed, thick ] (A.east) -- (B2.west);        
\end{tikzpicture}
}
\vspace{0pt}

\captionN{Two distinct  PFSA models (alphabet $\Sigma=\{\sigma_0,\sigma_1\}$) and initial sections of generated sample streams ($\sigma_0$ shown as $0$, and $\sigma_1$ shown as $1$)): While streams generated in independent runs of the same model have near zero mutual information, they are correctly evaluated as having similar generators via data smashing (See Tables~\ref{tabmi1} and \ref{tabmi2}). Also, runs from the different models also have near-zero mutual information, while smashing them correctly reveals a significant difference in the  generators. }\label{figmi}
\end{figure}
Formally, let  $X,Y$  be   discrete  random  variables
with  alphabets  $\Sigma_X, \Sigma_Y$ and  probability  mass  function $ p(x)  =  Prob\{X  =  x :   x  \in \Sigma_X\},  p(y)  =  Prob\{Y  =  y :   y  \in \Sigma_Y\} $.  Also, considering $(X,Y)$ as a single vector vector-valued random variable, we have the mass function $p(x,y)$.  Then, mutual information between the discrete random variables $X,Y$ is defined as:
\cgather{
I(X,Y) = \sum_{y \in \Sigma_Y}\sum_{x \in \Sigma_X} p(x,y) \log \left (  \frac{p(x,y)}{p(x) p(y)}  \right )
} 
Mutual information is related to the notion of entropy: the  entropy  of  a  random  variable  is  a measure  of  the  amount  of  information  required 
on  the  average  to  describe  the  random  variable; while mutual information is the amount of information one variable contains about the other; or, more precisely, the degree to which the uncertainty in one can be reduced by knowing about the other.

Needless to say, if two data streams $X,Y$  are generated independently from the same underlying generator, then we have:
\cgather{
I(X,Y) = \sum_{y \in \Sigma_Y}\sum_{x \in \Sigma_X} p(x,y) \log \left (  \frac{p(x,y)}{p(x) p(y)}  \right ) \notag \\ = \sum_{y \in \Sigma_Y}\sum_{x \in \Sigma_X} p(x)p(y) \log \left (  \frac{p(x)p(y)}{p(x) p(y)}  \right ) =0
}

Thus, sharing a common generative process does not imply a high mutual information; and conversely, high mutual information is indicative of some sort of statistical synchronization between the generative processes; which may be very different themselves. 

Thus, the concept of mutual information and data smashing is "orthogonal" in the sense that while we measure statistical dependence for computing  the former, the streams need to be  statistically independent (or very nearly so) for the latter to work. Note that in the computation of the anti-stream, we generated streams that approximate  independent copies of the input stream, which are then manipulated to yield the inverse. The algorithm requires this independence; in absence of which Proposition~\ref{propstreaminversion} falls apart.

We can illustrate these points by a simple example (See Fig.~\ref{figmi}). We consider two simple one-state PFSAs (A and B), with different event probabilities, and generated $10000$ bit  streams $A1,A2$ and $B1,B2$, Note that simply ``running" a given  PFSA twice, $i.e.$ choosing a start state randomly, and generating symbols in accordance with the event probabilities, implies that the generations are  independent. We smash the streams $A1,A2,B1,B2$ against each other, and compute the pairwise distance matrix shown in Table~\ref{tabmi1}. Note that streams $(A1,A2)$ annihilate nearly perfectly, as do the stream $(B1,B2)$ ; while streams $(A1,B1)$, $(A1,B2)$, $(A2,B1)$ and $(A2,B2)$ fail to do so. This results in  clearly clustered values in Table~\ref{tabmi1}, which correctly indicate that streams $(A1,A2)$ and  $(B1,B2)$ have identical generators, which differ significantly  from each other.

\begin{table}[t]
\centering 
\newcommand{\VSP}{\rule{0pt}{2.2ex}}
\newcommand{\VsP}{\rule{0pt}{1.2ex}}
\def\LOGOWIDTH{0.55in}
\def\PCOL{gray}
\def\QWIDT{1.1in} 
\def\SCALE{.85}
\def\HGT{.75in}
\def\WDT{1.5in}
\setlength{\arrayrulewidth}{1pt}
\setlength{\extrarowheight}{1pt}
  \fontsize{7}{8}\selectfont 
\captionN{Distance matrix obtained by smashing streams from models A and B.  {\bf   \itshape (Note clear clusters corresponding to runs from the same model)}}\label{tabmi1}
\begin{tabular}{||c||c|c|c|c||}\hline
\bf   DATA SMASHING & \VSP A1 & A2  & B1 & B2\\\hline\hline
\VSP A1 & \bf   \color{Red3} 0.005  & \bf    \color{Red3} 0.019 &  0.264 &0.269 \\\hline
\VSP A2 & \bf   \color{Red3} 0.021 & \bf    \color{Red3} 0.006 &  0.246  & 0.253
 \\\hline
\VSP B1 & 0.262 & 0.251 & \bf    \color{DodgerBlue2} 0.005 & \bf   \color{DodgerBlue2} 0.009 
 \\\hline
\VSP B2 & 0.264 & 0.254 & \bf   \color{DodgerBlue2} 0.011&  \bf   \color{DodgerBlue2} 0.006
 \\\hline
\end{tabular}
\end{table}

\begin{table}[t]
\centering 
\newcommand{\VSP}{\rule{0pt}{2.2ex}}
\newcommand{\VsP}{\rule{0pt}{1.2ex}}
\def\LOGOWIDTH{0.55in}
\def\PCOL{gray}
\def\QWIDT{1.1in} 
\def\SCALE{.85}
\def\HGT{.75in}
\def\WDT{1.5in}
\setlength{\arrayrulewidth}{1pt}
\setlength{\extrarowheight}{1pt}
  \fontsize{7}{8}\selectfont 
\captionN{Pairwise mutual information of streams from models A and B. {\bf   \itshape (No indication of generative difference)}}\label{tabmi2}
\begin{tabular}{||c||c|c|c|c||}\hline
\bf   Mutual Inf. & \VSP A1 & A2  & B1 & B2\\\hline\hline
\VSP A1 &  0.89 &   0.00000476 &  0.00017155 &  0.00002713
 \\\hline
\VSP A2 &   0.0000047 &  0.87  & 0.00001186  & 0.00003927

 \\\hline
\VSP B1 &    0.00017155  & 0.00001186  &0.48&   0.00000996

 \\\hline
\VSP B2 &    0.00002713 & 0.00003927 &   0.00000996  & 0.47

 \\\hline
\end{tabular}
\end{table}
\begin{figure}[t]
\centering
\def\datafileP{Figures/Dr/X1.dat}
\def\datafileQ{Figures/Dr/X2.dat}
\def\datafileR{Figures/Dr/X3.dat}
%
\def\WDT{1.75in}
\def\HGT{.35in}
\def\PCOL{Red4}
\begin{tikzpicture}[font=\bf\sffamily\fontsize{8}{10}\selectfont]
\begin{groupplot}[ grid,
 axis background/.style={
shade,top color=gray!40,bottom color=white}, 
ticklabel style={font=\bf \sffamily},
enlargelimits=false,
 axis on top, 
 width=\WDT, 
 height=\HGT,  
 scale only axis, 
 scaled x ticks=false,
 scaled y ticks=true,
ticklabel style={font=\bf \sffamily \footnotesize \color{black}},
 ticklabel style={/pgf/number format/fixed},
group style={
group name=my plots,
group size=1 by 3,horizontal sep=.1in , vertical sep=.35in,
xlabels at=edge bottom,
ylabels at=edge left,
yticklabels at=edge left,
}, title style={yshift=-.1in},
ylabel={}, ylabel style={yshift=-0.2in},xlabel={Simulation time (s)}]
  \pgfkeys{/pgf/number format/.cd,sci,precision=3}
  
\nextgroupplot[ultra thick, gray, 
title={(a) $r=0.209$ },
xlabel={},
]
\addplot[thin, ,no markers   ]
  table[x expr=(\coordindex+1),y expr=(\thisrowno{3})] {\datafileP};
\nextgroupplot[ultra thick, gray, 
title={(b) $r=0.257$ }, 
xlabel={},ylabel={$\Delta X(t)$},ylabel style={yshift=0.15in},
]
\addplot[thin, ,no markers   ]
  table[x expr=(\coordindex+1),y expr=(\thisrowno{3})] {\datafileQ};
\nextgroupplot[ultra thick, gray, 
title={(c) $r=0.299$ },,
]
\addplot[thin, ,no markers   ]
  table[x expr=(\coordindex+1),y expr=(\thisrowno{3})] {\datafileR};

\end{groupplot}
\end{tikzpicture}
\captionN{\textbf{Model system.} The Lotka-Volterra system of reactions is a stochastic model that captures a simple two-species predator-prey dynamical system. We parameterize the system with $r$ (the propensity of the predation reaction), and generate time series data using Gillespie's stochastic simulation algorithm. Plates (a0-(c) show prey numbers varying with time in three different runs with different $r$ values.}\label{figsi-x0} 
\end{figure}
\begin{figure*}[t]
\begin{center}
\input{Figures/sc_figs10.tex}
\end{center}
\vspace{-10pt}

\captionN{We compute distances between the simulated time series for population numbers of the species $X$. First column is obtained via pairwise smashing. The second column is obtained as the absolute difference of means, and the third column is obtained as the absolute difference of variances. The second row illustrates the effect of normalizing the data to make each time series have zero mean. The third row illustrates the effect of normalization to make each series have zero mean and unit variance. Data smashing yields clear clusters, which are preserved through the normalization; whereas the simple statistical measures do not.  }\label{figsi-x1}
\end{figure*}

Pairwise computation of mutual information between the streams $A1,A2,B1,B2$ is not expected to reveal this generative structure. Since the streams are generated independently, the mutual information between any two distinct streams would be zero (or nearly so for finitely generated streams). This is illustrated in Table~\ref{tabmi2}. Note that while the diagonal terms (which represent the self-information or entropy) are high; all off-diagonal terms are very nearly zero, and no clusters are discernible.

Thus, we can summarize:
\begin{itemize}
\item Mutual information measures the degree of statistical dependence between data streams; data smashing computes the distance between the generative processes, provided the data streams are independent or nearly so.
\item We proved that maximizing entropy of a single stream maximizes the annihilation efficiency (See Proposition~\ref{propbeta}, and its corollary)
\item Thus, data smashing is conceptually orthogonal to the notion of mutual information
\end{itemize}
\subsection{Smashing Vs Simple Statistical Measures}\label{sec-addlb}
The pairwise distances computed via  data smashing is clearly a function of the 
statistical information buried in the streams. However, it might not be easy to find the right statistical tool to mine this information for a particular problem. In this section we provide an example of a dynamical system, in which data smashing is able to recover meaningful nontrivial structure, which is missed by simple statistical measures.

We  consider  the Lotka-Volterra system of stochastic ``reactions'', modeling a simple closed eco-system of two species, one of which preys on the other. While deterministic differential equation models for this system do exist (and is widely studied), a more realistic model is this set of three simple reactions (See Fig.~\ref{figsi-x0}, plate A), primarily due to its ability to model the stochastic component. The generally accepted method to solve such systems, to produce the time traces of population numbers (See Fig.~\ref{figsi-x0}, plate B), is via the Gillespie's stochastic simulation algorithm. (Note: While the preceding theoretical development assumes ergodicity and stationarity, the theoretical considerations fall apart gracefully as we deviate from these idealizations).

In our simple model, as shown below:
\calign{
X &\xrightarrow{1.0} 2X  \tag{R1}\\
X +Y &\xrightarrow{0.005} 2Y \tag{R2}\\
Y &\xrightarrow{\phantom{0}\textrm{\large r}\phantom{0}} \varnothing \tag{R3}
}
 we consider  the propensity of one of the reactions to be  parameterized by $r$, which ranges  between $0.2$ to $0.3$ in steps of $0.001$. For each set of reaction parameters, we simulate the system $1000$ times for a maximum of $200s$ using Gillespie's algorithm. In each simulation run, we initialize the system with $X=128, Y=256$. We assume that we can make observations every $0.1s$ from the simulated dynamics.
A few sample paths for the change in the number of prey with time are shown in Fig.~\ref{figsi-x1}. 

Note that the probability of each reaction at any point in time is proportional to the 
number of combinatorial ways that particular reaction can transpire, as well as the propensity of the reaction itself. This combinatorial number is a function of the current population count of each species; and hence the reaction probabilities are strongly dependent on the current state vector. Since the simulation terminates when any one species becomes extinct, we cannot assume stationary behavior. Also, the initial state at least partially dictates the length of the time the ecosystem survives, implying non-ergodicity. Note that removing the restriction of a strictly positive and integer-valued population count, might result in a more well-behaved system.

Given our parameterization, we actually have $101$ distinct systems with different sets of parameters, and 
for the system with index $i$, we have set the propensity of the third reaction as:
\cgather{
r= 0.2 + 0.001 i
}
%
\begin{figure*}[t]
\begin{center}
\def\WDT{4in}
\def\HGT{2in
}
\def\datafileAb{Figures/Dr/smash-3}
\def\dataclsfile{Figures/Dr/CLS-}

\begin{tikzpicture}[font=\bf \sffamily \fontsize{8}{9}\selectfont]

\node[] (A) {
\begin{tikzpicture}
  \begin{axis}[title={ \bf \sffamily {\large A } Distinct dynamical features in parameterized Lotka-Volterra system }, 
    ,   
    axis background/.style={
      shade,top color=gray!35,bottom color=white}, 
    axis on top,
    grid, 
    width=\WDT, 
    height=\HGT, 
    scale only axis, 
ylabel style={align=center}, 
    ylabel={Minimum \# of predators present\\ at the end of $100$s of simulation},scaled ticks=false,
yticklabel style={/pgf/number format/.cd,fixed,precision=5},
legend style={inner sep=3pt,row sep=-1pt,draw=none, fill=CadetBlue4!50, opacity=0.5, text opacity=1, 
at={(1,0.95)},},cycle list name=exotic,
xlabel={system index $i$, where parameter $r=0.2 + 0.001 i$ }
]
\def\fileext{.dat}
\def\colors{{Red4,Green4,Blue4,teal,Tomato,cyan,Yellow2,Blue1,Red1,Green2}}
\foreach \i in {1,...,10}
{
    \def\datafile{\dataclsfile\i\fileext}
    \addplot+[,semithick,
    opacity=1 ] 
    table[x index={0}, y expr=((\thisrowno{1}))]
    {\datafile};
 
}

  \end{axis}
   \end{tikzpicture} 
};

\def\WDT{1.2in}
\def\HGT{1.2in}
\node[anchor=east] (B) at ([xshift=-.2in,yshift=.2in]A.east) {
\begin{tikzpicture}\begin{axis}[ title style={yshift=-.1in}, width=\WDT, height=\HGT,scale only axis,legend pos=south east,xlabel={ Delivery ratio },axis on top, axis background/.style={thick,
shade,top color=gray!0,bottom color=white}, ylabel={}, ylabel style={yshift=-0.2in},xlabel={Simulation time (s)},
    point meta min=-3.7, point meta max=-0.7,
    colormap={whiteblack}{rgb255=(0, 0, 0) rgb255=(27, 27, 27) rgb255=(54, 54, 54) rgb255=(82, 82, 82) rgb255=(109, 109, 109) rgb255=(136, 136, 136) rgb255=(163, 163, 163) rgb255=(190, 190, 190) rgb255=(206, 206, 206) rgb255=(221, 221, 221) rgb255=(236, 236, 236) rgb255=(239, 239, 239) rgb255=(242, 242, 242) rgb255=(245, 245, 245) rgb255=(248, 248, 248) rgb255=(250, 250, 250) rgb255=(251, 251, 251) rgb255=(251, 251, 251) rgb255=(251, 251, 251) rgb255=(251, 251, 251) rgb255=(251, 251, 251) rgb255=(251, 251, 251) rgb255=(251, 251, 251) rgb255=(251, 251, 251) rgb255=(251, 251, 251) rgb255=(251, 251, 251) rgb255=(252, 252, 252) rgb255=(252, 252, 252) rgb255=(252, 252, 252) rgb255=(252, 252, 252) rgb255=(252, 252, 252) rgb255=(252, 252, 252) rgb255=(252, 252, 252) rgb255=(252, 252, 252) rgb255=(252, 252, 252) rgb255=(252, 252, 252) rgb255=(252, 252, 252) rgb255=(253, 253, 253) rgb255=(253, 253, 253) rgb255=(253, 253, 253) rgb255=(253, 253, 253) rgb255=(253, 253, 253) rgb255=(253, 253, 253) rgb255=(253, 253, 253) rgb255=(253, 253, 253) rgb255=(253, 253, 253) rgb255=(253, 253, 253) rgb255=(253, 253, 253) rgb255=(254, 254, 254) rgb255=(254, 254, 254) rgb255=(254, 254, 254) rgb255=(254, 254, 254) rgb255=(254, 254, 254) rgb255=(254, 254, 254) rgb255=(254, 254, 254) rgb255=(254, 254, 254) rgb255=(254, 254, 254) rgb255=(254, 254, 254) rgb255=(255, 255, 255) rgb255=(255, 255, 255) rgb255=(255, 255, 255) rgb255=(255, 255, 255) rgb255=(255, 255, 255) rgb255=(255, 255, 255) } ,
    colorbar style={
   y tick label style={rotate=30, yshift=-.1in}, tick style={gray, very thick},
   scale only axis, 
   yticklabel=\pgfmathparse{10^\tick}\pgfmathprintnumber\pgfmathresult,
yshift=0in,
xshift=-.in,
width=.1in,
font=\sffamily\fontsize{7}{7}\selectfont,
    },ultra thick, gray, title={{ \bf \sffamily {\large \color{black} B }} Smashing Distances },ymin=1, ymax=101,xmin=1,xmax=101,xlabel={},      y dir=reverse, yticklabel pos=right
]
 \addplot graphics [ymin=1, ymax=101,xmin=1,xmax=101 ] {\datafileAb};
\draw[Red1, semithick] (axis cs:1,1) rectangle (axis cs:5,5);
\draw[Red1, semithick] (axis cs:6,6) rectangle (axis cs:8,8);
\draw[Red1, semithick] (axis cs:9,9) rectangle (axis cs:13,13);
\draw[Red1, semithick] (axis cs:14,14) rectangle (axis cs:19,19);
\draw[Red1, semithick] (axis cs:20,20) rectangle (axis cs:31,31);
\draw[Red1, semithick] (axis cs:32,32) rectangle (axis cs:45,45);
\draw[Red1, semithick] (axis cs:46,46) rectangle (axis cs:62,62);
\draw[Red1, semithick] (axis cs:63,63) rectangle (axis cs:70,70);
\draw[Red1, semithick] (axis cs:71,71) rectangle (axis cs:91,91);
\draw[Red1, semithick] (axis cs:92,92) rectangle (axis cs:101,101);

\end{axis}\end{tikzpicture} 
};

\draw [gray, opacity=.85] ([xshift=-.1in,yshift=-.1in]B.center) -- ([yshift=-.65in, xshift=.35in]A.center);
\draw [gray, opacity=.85] ([xshift=-.3in,yshift=0.1in]B.center) -- ([yshift=-.45in, xshift=-.15in]A.center);
\draw [gray, opacity=.85] ([xshift=-.4in,yshift=0.25in]B.center) -- ([yshift=-.15in, xshift=-.65in]A.center);

\end{tikzpicture} 
\end{center}
\vspace{-10pt}

\captionN{\textbf{Mapping clusters recovered via data smashing to a meaningful dynamical feature.} Plotting the minimum number of predators remaining in the system after a fixed simulation time of $100s$ (minimization carried out over the $1000$ simulation runs of each system), illustrates that the clusters almost perfect correspond to monotonic domains of this function.  }\label{figsi-x3}
\end{figure*}
Since the third reaction models predator death, we expect that increasing its propensity will make the predator degradation more probable. Thus, we can clearly expect
smaller number of predators and larger number of prey on average, as $r$ is increased.
However, a truly interesting structure would be uncovered if the behaviors exhibit some sort of clustering; as opposed to simply a monotonic dependence on $r$. We aanalyzed our set of $r$-parameterized dynamical systems as follows:
\begin{enumerate}
\item We concatenated all $1000$ time series for species $X$ generated by simulating each system. Thus, the $i^{th}$ system generates the concatenated series $s_i$ for species $X$.
\item Next we generated $s'_i$ from $s_i$ by taking one step differences from $s_i$, $i.e.$ $s'_i$ is the time series of relative updates for the population of species $X$.
\item We mapped each  sequential data series $s'_i$, to a symbol stream using a binary partition function, which maps negative entries in the series to symbol $0$, and positive entries to symbol $1$. 
\item We collided the symbolic streams pairwise, and compute the smashing distance matrix $H$. Thus, the $ij^{th}$ entry in $H$ is the deviation of the sum of $s'_i$ and an inverted copy of $s'_j$ from flat white noise. The result is shown in plate A(i) of Fig.~\ref{figsi-x1}.
\item We also generated the pairwise absolute differences of means, and variances. In each of these cases, the $ij^{th}$ entry of the corresponding matrix is the absolute difference between the corresponding statistical measure between the data series $s'_i$ and $s'_j$ (See plates A(ii-iii) in Fig.~\ref{figsi-x1}).
\end{enumerate}

{ \itshape Notably, the smashing matrix in  plate A(i) of Fig.~\ref{figsi-x1} shows clear clusters, whereas the matrices corresponding to mean and variance show trivial monotonic dependence on the parameter $r$. }
To ascertain if the clustering obtained via data smashing is dependent on the mean or variance of the input data streams, we redid the analysis, after: 
\begin{enumerate}
\item Normalization to zero mean signals prior to symbolization
\item Normalization to  zero mean and unit variance signals prior to symbolization
\end{enumerate}
In the first case, zeroing the mean makes the clusters appear more prominently (See plate B(i) of Fig.~\ref{figsi-x1}), while additionally normalizing the variance has little effect (See  plate C(i) of Fig.~\ref{figsi-x1}).
None of these changes allow the simple statistical measures to recover the clear clusters obtained via data smashing.
%
%
The Lotka-Volterra system has a rich set of dynamical regimes, and it would not be surprising if such measures fail to capture this complexity. To that effect, we plotted the minimum number of predators after $100s$ of simulation (minimum calculated over the $1000$ simulation runs carried out for each parameter set, as discussed before). The result is shown in Fig.~\ref{figsi-x3}. 

{\bf \itshape The clusters identified via data smashing is now seen to correspond almost perfectly for each monotonic domain of this function.} 
This  illustrates that data smashing finds meaningful categorization, which simple statistical tools may miss. 
The differences discovered via  smashing is obviously a function of the statistical structure of the observed data. However, the preceding example illustrates that  it may not be easy to find the right statistical tool. Data smashing approach alleviates this challenge to a considerable degree.
\section{Conclusion}\label{sec-conc}
We introduced data smashing to measure causal similarity between series of sequential observations. We demonstrated that our insight allows  feature-less model-free classification in diverse applications, without the need for training, or expert tuned heuristics. Non-equal length of time-series, missing data, and  possible phase mismatches are of no consequence.


While better classification algorithms may exist for  specific problem domains,  such algorithms are difficult to tune. The strength of data smashing lies in its ability to circumvent both the need for expert-defined heuristic features and expensive training;  eliminating key bottlenecks in contemporary big data challenges.
}
\bibliographystyle{IEEEtran}
\bibliography{metr,ref1}
\end{document}